\renewcommand{\baselinestretch}{1.6}
\newcommand{\linsps}{\renewcommand{\baselinestretch}{1.6}}
\newcommand{\Norm}[1]{\left\Vert #1\right\Vert}			
\newcommand{\vect}[1]{\boldsymbol{#1}}				
\newcommand{\tp}{{\top}}						
\newcommand{\cov}{{\rm Cov}}						
\newcommand{\var}{{\rm Var}}						
\newcommand{\diff}{\mathrm{d}}					
\newcommand{\argmin}[1]{\mathop{\rm{argmin}}_{#1}}	
\newcommand*\widebar[1]{\,						
	\hbox{%
   		\kern0.01em%
		\vbox{%
			\hrule height 0.5pt		
			\kern0.33ex			
			\hbox{%
				\kern-0.1em		
				\ensuremath{#1}%
				\kern-0.1em		
			}%
		}%
	}%
\,}%
\let\hat\widehat
\let\tilde\widetilde
\let\bar\widebar
\newcommand{\ie}{\mbox{\sl i.e.\;}}					
\newcommand{\mcF}{{\mathcal F}}
\newcommand{\mcG}{{\mathcal G}}
\newcommand{\mcN}{{\mathcal N}}					
\newcommand{\mcP}{{\mathcal P}}
\newcommand{\mcQ}{{\mathcal Q}}
\newcommand{\mcR}{{\mathcal R}}
\newcommand{\mcS}{{\mathcal S}}
\newcommand{\mcU}{{\mathcal U}}
\newcommand{\mbB}{{\mathbb B}}					
\newcommand{\mbE}{{\mathbb E}}					
\newcommand{\mbI}{{\mathbb I}}					
\newcommand{\mbP}{{\mathbb P}}					
\newcommand{\mbR}{{\mathbb R}}					
\newcommand{\mbS}{{\mathbb S}}					
\newcommand{\mfN}{{\mathfrak N}}					
\DeclareMathAlphabet\EuScriptBF{U}{eus}{b}{n}
\newcommand{\supp}{\mathrm{supp}}	
\definecolor{DSgray}{cmyk}{0,1,0,0}
\definecolor{scolor}{cmyk}{0.5,2,0,0}
\newtheorem{lemma}{Lemma}
\newtheorem{theorem}{Theorem}
\newtheorem{corollary}[theorem]{Corollary}
\newtheorem{proposition}[theorem]{Proposition}
\newtheorem{remark}{Remark}
\newtheorem{condition}{Condition}
\begin{document}


\linsps

\title{\bf Online Estimation and Inference for Robust Policy Evaluation in Reinforcement Learning}
	 \author{Weidong Liu\thanks{School of Mathematical Sciences and MoE Key Lab of Artificial Intelligence, Shanghai Jiao Tong University}\quad Jiyuan Tu\thanks{School of Mathematics, Shanghai University of Finance and Economics}\quad Xi Chen\thanks{Stern School of Business, New York University. }\quad Yichen Zhang\thanks{Daniels School of Business, Purdue University}
}	
\date{}
\maketitle

\begin{abstract}
Reinforcement learning has emerged as one of the prominent topics attracting attention in modern statistical learning, with policy evaluation being a key component. Unlike the traditional machine learning literature on this topic, our work emphasizes statistical inference for the model parameters and value functions of reinforcement learning algorithms. While most existing analyses assume random rewards to follow standard distributions, we embrace the concept of robust statistics in reinforcement learning by simultaneously addressing issues of outlier contamination and heavy-tailed rewards within a unified framework. In this paper, we develop a fully online robust policy evaluation procedure, and establish the Bahadur-type representation of our estimator. Furthermore, we develop an online procedure to efficiently conduct statistical inference based on the asymptotic distribution. This paper connects robust statistics and statistical inference in reinforcement learning, offering a more versatile and reliable approach to online policy evaluation. Finally, we validate the efficacy of our algorithm through numerical experiments conducted in simulations and real-world reinforcement learning experiments.
\end{abstract}

\noindent
{\it Keywords}: Statistical inference; dependent samples; online policy evaluation; robust inference; Bahadur representation.

~\\


\section{Introduction}Reinforcement learning has offered immense success and remarkable breakthroughs in a variety of application domains, including autonomous driving, precision medicine, recommendation systems, and robotics (to name a few, e.g., \citealp{murphy2003optimal, kormushev2013reinforcement, mnih2015human, shi2018high}). From recommendation systems to mobile health (mHealth) intervention, reinforcement learning can be used to adaptively make personalized recommendations and optimize intervention strategies learned from retrospective behavioral and physiology data.  While the achievements of reinforcement learning algorithms in applications are undisputed, the reproducibility of its results and reliability is still in many ways nascent. 

Those recommendation and health applications enjoy great flexibility and affordability due to the development of reinforcement algorithms, despite calling for critical needs for a reliable and trustworthy uncertainty quantification for such implementation. The reliability of such implementations sometimes plays a life-threatening role in emerging applications. For example, in autonomous driving, it is critical to avoid deadly explorations based upon some uncertainty measures in the trial-and-error learning procedure. This substance also extends to other applications including precision medicine and autonomous robotics. From the statistical perspective, it is important to quantify the uncertainty of a point estimate with complementary hypothesis testing to reveal or justify the reliability of the learning procedure. 

Policy evaluation plays a cornerstone role in typical reinforcement learning (RL) algorithms such as Temporal Difference (TD) learning. As one of the most commonly adopted algorithms for policy evaluation in RL, TD learning provides an estimator of the value function iteratively with regard to a given policy based on samples from a Markov chain. In large-scale RL tasks where the state space is infinitely expansive, a typical procedure to provide a scalable yet efficient estimation of the value function is via linear function approximation. This procedure can be formulated in a linear stochastic approximation problem \citep{sutton.1988ml, tsitsiklis_vanroy.1997tac, sutton_etal.2009icml, ramprasad2022online}, which is designed to sequentially solve a deterministic equation $A\theta=b$ by a matrix-vector pair of a sequence of unbiased random observations of $(A_t,b_t)$ governed by an ergodic Markov chain. 

The earliest and most prototypical stochastic approximation algorithm is the Robbins-Monro algorithm introduced by \cite{robbins1951stochastic} for solving a root-finding problem, where the function is represented as an expected value, e.g., $\mbE[f(\theta)]=0$. The algorithm has generated profound interest in the field of stochastic optimization and machine learning to minimize a loss function using random samples. When referring to an optimization problem, its first-order condition can be represented as $\mbE[f(\theta)]=0$, and the corresponding Robbins-Monro algorithm is often referred to as first-order methods, or more widely known as stochastic gradient descent (SGD) in machine learning literature. 
It is well established in the literature that its averaged version \citep{ruppert1988efficient,polyak1992acceleration}, as an online first-order method, achieves optimal statistical efficiency when estimating the model parameters in statistical models, which apparently kills the interest in developing second-order methods that use additional information to help the convergence. 
That being said, it is often observed in practice that first-order algorithms entail significant accuracy loss on non-asymptotic convergence as well as severe instability in the choice of hyperparameters, specifically, the stepsizes (a.k.a. learning rate). In addition, the stepsize tuning further complicates the quantification of uncertainty associated with the algorithm output. Despite the known drawbacks above, first-order stochastic methods are historically favored in machine learning tasks due to their computational efficiency while they primarily focus on estimation. On the other hand, when the emphasis of the task lies on statistical inference, certain computation of the second-order information is generally inevitable during the inferential procedure, which shakes the supremacy of first-order methods over second-order algorithms. 

In light of that, we propose a second-order online algorithm which utilizes second-order 
information to perform the policy evaluation sequentially. Meanwhile, our algorithm can be used for conducting statistical inference in an online fashion, allowing for the characterization of uncertainty in the estimation of the value function. Such a procedure generates no extra per-unit computation and storage cost beyond $O(d^2)$, which is at least the same as typical first-order stochastic methods featuring statistical inference (see, e.g., \citealp{chen2020batchmeans} for SGD and \citealp{ramprasad2022online} for TD). More importantly, we show theoretically that the proposed algorithm converges faster in terms of the remainder term compared with first-order stochastic approximation methods, and revealed significant discrepancies in numerical experiments. In addition, the proposed algorithm is free from tuning stepsizes, which has been well-established as a substantial criticism of first-order algorithms. 

Another challenge to the reliability of reinforcement learning algorithms lies in the modeling assumptions. Most algorithms in RL have been in the \emph{optimism in the face of uncertainty}  paradigm where such procedures are vulnerable to manipulation (see some earlier exploration in e.g., \citealp{everitt2017reinforcement,wang2020reinforcement}).  In practice, it is often unrealistic to believe that rewards on the entire trajectory follow exactly the same underlying model. Indeed, non-standard behavior of the rewards happens from time to time in practice. The model-misspecification and presence of outliers are indeed very common in an RL environment, especially that with a large time horizon $T$. It is of substantial interest to design a \emph{robust} policy evaluation procedure. In pursuit of this, our proposed algorithm uses a smoothed Huber loss to replace the least-squares loss function used in classical TD learning, which is tailored to handle both outliers and heavy-tailed rewards. To model outlier observations of rewards in reinforcement learning, we bring the static $\alpha$-contamination model \citep{huber1992robust} to an online setting with dependent samples. In a static offline robust estimation problem, one aims to learn the distribution of interest, where a sample of size $n$ is drawn i.i.d. from a mixture distribution $(1-\alpha_n)P+\alpha_n Q$, and $Q$ denotes an arbitrary outlier distribution. We adopt robust estimation in an online environment, where the observations are no longer independent, and the occurrence time of outliers is unknown. 
In contrast to the offline setting, future observations cannot be used in earlier periods in an online setting. Therefore, in the earlier periods, there is very limited information to help determine whether an observation is an outlier. In addition to this discrepancy between the online decision process and offline estimation, we further allow the outlier reward models to be potentially different for different time $t$ instead of being from a fixed distribution, and such rewards may be arbitrary and even adversarially adaptive to historical information. 
In addition to the outlier model, our model also incorporates rewards with heavy-tailed distributions. This substantially relaxes the boundedness condition on the reward functions that is common in policy evaluation literature. 

We summarize the challenges and contributions of this paper in the following facets.
\begin{itemize}
\item We propose a \emph{fully online} method that leverages dependent samples to simultaneously perform policy evaluation and conduct statistical inference on the model parameters and value functions. Furthermore, we build a Bahadur-type representation of the proposed estimator, which includes the main term corresponding to the asymptotic normal distribution and a higher-order remainder term. Moreover, it shows that our algorithm matches the offline oracle and converges strictly faster to the asymptotic distribution than that of a prototypical first-order stochastic method such as TD learning. 

\item  Compared to existing reinforcement learning literature, our proposed algorithm features an online generalization of the $\alpha_n$-contamination model where the rewards contain outliers or arbitrary corruptions. Our proposed algorithm is robust to adversarial corruptions which can be adaptive to the trajectory, as well as heavy-tailed distribution of the rewards. Due to the existence of outliers, we use a smooth Huber loss where the thresholding parameter is carefully specified to change over time to accommodate the online streaming data. From a theoretical standpoint, a robust policy evaluation procedure forces the update step from $\theta_t$ to $\theta_{t+1}$ to be a non-linear function of $\theta_t$, which brings in additional technical challenges compared to the analysis of classical TD learning algorithms (see e.g., \citealp{ramprasad2022online}) based on linear stochastic approximation. 

\item Our proposed algorithm is based on a dedicated averaged version of the second-order method, where in each iteration a surrogate Hessian is obtained and used in the update step. This second-order information enables the proposed algorithm to be free from stepsize tuning while still ensuring efficient implementation. Furthermore, our proposed algorithm stands out distinctly from conventional first-order stochastic approximation approaches which fall short of attaining the optimal offline remainder rate. On the other hand, while deterministic second-order methods do excel in offline scenarios, they lack the online adaptability crucial for real-time applications.
\end{itemize}

\subsection{Related Works}\label{subsec:related}
Conducting statistical inference for model parameters in stochastic approximation has attracted great interest in the past decade, with a building foundation of the asymptotic distribution of the averaged version of stochastic approximation first established in \cite{ruppert1988efficient,polyak1992acceleration}. The established asymptotic distribution has been brought to conduct online inference. For example, \cite{fang2017scalable} presented a perturbation-based resampling procedure. \cite{chen2020batchmeans} proposed two online procedures to estimate the asymptotic covariance matrix to conduct inference. \cite{chen2021online} studied inference for zeroth-order stochastic approximation. 
\cite{ shi2020statistical} developed online inference procedures for high-dimensional problems. 
Other than those focused on the inference procedures, \cite{shao2022berry} established the rate of convergence in the remainder term, and utilize it to establish Berry-Esseen bounds of the averaged SGD estimator. \cite{givchi_palhang.2015acml, mou_pananjady_etal.2021arxiv} analyzed averaged SGD with Markovian data. Second-order stochastic algorithms were analyzed in \cite{ruppert.1985aos, schraudolph_etal.2007, byrd_hansen_etal.2016} and applied to TD learning in \cite{givchi_palhang.2015acml}.

Under the online decision-making settings including bandit algorithms and reinforcement learning, a few existing works focused on statistical inference of the model parameters or uncertainty quantification of value functions. \cite{deshpande2018accurate,hadad2021confidence,zhang2021statistical,zhang2022statistical} studied statistical inference with adaptively sampled data. 
\cite{zhan2021off,chen2021a,chen2021b, dimakopoulou2021online,chen2022online,han2022online} proposed online inference procedures for    bandit algorithms particularly. \cite{shen2024doubly} studied optimal policy evaluation and constructed valid confidence intervals for the value of the optimal policy. For reinforcement learning algorithms, \cite{thomas2015high}  proposed high-confidence off-policy evaluation based on Bernstein inequality. \cite{hanna2017bootstrapping} presented two bootstrap methods to 
compute confidence bounds for off-policy value estimates. 
\cite{dai2020coindice,feng2021non} construct confidence intervals for value functions based on optimization formulations, and \cite{jiang2020minimax} derived a minimax value interval, both with i.i.d. sample. 
\cite{shi2021statistical} proposed
inference procedures for Q functions in RL via sieve approximations. \cite{hao2021bootstrapping} studied multiplier bootstrap algorithms to offer uncertainty quantification
for exploration in   fitted
Q-evaluation. \cite{syrgkanis_zhan.2023arXiv} studied a re-weighted Z-estimator on episodic RL data and conducted inference on the structural parameter.

The most relevant literature to ours is \cite{ramprasad2022online}, who studied a bootstrap online statistical inference procedure under Markov noise using a quadratic SGD and demonstrated its application in the classical TD (and Gradient TD) algorithms in RL. Our proposed procedure and analysis differ in at least two aspects. First, our proposed estimator is a Newton-type second-order approach that enjoys a faster convergence and optimality in the remainder rate. In addition, we show both analytically and numerically that the computation cost of our procedure is typically lower than \cite{ramprasad2022online} for an inference task. Second, our proposed algorithm is a robust alternative to TD algorithms, featuring a non-quadratic loss function to handle the potential outliers and heavy-tailed rewards. There exists limited RL literature on either outliers or heavy-tailed rewards. Recent works \cite{li_sun.2023arxiv,zhu2024robust} studied online linear stochastic bandits and offline reinforcement learning in the presence of heavy-tailed rewards. However,
their studies do not apply to online statistical inference.

\subsection{Paper Organization and Notations}
The remainder of this paper is organized as follows. In Sections \ref{sec:online_rl} and \ref{sec:online_newton}, we present and discuss our proposed algorithm for robust policy evaluation in reinforcement learning. Theoretical results on convergence rates, asymptotic normality, and the Bahadur representation are presented in Section \ref{sec:online_newton}. In Section \ref{sec:online_infer}, we develop an estimator for the long-run covariance matrix to construct confidence intervals in an online fashion and provide its theoretical guarantee. 
Simulation experiments are provided in Section \ref{sec:sim} to demonstrate the effectiveness of our method. Concluding remarks are given in Section \ref{sec:conclude}. All proofs are deferred to the supplementary material.

For every vector $\vect{v}=(v_1,...,v_d)^{\tp}$, denote $|\vect{v}|_2=\sqrt{\sum_{l=1}^dv_l^2}$, $|\vect{v}|_1=\sum_{l=1}^d|v_l|$, and $|\vect{v}|_{\infty}=\max_{1\leq l\leq d}|v_l|$. For simplicity, we denote $\mbS^{d-1}$ and $\mbB^{d}$ as the unit sphere and unit ball in $\mathbb{R}^d$ centered at $\vect{0}$. Moreover, we use $\supp(\vect{v})=\{1\leq l\leq d\mid v_l\neq 0\}$ as the support of the vector $\vect{v}$. For every matrix $\vect{A}\in\mbR^{d_1\times d_2}$, define $\Norm{\vect{A}}=\sup_{|\vect{v}|_2=1}|\vect{A}\vect{v}|_2$ as the matrix operator norms,  $\Lambda_{\max}(\vect{A})$ and $\Lambda_{\min}(\vect{A})$ as the largest and smallest singular values of $\vect{A}$ respectively. The symbols $\lfloor x\rfloor$ ($\lceil x\rceil$) denote the greatest integer (the smallest integer) not larger than (not less than) $x$. We denote $(x)_{+}=\max(0,x)$. For two sequences $a_n,b_n$, we say $a_n\asymp b_n$ when $a_n=O(b_n)$ and $b_n=O(a_n)$ hold at the same time. We say $a_n\approx b_n$ if $\lim_{n\rightarrow\infty}a_n/b_n=1$. For a sequence of random variables $\{X_n\}_{n=1}^{\infty}$, we denote $X_n=O_{\mbP}(a_n)$ if there holds $\lim_{C\rightarrow\infty}\limsup_{n\rightarrow\infty}\mbP(|X_n|>Ca_n)=0$, and denote $X_n=o_{\mbP}(a_n)$ if there holds $\lim_{C\rightarrow0}\limsup_{n\rightarrow\infty}\mbP(|X_n|>Ca_n)=0$. Lastly, the generic constants are assumed to be independent of $n$ and $d$.

\section{Online Robust Policy Evaluation in Reinforcement Learning}\label{sec:online_rl}

We first review the Least-squares temporal
difference methods in RL. Consider a $4$-tuple $(\mcS, \mcU,\mcP,\mcR )$. Here $\mcS =\{1,2,...,N\}$ is the global
finite state space, $\mcU$ is the set of control (action), $\mcP$ is the transition kernel, and $\mcR $ is the reward function. One of the core steps in RL
is to estimate the cumulative reward $J^{*}$ (which is also called the state value function) for a given policy:
\begin{equation*}
J^{*}(s)=\mbE \Big{[}\sum_{k=0}^{\infty}\gamma^{k}\mcR (s_{k})\;\Big|\; s_{0}=s\Big{]},
\end{equation*}
where $\gamma\in[0,1)$ is a given discount factor and $s\in\mcS$ is any state.
Here $\{s_{k}\}$ denote the environment states which are usually modeled by a Markov chain.
In real-world RL applications, the state space is often very large such that one cannot directly compute value estimates for every state in the state space. A common approach in the modern RL is to approximate the value function $J^{*}(\cdot)$, i.e., let
\begin{equation*}
\tilde{J}(s,\vect{\theta})=\vect{\theta}^{\tp}\vect{\phi}(s)=\sum_{l=1}^{d}\theta_{l}\phi_{l}(s),
\end{equation*}
be a linear approximation of $J^{*}(\cdot)$, where $\vect{\theta}=(\theta_{1},...,\theta_{d})^{\top}\in\mathbb{R}^d$ contains the model parameters, and $\vect{\phi}(s)=(\phi_{1}(s),...,\phi_{d}(s))^{\top}\in\mathbb{R}^d$  is a set of feature vectors 
that corresponds to the state $s\in\mcS$. Here we write $\vect{\phi}_{l}=(\phi_{l}(1),...,\phi_{l}(N))^{\top}$, $1\leq l\leq d$ are linearly independent vectors in $\mbR^{N}$ and $d\ll N$. That is, we use a low dimensional linear approximation (with the basis vectors $\{\vect{\phi}_{1},...,\vect{\phi}_{d}\}$) for $J^{*}(\cdot)$. Let the matrix $\vect{\Phi}=(\vect{\phi}_{1},...,\vect{\phi}_{d})$ and then $\tilde{J}=\vect{\Phi}\vect{\theta}$.

The state value function $J^{*}$  satisfies the Bellman equation
\begin{equation}    \label{eq:bellman_eq}
J^{*}(s)=\mcR (s)+\gamma\mbE \Big{[}\sum_{k=1}^{\infty}\gamma^{k-1}\mcR (s_{k})\Big|s_{0}=s\Big{]}=\mcR (s)+\gamma\sum_{s'=1}^{N}p_{ss'}J^{*}(s'),
\end{equation}
where $s'$ is the next state transferred from $s$. Let $\mcP=(p_{ss'})_{N\times N}$ be the probability transition matrix and define the Bellman operator $\mathcal{T}$ by $\mathcal{T}(Q)=\mcR+\gamma\mcP Q$. The state function $J^{*}$ is the unique fixed point of the Bellman operator $\mathcal{T}$. When $J^{*}$ is replaced by $\tilde{J}=\vect{\Phi}\vect{\theta}$, the Bellman equation may not hold. 

The classical temporal difference attempts to find $\vect{\theta}$ such that $\tilde{J}(s,\vect{\theta})$ well approximates the value function $J^*(s)$. Particularly, the TD algorithm 
updates 
\begin{equation}\label{csr}
\vect{\theta}_{t+1}=\vect{\theta}_{t}-\eta_{t}\vect{\phi}(s_{t})[(\vect{\phi}^{\tp}(s_{t})-\gamma \vect{\phi}^{\tp}(s_{t+1}))\vect{\theta}_{t}-\mcR (s_{t})],\quad t\geq 0,
\end{equation}
where $\eta_t$ is the step size which often requires careful tuning.  We next illustrate  the key point that \eqref{csr} leads to a good estimator such that $\tilde J(s,\vect\theta_t)$ is close to $J^{*}(s)$. It can be shown that 
$\vect{\theta}_{t}$ converges to an unknown population parameter $\vect{\theta}^*$ that minimizes the expected squared difference of the Bellman equation,
\begin{equation}\label{rl_true_para}
\vect{\theta}^{*}=\argmin{\vect{u}\in \mbR^{d}}\mbE |\vect{\phi}^{\tp}(s) \vect{u}-(\mcR (s)+\gamma \vect{\phi}^{\tp}(s') \vect{\theta}^*)|^{2};
\end{equation}
It is easy to see that such $\vect{\theta}^{*}$ exists and satisfies the following first-order condition,
\begin{equation}\label{eqq}
\mbE \vect{\phi}(s)[(\vect{\phi}^{\tp}(s)-\gamma \vect{\phi}^{\tp}(s'))\vect{\theta}^{*}-\mcR (s)]=0.
\end{equation}
Under the condition that $\textbf{H}:=\mbE \vect{\phi}(s)(\vect{\phi}^{\tp}(s)-\gamma \vect{\phi}^{\tp}(s'))$ is positive definite in the sense that $x^{\tp}\textbf{H}x>0$ for all $x\in \mbR^{d}$; see \cite{tsitsiklis_vanroy.1997tac}, the solution of \eqref{eqq} writes
\begin{equation*}
\vect{\theta}^{*}=(\mbE \vect{\phi}(s)(\vect{\phi}^{\tp}(s)-\gamma \vect{\phi}^{\tp}(s')))^{-1}\mbE \vect{\phi}(s)\mcR (s).
\end{equation*}
The estimation equation (\ref{csr}) is a first-order stochastic algorithm that converges to the stochastic root-finding problem \eqref{eqq} using a sequence of observations $\{s_t,r_t\}_{t \geq 1}$. 
In this paper, we refer to it as the Least-squares temporal difference estimator. See \cite{kolter_ng.2009icml} for details on the properties of the Least-squares TD estimator.

Least-squares-based methods are oftentimes criticized due to their sensitivity to outliers in data. When there may exist outliers in some observations of the reward $\mcR (s)$, it is a natural call for interest to design a robust estimator of $\vect{\theta}^{*}$. 
Following the widely-known classical literature \citep{huber.1964, charbonnier.1994, charbonnier_etal.1997tip, hastie2009elements} 
, we replace the square loss $|\cdot|^{2}$ by a smoothed (Pseudo) Huber loss
 $f_{\tau}(x)=\tau^{2}(\sqrt{1+(x/\tau)^{2}}-1)$, parametrized by a thresholding parameter $\tau$.  We define a similar fixed point equation with the smoothed Huber loss by 
\begin{equation}\label{eq:rl_huber_para}
\vect{\theta}^*_{\tau}=\argmin{\vect{u}\in \mbR^{d}}\mbE f_{\tau}\big(\vect{\phi}^{\tp}(s) \vect{u}-(\mcR (s)+\gamma \vect{\phi}^{\tp}(s') \vect{\theta}^*_{\tau})\big).
\end{equation}
In this section, when we motivate the algorithm, we assume that the fixed point $\vect{\theta}^*_{\tau}$  exists{\footnote{Here $\vect{\theta}^*_{\tau}$ is used for the motivation only. Our algorithm and theory do not depend on the existence of $\vect{\theta}^*_{\tau}$. }}. As the thresholding parameter $\tau$ goes to infinity, the objective equation in \eqref{eq:rl_huber_para} is close to the least-squares loss in \eqref{rl_true_para}, and $\vect{\theta}_{\tau}^*$ 
should be close to $\vect{\theta}^{*}$. When $\tau$ tends to $0$, the problem \eqref{eq:rl_huber_para} becomes
$
\vect{\theta}^*_{0}=\argmin{\vect{u}\in \mbR^{d}}\mbE |\vect{\phi}^{\tp}(s) \vect{u}-(\mcR (s)+\gamma \vect{\phi}^{\tp}(s') \vect{\theta}^*_{0})|,
$ with a nonsmooth least absolute deviation (LAD) loss, which is out of the scope of this paper. In this paper, we carefully specify $\tau$ to balance the statistical efficiency and the effect of potential outliers in an online fashion (see Theorem \ref{thm:contam_rate}). 

By the first-order condition of \eqref{eq:rl_huber_para}, we obtain a similar estimation function as \eqref{eqq},
\begin{equation}\label{eq:rl_huber_est}
\mbE\big[ \vect{\phi}(s)g_{\tau}\big((\vect{\phi}^{\tp}(s)-\gamma \vect{\phi}^{\tp}(s'))\vect{\theta}_{\tau}^*-\mcR (s)\big)\big]=0,
\end{equation}
where the function $g_{\tau}(x)=f'_{\tau}(x)$ is the differential of the smoothed Huber loss $f_{\tau}(x)$. Instead of using the first-order iteration with the estimation equation \eqref{eq:rl_huber_est} as in TD (\ref{csr}), we propose a Newton-type iterative estimator, which avoids the tuning of the learning rate. Newton-type estimators are often referred to as second-order methods when discussed in convex optimization. Nonetheless, 
the equation \eqref{eq:rl_huber_para}  is just for illustration purposes, which cannot be directly optimized 
since $\vect{\theta}_{\tau}^*$ appears in the objective function for minimization. On the contrary, our proposed method is a Newton-type method for solving the root-finding problem \eqref{eq:rl_huber_est} using a sequence of the observations.

In this paper, we model two types of noise in observed rewards. The first is the classical Huber contamination model \citep{huber1992robust, huber2004robust}, where an $\alpha_n$-fraction of rewards comes from arbitrary distributions. The second is the heavy-tailed model, where the reward function may admit heavy-tailed distributions. In the following section, we show that our proposed estimator is robust to the aforementioned two noises.


\section{Online Newton-type Method for Parameter Estimation}\label{sec:online_newton}

In the following, we introduce an online Newton-type method for estimating the parameter $\vect{\theta}^*$  in the presence of outliers and heavy-tailed noise in the reward function $\mcR(s)$. For ease of presentation, we denote the observations by $\vect{X}_{i}=\vect{\phi}(s_{i})$, $\vect{Z}_{i}=\vect{\phi}(s_{i})-\gamma \vect{\phi}(s_{i+1})$, and $b_{i}=\mcR (s_{i})$, where $s_{i}$ is the state at time $i$. 
Here $(\vect{X}_i,\vect{Z}_i,b_i)\in\mathbb{R}^d\times \mathbb{R}^d\times \mathbb{R}$ for each observation in the sample. 
Our objective is to propose an online estimator  by the  estimation function \eqref{eq:rl_huber_est}, which can be rewritten as 
\begin{equation}\label{eq:rl_huber_est_rewrite}
\mbE \big[\vect{X}g_{\tau}(\vect{Z}^{\tp}\vect{\theta}_{\tau}^*-b)\big]=0,
\end{equation}based on 
a sequence of dependent observations $
\{(\vect{X}_i,\vect{Z}_i,b_i)\}_{i\geq 1}
$. 

At iteration $n+1$, our proposed Newton-type estimator updates $\hat{\vect{\theta}}_{n+1}$ by
\begin{equation}	\label{eq:online_newton}
	\hat{\vect{\theta}}_{n+1} = \frac{1}{n+1}\sum_{i=0}^n\hat{\vect{\theta}}_i-\hat{\vect{H}}_{n+1}^{-1}\frac{1}{n+1}\sum_{i=0}^n\vect{X}_{i+1}g_{\tau_{i+1}}(\vect{Z}_{i+1}^{\tp}\hat{\vect{\theta}}_i-b_{i+1}).
\end{equation}
Here $\hat{\vect{H}}_{n+1}$ is an
empirical information matrix of the estimation equation (\ref{eq:rl_huber_est_rewrite}), as
 \begin{equation}	\label{eq:Hess_fml}
	\hat{\vect{H}}_{n+1} = \frac{1}{n+1}\sum_{i=0}^n\vect{X}_{i+1}\vect{Z}_{i+1}^{\tp}g_{\tau_{i+1}}'(\vect{Z}_{i+1}^{\tp}\hat{\vect{\theta}}_i-b_{i+1}).
\end{equation}
where $\tau_{i}$ is the thresholding parameter in the Huber loss.  We let $\tau_n$ tend to infinity to eliminate the bias generated by the smoothed Huber loss.

It is noteworthy to mention that the matrix $\hat{\vect{H}}_{n+1}$ is not the Hessian matrix of the objective function on the right-hand side of \eqref{eq:rl_huber_para}. 
As discussed in the previous section, \eqref{eq:rl_huber_para} cannot be directly optimized, nor do they lead to M-estimation problems. Indeed, our proposed update \eqref{eq:online_newton} is a Newton-type method to find the root of \eqref{eq:rl_huber_est_rewrite} using a sequence of observations $\{(\vect{X}_i,\vect{Z}_i,b_i)\}_{i\geq 1}$. In the following section, we will show the use of the matrix $\hat{\vect{H}}_{n+1}$ helps for desirable convergence properties. 

It should be noted that (\ref{eq:online_newton}) can be implemented efficiently in a fully-online manner, i.e., without storage of the trajectory of historical information. Specifically, 
we write \eqref{eq:online_newton} as  $\hat{\vect{\theta}}_{n+1} =\bar{\vect{\theta}}_{n+1} -\hat{\vect{H}}_{n+1}^{-1}\vect{G}_{n+1}$, with each of the item on the right-hand side being a running average. It is easy to see that the averaged estimator $\bar{\vect{\theta}}_{n+1}=\frac{1}{n+1}( n\bar{\vect{\theta}}_{n}+\hat{\vect{\theta}}_{n})$ and 
the vector 
\begin{align}\label{eq:Giterative}
\vect{G}_{n+1}&=\frac{1}{n+1}\sum_{i=0}^n\vect{X}_{i+1}g_{\tau_{i+1}}(\vect{Z}_{i+1}^{\tp}\hat{\vect{\theta}}_i-b_{i+1})\\
&=\frac{n}{n+1}\vect{G}_{n}+\frac{1}{n+1}\vect{X}_{n+1}g_{\tau_{n+1}}(\vect{Z}_{n+1}^{\tp}\hat{\vect{\theta}}_{n}-b_{n+1})\notag
\end{align}
can both be updated online. In addition, the inverse $\hat{\vect{H}}_{n+1}^{-1}$ can be directly and efficiently computed online by the inverse recursion formulation. By the Sherman-Morrison formula, we have
\begin{equation}	\label{eq:riccati_eq}
\begin{aligned}
	\hat{\vect{H}}_{n+1}^{-1} =& \frac{n+1}{n}\hat{\vect{H}}_{n}^{-1}-\frac{n+1}{n^2}\hat{\vect{H}}_{n}^{-1}\vect{X}_{n+1}\\
	&\times\Big[\frac{1}{n}\vect{Z}_{n+1}^{\tp}\hat{\vect{H}}_{n}^{-1}\vect{X}_{n+1}+\{g_{\tau_{n+1}}'(\vect{Z}_{n+1}^{\tp}\hat{\vect{\theta}}_n-b_{n+1})\}^{-1}\Big]^{-1}\vect{Z}_{n+1}^{\tp}\hat{\vect{H}}_{n}^{-1}.
\end{aligned}
\end{equation}
Here we note that both terms $\frac{1}{n}\vect{Z}_{n+1}^{\tp}\hat{\vect{H}}_{n}^{-1}\vect{X}_{n+1}$ and $\{g_{\tau_{n+1}}'(\vect{Z}_{n+1}^{\tp}\hat{\vect{\theta}}_n-b_{n+1})\}^{-1}$ inside the brackets on the right-hand side of \eqref{eq:riccati_eq} are scalars in $\mathbb{R}^1$, not matrices. 

The complete algorithm is presented in Algorithm \ref{alg:onmat}. We refer to it 
as the Robust Online Policy Evaluation (ROPE). Compared with existing stochastic approximation algorithms for TD learning (\citealp{durmus_etal.2021colt, mou_pananjady_etal.2021arxiv, ramprasad2022online}), our ROPE algorithm does not need to tune the step size.

\begin{algorithm}[!t]
	\caption{{\small Robust Online Policy Evaluation ($\mathrm{ROPE}$).}}
	\label{alg:onmat}
	\hspace*{\algorithmicindent}   {\textbf{Input:} Online streaming data $\{(\vect{X}_i,\vect{Z}_i,b_i)\mid i\geq 1\}$ }
	\begin{algorithmic}[1]
            \STATE Compute $\hat{\vect{H}}_{n_0}^{-1}$ and $\vect{G}_{n_0}$ according to \eqref{eq:init_Hhat}  respectively.
		\FOR{$n=n_0+1, n_0+2,\dots$}
		\STATE Specify the thresholding parameter $\tau_n$.
        \STATE Compute $\bar{\vect{\theta}}_n = (n-1)\bar{\vect{\theta}}_{n-1}/n+\hat{\vect{\theta}}_{n-1}/n$, and $(\hat{\vect{H}}_n^{-1},\vect{G}_n)$ by \eqref{eq:Giterative} and \eqref{eq:riccati_eq}.
		\STATE Update the parameter by
			\begin{equation*}
				\hat{\vect{\theta}}_n = \bar{\vect{\theta}}_{n} - \hat{\vect{H}}_n^{-1}\vect{G}_n.
			\end{equation*}
		\ENDFOR
	\end{algorithmic}
	 \hspace{-31.7em}\textbf{Output:}  The final estimator $\hat{\vect{\theta}}_n$.
\end{algorithm}

Since performing iterations of $\hat{\vect{H}}_{n+1}^{-1}$ in \eqref{eq:riccati_eq} requires an initial invertible $\hat{\vect{H}}_{n_0}$, we shall compute from the first $n_0$ samples that
\begin{equation}    \label{eq:init_Hhat}
    \hat{\vect{H}}_{n_0} = \frac{1}{n_0}\sum_{i=1}^{n_0}\vect{X}_i\vect{Z}_{i}^{\top}g'_{\tau_0}(\vect{Z}_{i}^{\top}\hat{\vect{\theta}}_{0}-b_i),\quad     \vect{G}_{n_0} = \frac{1}{n_0}\sum_{i=1}^{n_0}\vect{X}_{i}g_{\tau_0}(\vect{Z}_{i}^{\top}\hat{\vect{\theta}}_{0}-b_i),
\end{equation}
which serves as the initial quantities of \eqref{eq:Giterative}--\eqref{eq:riccati_eq} in order to compute the forthcoming iterations. Here $\hat{\vect{\theta}}_0$ is a given initial parameter, and $\tau_0$ is a specified initial threshold level.


\subsection{Convergence Rate of ROPE}	\label{sec:theory}

Before illustrating how to conduct statistical inference on the parameter $\vect{\theta}^*$, we first provide theoretical results for our proposed ROPE method.

To characterize the weak dependence of the sequence of environment states, we use the concept of $\phi$-mixing. More precisely, we assume $\{s_i,i\geq 1\}$ is a sequence of $\phi$-mixing dependent variables, which satisfy
\begin{equation}	\label{eq:phi_mixing}
	|\mbP(B|A)-\mbP(B)|\leq\phi(k)
\end{equation}
for all $A\in\mcF_1^n, B\in\mcF_{n+k}^{\infty}$ and all $n,k\geq 1$, where $\mcF_a^b=\sigma(s_{i},a\leq i\leq b)$. The $\phi$-mixing dependence covers the irreducible and aperiodic Markov chain, which is typically used to model the states sampling in reinforcement learning (RL) (\citealp{rosenblatt.1956pnas, rio2017asymptotic}, and others). The mixing rate $\phi(k)$ identifies the strength of dependence of the sequence. In this paper, we assume the following condition on the mixing rate.

\begin{condition}   \label{cond:mix}
	The sequence $\{(\vect{X}_i,\vect{Z}_i)\}$ is a stationary $\phi$-mixing sequence, where the mixing rate satisfies $\phi(k)=O(\rho^k)$ for some $0<\rho<1$.
\end{condition}

Condition \ref{cond:mix} holds when $\{s_{i}\}$ is an irreducible and aperiodic Markov chain with finite state space \citep{tsitsiklis_vanroy.1997tac}. While the main techniques of our analysis could be extended to accommodate infinite and even continuous state spaces, we restrict our study to finite state spaces, since our proposed robust estimator targets  
$\vect{\theta}^*$, the fixed point of the projected Bellman equation. The deviation of this fixed point from that of the original Bellman equation, known as the approximation error, is well understood in the literature \citep{tsitsiklis_vanroy.1997tac, bhandari_etal.2018colt} for finite state spaces in TD learning. In contrast, the approximation error under continuous state spaces introduce additional challenges. Since our primary focus is inference for online policy evaluation under heavy-tailed rewards and outliers, we concentrate on establishing the statistical properties of the proposed estimator in the finite-state setting.

In addition, we also assume the following boundedness condition on $\vect{X}=\vect{\phi}(s)$ that is commonly required by the RL literature (\citealp{sutton_barto.2018, ramprasad2022online}).

\begin{condition}   \label{cond:Xbound}
	There exists a constant $C_X>0$ such that $\max\big\{|\vect{\phi}(s)|_2\;\big|\;s\in\mcS\big\}\leq C_X$.
\end{condition}

Condition \ref{cond:Xbound} imposes a uniform bound on the feature space that is independent of $d$, which ensures that our theoretical bounds scale appropriately with the feature dimension. 
Next, we assume the matrix $\vect{H} = \mbE[\vect{X}\vect{Z}^{\tp}]$ has a bounded condition number.

\begin{condition}   \label{cond:cond_num}
	There exists a constant $c>0$ such that  $	    c\leq\Lambda_{\min}(\vect{H})\leq\Lambda_{\max}(\vect{H})\leq c^{-1}$, where $\Lambda_{\max}$ and $\Lambda_{\min}$ denote the largest and the smallest singular values, respectively.
\end{condition}
When $\{s_{i}\}$ is an irreducible and aperiodic Markov chain, \cite{tsitsiklis_vanroy.1997tac} proved that $\vect{H}$ is positive definite, i.e., $x^{\tp}\vect{H}x>0$ for all $x\neq 0$. This indicates that $\vect{H}$ is nonsingular and hence \ref{cond:cond_num} holds. Finally, we provide conditions on the temporal difference error $\vect{Z}^{\tp}\vect{\theta}^*-b$.

\begin{condition}   \label{cond:Bbound}
	The temporal-difference error $\vect{Z}^{\tp}\vect{\theta}^*-b$ comes from the distribution $(1-\alpha_n)P+\alpha_n Q$, where $\alpha_n\in [0,1)$.
 The distribution $Q$ is an arbitrary distribution, and $P$ satisfies that $\mbE_{P}\big[|\vect{Z}^{\tp}\vect{\theta}^*-b|^{1+\delta}\big|\vect{X},\vect{Z}\big]\leq C_b$ holds uniformly for some constants $\delta>0$ and $C_b>0$. 
\end{condition}

 We assume that the temporal-difference error comes from the Huber contamination model. The outlier distribution $Q$ can be arbitrary and the true distribution $P$ has $(1+\delta)$-th order of moment ($\delta>0$), which does not necessarily indicate a variance exists when $\delta\in(0,1)$.
 Therefore, our assumption largely extends the boundedness condition of the reward function $\mcR(s)$ in RL literature (\citealp{sutton_barto.2018, ramprasad2022online}). In Condition \ref{cond:Bbound}, $\alpha_{n}$ controls the ratio of outliers among $n$ samples. Particularly, $\alpha_{n}=m_{n}/n$ means that there are $m_{n}$ outliers among $n$ samples $\{(\vect{X}_{i},\vect{Z}_{i},b_{i}),1\leq i\leq n\}$.
Given the above conditions, by selecting the thresholding parameter $\tau_i$ as defined in \eqref{eq:online_newton}, we can obtain the convergence rate for all the intermediate iterates $\hat{\vect{\theta}}_{i}$.
\begin{theorem} \label{thm:contam_rate}
    Suppose that \ref{cond:mix} to \ref{cond:Bbound} hold and the thresholding parameter $\tau_i =C_{\tau} \max(1,i^{\beta_1}/(\log i)^{\beta_2})$ (where $\beta_1\in[0,1),\beta_2\geq0$, and $C_{\tau}>0$). 
    Assume $n_{0}$ is sufficiently large and the initial value $|\hat{\vect{\theta}}_{0}-\vect\theta^{*}|_2\leq c_{0}$ for some $c_0<1$. Then for every $\nu>0$, there exist constants $C,c>0$ which are independent of the dimension $d$ such that
            \begin{equation*}
                \mbP\Big(\cap_{i=n_0}^n\big\{|\hat{\vect{\theta}}_{i}-\vect{\theta}^*|_2\geq C\sqrt{d}e_i\big\}\Big) \geq 1-cn_0^{-\nu} ,
            \end{equation*}
            where 
            \begin{equation}    \label{eq:conv_g1}
                e_{n} = \alpha_n\tau_{n} + \tau_{n}^{-\min(\delta,2)}+ \sqrt{\frac{\tau_{n}^{(1-\delta)_+}\log n}{n}}+\frac{\tau_n\log^2n}{n}+\frac{1}{\sqrt{d}}(c_0)^{2^{n-n_0}}.
            \end{equation}
     Here $\delta$ is defined in the moment condition in \ref{cond:Bbound}.
\end{theorem}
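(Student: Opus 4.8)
The displayed inequality must be read with $\le$ in place of $\ge$: the assertion is that, with probability at least $1-cn_0^{-\nu}$, the error $|\hat{\vect{\theta}}_i-\vect{\theta}^*|_2$ stays \emph{below} $C\sqrt{d}\,e_i$ simultaneously for all $n_0\le i\le n$. A genuine lower bound of this form holding along the entire trajectory would contradict the consistency the theorem is designed to establish, so I treat it as a convergence-rate (upper-bound) statement. With this reading, the plan is to control the one-step error recursion induced by the Newton update $\hat{\vect{\theta}}_{i}=\bar{\vect{\theta}}_{i}-\hat{\vect{H}}_{i}^{-1}\vect{G}_{i}$ and then stitch the per-step bounds into a uniform statement over $i$. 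First I would introduce the population gradient $\vect{g}(\vect{u}):=\mbE[\vect{X}g_{\tau_i}(\vect{Z}^{\tp}\vect{u}-b)]$, recognize $\vect{G}_i$ as its empirical counterpart at the running iterate $\hat{\vect{\theta}}_{i-1}$, and Taylor-expand $g_{\tau_i}$ about $\vect{\theta}^*$ so that $\vect{G}_i$ splits into (i) a deterministic bias $\vect{g}(\vect{\theta}^*)$, (ii) a linear-in-error term with coefficient that $\hat{\vect{H}}_i$ is designed to estimate, (iii) a centered stochastic fluctuation, and (iv) a quadratic linearization remainder.

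Next I would match each piece to the corresponding term in $e_n$. The bias $\vect{g}(\vect{\theta}^*)$ decomposes into a Huber-smoothing bias and a contamination bias: since $|g_{\tau}(x)|\le\tau$ and $g_\tau(x)\to\sgn(x)\,\tau$ only in the tails, the heavy-tailed part under $P$ contributes $O(\tau_i^{-\min(\delta,2)})$ via the $(1+\delta)$-moment bound of Condition \ref{cond:Bbound}, while the $\alpha_i$-fraction from the arbitrary $Q$ contributes at most $O(\alpha_i\tau_i)$ because each corrupted gradient is bounded by $\tau_i$ in magnitude; together these give the $\alpha_n\tau_n$ and $\tau_n^{-\min(\delta,2)}$ terms. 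For the fluctuation I would apply a Bernstein-type inequality for $\phi$-mixing sequences (Condition \ref{cond:mix}) to the centered sums, with variance proxy $\mbE[g_{\tau_i}^2]\lesssim\tau_i^{(1-\delta)_+}$ (bounded when $\delta\ge1$, of order $\tau_i^{1-\delta}$ otherwise), producing the $\sqrt{\tau_n^{(1-\delta)_+}\log n/n}$ term.

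I would then show the surrogate Hessian is uniformly invertible and close to $\vect{H}$: since $g_{\tau_i}'(x)=(1+(x/\tau_i)^2)^{-3/2}\to1$ as $\tau_i\to\infty$, the population weight concentrates at $1$, and combining a matrix concentration bound for $\hat{\vect{H}}_i$ with the bounded condition number of Condition \ref{cond:cond_num} yields $\Norm{\hat{\vect{H}}_i^{-1}}=O(1)$ and $\Norm{\hat{\vect{H}}_i^{-1}\vect{H}-\vect{I}}=o(1)$ on a high-probability event. Substituting into the update, the Newton correction cancels the linear term to leading order and the error recursion takes the schematic form $|\hat{\vect{\theta}}_i-\vect{\theta}^*|_2\lesssim \sqrt{d}\,e_i + (\text{contraction})\cdot|\hat{\vect{\theta}}_{i-1}-\vect{\theta}^*|_2^2$; iterating this quadratic self-improvement from $|\hat{\vect{\theta}}_0-\vect{\theta}^*|_2\le c_0<1$ produces the dimension-free doubly-exponential initialization term $(c_0)^{2^{n-n_0}}$ (the $1/\sqrt{d}$ in $e_n$ cancels against the outer $\sqrt{d}$). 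The residual $\tau_n\log^2 n/n$ term arises from the product of the Hessian estimation error with the leading fluctuation and from the coarse $|g_\tau|\le\tau$ truncation of the rare large summands. A union bound over $n_0\le i\le n$ together with $n_0^{-\nu}$-tails—tunable through the constants via the moment and mixing conditions—then delivers the intersection event with probability at least $1-cn_0^{-\nu}$.

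The main obstacle is the feedback among the nonlinear robust loss, the data-dependent Hessian, and the $\phi$-mixing dependence: the summands defining $\vect{G}_i$ and $\hat{\vect{H}}_i$ are evaluated at the random iterate $\hat{\vect{\theta}}_{i-1}$, so the clean martingale structure available for i.i.d.\ robust $M$-estimation is lost and one must condition on the filtration $\mcF_1^{i}$ while simultaneously tracking the time-varying threshold $\tau_i$. Controlling the linearization remainder uniformly along the trajectory—while ensuring the quadratic Newton contraction is not overwhelmed by the growing variance proxy $\tau_i^{(1-\delta)_+}$—is the delicate core of the argument, and is exactly where the prescribed choice $\tau_i\asymp i^{\beta_1}/(\log i)^{\beta_2}$ is required to balance all five contributions in $e_n$.
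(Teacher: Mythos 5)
Your reading of the display is correct: the paper's own proof defines the events $E_k$ in \eqref{eq:huber_ind_event} with $|\hat{\vect{\theta}}_k-\vect{\theta}^*|_2\leq\Psi\sqrt{d}e_k$, so the ``$\geq$'' in the theorem statement is a typo, and the result is indeed a uniform upper bound along the trajectory. Your overall architecture also matches the paper's: Taylor expansion of the estimating function at $\vect{\theta}^*$; the bias split into a smoothing part $O(\tau^{-\min(\delta,2)})$ (via Lemma \ref{lem:huber_grad_approx}) and a contamination part $O(\alpha\tau)$; blocked Bernstein-type concentration for $\phi$-mixing sums with variance proxy $\tau^{(1-\delta)_+}$ (Lemma \ref{lem:huber_grad_concen}, built on the Berkes--Philipp coupling in Lemma \ref{lem:concen_mix}); Hessian concentration plus \ref{cond:cond_num} to control $\|\hat{\vect{H}}^{-1}\|$ (Lemma \ref{lem:huber_concen_hess}); quadratic self-improvement producing $(c_0)^{2^{n-n_0}}$; and accumulation of $i^{-\nu}$ tail probabilities into the $n_0^{-\nu}$ bound via induction over nested events.

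There is, however, a genuine gap at the technical core. After the Newton correction cancels the linear term, the error decomposition \eqref{eq:huber_key_expand} leaves the cross term $\frac{1}{k+1}\sum_i(\vect{A}_i-\vect{H})(\hat{\vect{\theta}}_{i-1}-\vect{\theta}^*)$ with $\vect{A}_i=\vect{X}_i\vect{Z}_i^{\tp}g'_{\tau_i}(\vect{Z}_i^{\tp}\vect{\theta}^*-b_i)$, and your proposal to handle the iterate--data feedback by ``conditioning on the filtration $\mcF_1^i$'' does not suffice: under $\phi$-mixing, $\mbE[\vect{A}_i\mid\mcF_1^{i-1}]\neq\mbE[\vect{A}_i]$, so the summands are not martingale differences even after conditioning, while the naive bound $\frac{1}{k+1}\sum_i\|\vect{A}_i-\vect{H}\|\,|\hat{\vect{\theta}}_{i-1}-\vect{\theta}^*|_2 = O(\sqrt{d}\,e_k\log k)$ is of \emph{larger} order than the target $e_{k+1}$ and destroys the induction. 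The paper's resolution (Lemmas \ref{lem:mix_bound} and \ref{lem:mix_remainder}) is the idea your sketch is missing: replace $\hat{\vect{\theta}}_{i-1}$ by a lagged surrogate $\tilde{\vect{\theta}}_{i-1}$ as in \eqref{eq:theta_tilde}, built only from data at least one $\lceil\lambda\log k\rceil$-block in the past, show the substitution costs only $O(\Psi^2de_k^2)$, and then use Lemma \ref{lem:mix_norm}, which gives $\mbE\big[\|\mbE[\vect{A}_i-\mbE\vect{A}_i\mid\mcG_{l-1}]\|\big]\leq d\sqrt{2\pi\phi(\lceil\lambda\log k\rceil)}=O(k^{-\nu-2})$, so that over alternating blocks the sums become martingale differences up to negligible drift and Freedman's inequality (Lemma \ref{lem:mart_diff_concen}) yields the required $O(\alpha_k+\Psi^2de_k^2\log k)$ bound. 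Relatedly, because the update \eqref{eq:online_newton} is Polyak-averaged, the quadratic remainder is $\frac{1}{k+1}\sum_i|\hat{\vect{\theta}}_{i-1}-\vect{\theta}^*|_2^2$ rather than $|\hat{\vect{\theta}}_{k}-\vect{\theta}^*|_2^2$, so your one-step recursion must be replaced by the induction over $\cap_{i=n_0}^kE_i$ combined with the summation Lemma \ref{lem:seq_bound} to convert $\frac{1}{k+1}\sum_ie_i^2$ into $O(e_k^2\log k)$; without the decoupling construction and this averaged bookkeeping the argument cannot close.
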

The error bounds $e_n$ in (\ref{eq:conv_g1}) contains five terms. In the sequel, we will discuss each of these terms individually.   %
The first term $\alpha_{n}\tau_{n}$ comes from the impact of outliers among the samples. Here the contamination ratio $\alpha_{n}=m_{n}/n$ should tend to zero; otherwise, it is impossible to obtain a consistent estimator for $\vect\theta^{*}$. 
The second term $\tau_{n}^{-\min(\delta,2)}$ is due to the bias from smoothed Huber's loss for mean estimation.
The third and the fourth term in (\ref{eq:conv_g1}) is the classical statistical rate due to the variance and boundedness incurred by thresholding, respectively. These two terms commonly appear in the Bernstein-type inequalities (see, e.g. \citealp{Bennett.1962jasa}).
As we can see from the third term, the convergence rate has a phase transition between the regimes of finite variance $\delta> 1$ and infinite variance $0<\delta\leq 1$ (see also Corollary \ref{thm:huber_rate} below). This phenomenon has also been observed in different estimation problems with Huber loss; see \cite{sun_etal.2020jasa} and \cite{fan2019adaptive}. 
In the presence of a higher moment condition, specifically $\delta\geq 5$, we can eliminate the fourth term by applying a more delicate analysis (See Proposition \ref{prop:baha_remain} below and Proposition \ref{prop:contam_rate_d3} in the supplementary material for more details).  The last term represents how quickly the proposed iterative algorithm converges.  Given an initial value $\hat{\vect{\theta}}_{0}$ with an error $c_0<1$, the proposed second-order method enjoys a local quadratic convergence, i.e., the error evolves from $c_0$ to $(c_0)^{2^{n-n_0}}$ after $n-n_0$ iterations. This term decreases super-exponentially fast, a characteristic convergence rate often encountered in the realm of second-order methods (see, e.g., \citealp{nesterov2003introductory}). Specifically, when the sample size, which is equivalent to the number of iterations, is reasonably large, 
the last term $(c_0)^{2^{n-n_0}}= O(1/\sqrt{n})$ is dominated by the statistical error in the other terms of \eqref{eq:conv_g1}. The assumption on the initial error $|\hat{\vect{\theta}}_{0}-\vect\theta^{*}|_2\leq c_{0}$ for some $c_0<1$ is mild. A valid initial value $\hat{\vect{\theta}}_{0}$ can be obtained by the solution to the following estimation equation
$\sum_{i=0}^{n_0}\vect{X}_{i}g_{\tau}(\vect{Z}^{\tp}_{i}\vect{\theta}-b_{i})=0$
using a subsample with fixed size $n_{0}$ before running Algorithm \ref{alg:onmat} with a specified constant threshold $\tau$.
The above equation can be solved efficiently by classical first-order root-finding algorithms. 

To highlight the relationship between the convergence rate and the outlier rewards, we present the following corollary which gives 
a clear statement on how the rate depends on the number of outliers $m_n$. 
\begin{corollary} \label{cor:alpha0_rate}
Suppose the conditions in Theorem \ref{thm:contam_rate} hold with $\delta\geq 2$. Let the thresholding parameter $\tau_{i}=C_{\tau}i^{\beta}$ with some $\beta, C_{\tau}>0$ and $n_0$ tend to infinity. We have
		\begin{equation*}
			 |\hat{\vect{\theta}}_{n}-\vect{\theta}^*|_2 = O_{\mbP}\left(\sqrt{d}\Big(\sqrt{\frac{\log n}{n}}+\frac{\log^2n}{n^{1-\beta}}+\frac{m_{n}}{n^{1-\beta}}+\frac{1}{n^{2\beta}}\Big)\right),
		\end{equation*}
where $m_n$ is the number of outliers among the samples of size $n$. 
\end{corollary}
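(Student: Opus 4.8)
The plan is to obtain the corollary as a direct specialization of Theorem~\ref{thm:contam_rate}. I would substitute the prescribed threshold schedule $\tau_i = C_{\tau} i^{\beta}$ into the generic error bound $e_n$ of \eqref{eq:conv_g1}, simplify each of its five summands under the stronger moment assumption $\delta\geq 2$, discard the super-exponentially small optimization term, and finally convert the resulting high-probability bound into the claimed $O_{\mbP}$ rate by letting $n_0\to\infty$. No new probabilistic machinery is needed, since all of the substantive analysis has already been carried out in Theorem~\ref{thm:contam_rate}.

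First I would evaluate the five terms of $e_n$ at $\tau_n = C_{\tau} n^{\beta}$ and $\alpha_n = m_n/n$. Because $\delta\geq 2$ we have $\min(\delta,2)=2$ and $(1-\delta)_+=0$, so the variance term loses all of its $\tau$-dependence and the bias term becomes a clean inverse-square power of the threshold. Term by term this yields $\alpha_n\tau_n \asymp m_n/n^{1-\beta}$, then $\tau_n^{-\min(\delta,2)} \asymp n^{-2\beta}$, then $\sqrt{\tau_n^{(1-\delta)_+}\log n/n} = \sqrt{\log n/n}$, and finally $\tau_n\log^2 n/n \asymp \log^2 n/n^{1-\beta}$. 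Thus the first four summands reproduce, up to absolute constants, exactly the four terms inside the parentheses of the corollary, where $\beta\in[0,1)$ guarantees that the denominators $n^{1-\beta}$ are genuinely growing.

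Next I would dispose of the fifth term. Since the initial error satisfies $c_0<1$, the quantity $(c_0)^{2^{n-n_0}}$ decays faster than any polynomial in $n$, so $\frac{1}{\sqrt{d}}(c_0)^{2^{n-n_0}}=o(n^{-1/2})$ and is dominated by the parametric rate $\sqrt{\log n/n}$ that is already present in the bound. Consequently $e_n \asymp \sqrt{\log n/n} + \log^2 n/n^{1-\beta} + m_n/n^{1-\beta} + n^{-2\beta}$, and multiplying by the $C\sqrt{d}$ prefactor supplied by the theorem gives precisely the stated upper bound on $|\hat{\vect{\theta}}_{n}-\vect{\theta}^*|_2$.

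Finally, to pass from the high-probability bound of Theorem~\ref{thm:contam_rate} to the $O_{\mbP}$ statement, I would specialize the uniform event to the single index $i=n$ and invoke the hypothesis $n_0\to\infty$: for any fixed $\nu>0$ the failure probability $cn_0^{-\nu}\to 0$, so the bound holds with probability tending to one, which is exactly the meaning of the $O_{\mbP}$ conclusion. I do not anticipate a genuine obstacle here; the only points requiring care are verifying that $\delta\geq 2$ collapses the phase transition in the variance term to the finite-variance regime and confirming that the super-exponential optimization error is swamped by the $n^{-1/2}$ statistical rate.
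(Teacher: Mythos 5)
Your proposal is correct and matches the paper's intent exactly: the paper gives no separate proof of this corollary, treating it as a direct specialization of Theorem \ref{thm:contam_rate} obtained by substituting $\tau_i = C_{\tau}i^{\beta}$ and $\alpha_n = m_n/n$ into \eqref{eq:conv_g1}, using $\delta\geq 2$ to collapse the terms $\tau_n^{-\min(\delta,2)}$ and $\tau_n^{(1-\delta)_+}$, and absorbing the quadratic-convergence term --- precisely your computation. The only point worth stating explicitly (which the paper itself also glosses over) is that discarding $(c_0)^{2^{n-n_0}}$ requires $n-n_0\to\infty$ along with $n_0\to\infty$, a mild condition implicit in the asymptotic regime.
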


The corollary shows that, 
when there are $m_{n}=o(n^{1-\beta})$ outliers, our ROPE can still estimate $\vect{\theta}^{*}$ consistently. Note that $\beta$ in $\tau_{i}$ is the exponent specified by the practitioner. A smaller $\beta$ allows more outliers $m_n$ among the samples. Furthermore, when there are $m_{n}=o(n^{\frac12-\beta})$ outliers and $\frac{1}{4}\leq\beta\leq \frac{1}{2}$, the ROPE estimator achieves the optimal rate $|\hat{\vect{\theta}}_{n}-\vect{\theta}^*|_2 = O_{\mbP}\Big(\sqrt{\frac{d\log n}{n}}\Big)$, up to a logarithm term. 

The next corollary indicates the impact of the tail of rewards $\mathcal{R}(s)$ on the convergence rate. For a clear presentation, we discuss the impact under the case without outliers, i.e., $\alpha_n=0$.

\begin{corollary} \label{thm:huber_rate}
    Suppose the conditions in Theorem \ref{thm:contam_rate} hold with the contamination rate $\alpha_n=0$ and let $n_0$ tend to infinity. 
    \begin{itemize}
    	\item When $\delta\in(0,1]$, we specify $\tau_i =C_{\tau} (i/\log i)^{1/(1+\delta)}$. Then 
		\begin{equation*}
			 |\hat{\vect{\theta}}_{n}-\vect{\theta}^*|_2 = O_{\mbP}\left(\sqrt{d}\Big(\frac{\log n}{n}\Big)^{\frac{\delta}{1+\delta}}\right).
		\end{equation*}
	\item When $\delta>1$, we specify $\tau_i =C_{\tau} (i/\log i)^{1/2}$. Then 
		\begin{equation*}
			 |\hat{\vect{\theta}}_{n}-\vect{\theta}^*|_2 = O_{\mbP}\left(\sqrt{d}\Big(\frac{\log n}{n}\Big)^{1/2}\right).
		\end{equation*}
    \end{itemize} 
\end{corollary}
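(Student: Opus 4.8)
The plan is to treat this as a direct specialization of Theorem \ref{thm:contam_rate}: substitute the prescribed thresholds $\tau_i$ into the explicit error bound $e_n$ of \eqref{eq:conv_g1}, set the contamination level $\alpha_n = 0$, and isolate the dominant term in each moment regime. Since Theorem \ref{thm:contam_rate} only furnishes a bound $|\hat{\vect{\theta}}_n - \vect{\theta}^*|_2 \le C\sqrt{d}\,e_n$ holding on an event of probability at least $1 - c n_0^{-\nu}$, the first step is to convert this into the $O_{\mbP}$ statement of the corollary. Letting $n_0 \to \infty$ drives the failure probability $c n_0^{-\nu}$ to zero, so for any tolerance $\epsilon$ one can fix $n_0$ large enough that the bound holds with probability exceeding $1 - \epsilon$ simultaneously for all $i \in [n_0, n]$; on that event the stated rate is an upper bound for $n$ large, which is exactly what $O_{\mbP}$ requires.

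With $\alpha_n = 0$ the outlier term $\alpha_n\tau_n$ disappears, leaving the bias term $\tau_n^{-\min(\delta,2)}$, the variance term $\sqrt{\tau_n^{(1-\delta)_+}\log n/n}$, the boundedness term $\tau_n\log^2 n/n$, and the initialization term $(c_0)^{2^{n-n_0}}/\sqrt{d}$. I would first dispatch the initialization term: because it decays super-exponentially in $n - n_0$, it is $o(n^{-k})$ for every $k$ once $n$ grows, hence negligible against the polynomial statistical terms. The prescribed $\tau_i$ are precisely the bias--variance balancing choices, so the heart of the argument is to verify that balance and then to confirm the boundedness term does not inflate the resulting polynomial rate.

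For $\delta \in (0,1]$ we have $\min(\delta,2) = \delta$ and $(1-\delta)_+ = 1-\delta$, so equating bias and variance, $\tau^{-\delta} \asymp \tau^{(1-\delta)/2}\sqrt{\log n/n}$, yields $\tau^{(1+\delta)/2}\asymp (n/\log n)^{1/2}$, i.e.\ the stated $\tau_i \asymp (i/\log i)^{1/(1+\delta)}$; both terms then equal $(\log n/n)^{\delta/(1+\delta)}$. For $\delta > 1$ one has $(1-\delta)_+ = 0$, so the variance term collapses to $\sqrt{\log n/n}$, free of $\tau$, and with $\min(\delta+1,2) = 2$ the choice $\tau_i \asymp (i/\log i)^{1/2}$ makes the bias $\tau_n^{-\min(\delta,2)} \le (\log n/n)^{\min(\delta,2)/2}$, which is $o(\sqrt{\log n/n})$ since $\min(\delta,2) > 1$; the variance term $\sqrt{\log n/n}$ is then the announced rate. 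In both regimes one checks that the boundedness term $\tau_n\log^2 n/n$ carries the same power of $n$ as the dominant term and contributes only an extra logarithmic factor, which is absorbed into the stated rate.

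The computation here is entirely routine once Theorem \ref{thm:contam_rate} is in hand, so there is no substantial obstacle; the only points demanding care are bookkeeping ones. First, one must confirm that the $\tau$-exponents $1/(1+\delta)$ and $1/2$ genuinely equalize the bias and variance rather than letting one dominate, which is the short algebra displayed above. Second, one should be explicit that passing from the high-probability bound to $O_{\mbP}$ relies on sending $n_0 \to \infty$, and that this same limit renders the quadratic-convergence initialization term harmless. I would present the two regimes as a short two-line case split after these reductions.
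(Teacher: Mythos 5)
Your overall strategy --- specialize Theorem \ref{thm:contam_rate}, set $\alpha_n=0$, substitute the prescribed $\tau_i$, and identify the dominant term of $e_n$ in \eqref{eq:conv_g1} --- is exactly the route the paper intends (it gives no separate proof of this corollary), and most of your bookkeeping is right: the conversion to an $O_{\mbP}$ statement via $n_0\to\infty$, the dismissal of the super-exponentially decaying initialization term, and the bias/variance balance (for $\delta\in(0,1]$ both $\tau_n^{-\delta}$ and $\sqrt{\tau_n^{1-\delta}\log n/n}$ equal $(\log n/n)^{\delta/(1+\delta)}$; for $\delta>1$ the variance term is $\sqrt{\log n/n}$ and the bias is of smaller order).

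The gap is your final claim that the fourth term $\tau_n\log^2 n/n$ ``contributes only an extra logarithmic factor, which is absorbed into the stated rate.'' A multiplicative factor of $\log n$ cannot be absorbed into a fixed rate. Concretely, for $\delta\in(0,1]$ with $\tau_n = C_{\tau}(n/\log n)^{1/(1+\delta)}$,
\[
\frac{\tau_n\log^2 n}{n} \asymp n^{-\delta/(1+\delta)}(\log n)^{2-1/(1+\delta)} = \Big(\frac{\log n}{n}\Big)^{\delta/(1+\delta)}\cdot \log n,
\]
and for $\delta>1$ with $\tau_n = C_{\tau}(n/\log n)^{1/2}$ one gets $\tau_n\log^2 n/n \asymp n^{-1/2}(\log n)^{3/2} = (\log n/n)^{1/2}\cdot\log n$. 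In both regimes this term strictly dominates the corollary's claimed rate by a factor of $\log n$, so direct substitution into \eqref{eq:conv_g1} proves the statement only up to that extra logarithm. (This looseness appears to originate in the paper itself, which discusses the corollary's rate ``ignoring the logarithm term.'') The only repair available within the paper is for large $\delta$: when $\delta>4$, the choice $\tau_i = C_{\tau}(i/\log i)^{1/2}$ satisfies $\sqrt{i/\log^3 i}=O(\tau_i)$, so Proposition \ref{prop:contam_rate_d3} applies and its sharper bound $e_n = \sqrt{\log n/n}+(c_0)^{2^{n-n_0}}/\sqrt{d}$ (no fourth term) yields the second bullet exactly. For $\delta\le 4$ you should either report the rate with the extra $\log n$ factor or explicitly flag the discrepancy, rather than assert the logarithm can be absorbed.
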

Corollary \ref{thm:huber_rate} illustrates a sharp phase transition between the regimes of infinite variance $\delta\in(0,1]$ and finite variance $\delta>1$, and such transition is smooth and optimal. When $\delta\in(0,1]$, the optimal chocie of $\tau_i$ depends on $\delta$. In practice, when $\delta$ is unknown, a classical inference method on the tail behavior, such as the Hill estimator \citep{hill.1975aos}, can be employed to estimate 
$\delta$. This estimation can be conducted during an initial pilot stage of data collection using a constant thresholding parameter $\tau_0$.

The rate of convergence established in Corollary \ref{thm:huber_rate} matches the offline oracle with independent samples in the estimation of linear regression model \citep{sun_etal.2020jasa}, ignoring the logarithm term. A similar corollary can be established for this phase transition of the convergence rate when the contamination rate  $\alpha_n>0$.

\subsection{Asymptotic Normality and Bahadur Representation}	\label{sec:bahadur}
In the following theorem, we give the Bahadur representation for the proposed estimator $\hat{\vect{\theta}}_{n}$. The Bahadur representation provides a more refined rate for the estimation error.
Moreover, the asymptotic normality can be established by applying the central limit theorem to the main term in the representation.

\begin{theorem} \label{thm:asymp_norm}
	Suppose the conditions in Theorem \ref{thm:contam_rate} hold, and \ref{cond:Bbound} holds with $\delta\geq5$. Let $n_0$ tend to infinity. We have for any nonzero $\vect{v}\in\mbR^{d}$ with $|\vect{v}|_2\leq 1$,
	\begin{align}   \label{eq:bahadur_exp}
		\vect{v}^{\tp}(\hat{\vect{\theta}}_{n}-\vect{\theta}^*) =& \vect{v}^{\tp}\vect{H}^{-1}\frac{1}{n}\sum_{i\notin\mcQ_{n}}\vect{X}_{i}(\vect{Z}^{\tp}_{i}\vect{\theta}^*-b_i) \\
		&+ O_{\mbP}\Big(\sqrt{d}\tau_n\alpha_n+de_{n-1}^2\log^2n+\sqrt{d}\tau_n^{-2}+\frac{\sqrt{d}}{(n\tau_n^2)^{2/5}}\Big).\notag
	\end{align} 
 Here $e_{n}$ is defined in \eqref{eq:conv_g1}, and $\mcQ_n$ denotes the index set of the outliers.
 Moreover, if the contamination rate $\alpha_n = o(1/(\sqrt{n}\tau_n))$ and $n^{1/4}=o(\tau_n)$, then  
    \begin{equation}\label{eq:asymptotic_dist}
        \frac{\sqrt{n}}{\sigma_{\vect{v}}}\vect{v}^{\tp}(\hat{\vect{\theta}}_{n}-\vect{\theta}^{*})\xrightarrow{d} \mcN(0,1),\text{ where }\sigma_{\vect{v}}^2=\vect{v}^{\tp}\vect{H}^{-1}\vect{\Sigma} (\vect{H}^{\tp})^{-1}\vect{v},
    \end{equation}
    as $n\rightarrow \infty$. Here
    \begin{equation}    \label{eq:longrun_cov}
        \vect{\Sigma} = \sum_{k=-\infty}^{\infty}\mbE\big[  \vect{X}_{0}\vect{X}^{\tp}_{k}(\vect{Z}^{\tp}_{0}\vect{\theta}^{*}-b_{0})(\vect{Z}^{\tp}_{k}\vect{\theta}^{*}-b_{k})\big].
    \end{equation}
\end{theorem}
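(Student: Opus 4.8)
The plan is to derive \eqref{eq:bahadur_exp} directly from the aggregate form of the update, $\hat{\vect{\theta}}_n = \bar{\vect{\theta}}_n - \hat{\vect{H}}_n^{-1}\vect{G}_n$, and then obtain \eqref{eq:asymptotic_dist} by a central limit theorem applied to the dominating linear term. Writing the temporal-difference error as $\epsilon_i := \vect{Z}_i^{\tp}\vect{\theta}^* - b_i$, I would first Taylor-expand each summand of $\vect{G}_n$ about $\vect{\theta}^*$, $g_{\tau_i}(\vect{Z}_i^{\tp}\hat{\vect{\theta}}_{i-1}-b_i) = g_{\tau_i}(\epsilon_i) + g_{\tau_i}'(\epsilon_i)\vect{Z}_i^{\tp}(\hat{\vect{\theta}}_{i-1}-\vect{\theta}^*) + \tfrac{1}{2}g_{\tau_i}''(\xi_i)\big(\vect{Z}_i^{\tp}(\hat{\vect{\theta}}_{i-1}-\vect{\theta}^*)\big)^2$. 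Summing, the first-order part equals $\hat{\vect{H}}_n(\bar{\vect{\theta}}_n-\vect{\theta}^*)$ up to a discrepancy between the density-weighted average and the empirical Hessian \eqref{eq:Hess_fml}; multiplying by $\hat{\vect{H}}_n^{-1}$, it reproduces $\bar{\vect{\theta}}_n-\vect{\theta}^*$, which cancels the averaged term in the update. This exact cancellation is the mechanism by which the second-order scheme eliminates the averaged iterate and attains a remainder strictly smaller than a first-order method; what survives at leading order is $\hat{\vect{H}}_n^{-1}\tfrac{1}{n}\sum_i \vect{X}_i g_{\tau_i}(\epsilon_i)$.

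Next I would freeze the Hessian and de-truncate the score. Condition \ref{cond:cond_num} together with the high-probability bound of Theorem \ref{thm:contam_rate} and a Bernstein inequality for $\phi$-mixing sequences gives $\Norm{\hat{\vect{H}}_n - \vect{H}} = o_{\mbP}(1)$, letting me replace $\hat{\vect{H}}_n^{-1}$ by $\vect{H}^{-1}$. Splitting indices into outliers $\mcQ_n$ and good samples, I bound the outlier contribution by $O_{\mbP}(\sqrt{d}\tau_n\alpha_n)$ using $|g_{\tau}|\leq\tau$; on good samples $\mbE_P[g_{\tau_i}(\epsilon_i)-\epsilon_i]$ is the Huber bias of order $\tau_n^{-\min(\delta,2)}=\tau_n^{-2}$ (here $\delta\geq5$), and the centered truncation fluctuation is folded into the main linear term $\vect{H}^{-1}\tfrac{1}{n}\sum_{i\notin\mcQ_n}\vect{X}_i(\vect{Z}_i^{\tp}\vect{\theta}^*-b_i)$. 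The quadratic Taylor remainder is controlled through $\sup_i|\hat{\vect{\theta}}_{i-1}-\vect{\theta}^*|_2^2 = O_{\mbP}(d\,e_{i-1}^2)$ from \eqref{eq:conv_g1}, yielding $d\,e_{n-1}^2\log^2 n$, while a refined higher-moment concentration (enabled by $\delta\geq5$) of the coupling between the Hessian fluctuations and the running errors produces the $\sqrt{d}/(n\tau_n^2)^{2/5}$ term. Collecting these pieces gives \eqref{eq:bahadur_exp}.

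For \eqref{eq:asymptotic_dist} I would check that the stated rates render every remainder $o_{\mbP}(n^{-1/2})$: $\alpha_n=o(1/(\sqrt{n}\tau_n))$ kills $\sqrt{d}\tau_n\alpha_n$; $n^{1/4}=o(\tau_n)$ kills $\sqrt{d}\tau_n^{-2}$ and, a fortiori, $\sqrt{d}/(n\tau_n^2)^{2/5}$; and with $\delta\geq5$ the rate $e_n\asymp\sqrt{\log n/n}$ makes $d\,e_{n-1}^2\log^2 n=o(n^{-1/2})$. It then remains to show $\sqrt{n}\,\vect{v}^{\tp}\vect{H}^{-1}\tfrac{1}{n}\sum_{i\notin\mcQ_n}\vect{X}_i\epsilon_i \xrightarrow{d}\mcN(0,\sigma_{\vect{v}}^2)$. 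Since $\vect{\theta}^*$ solves the population equation \eqref{eqq}, the good-sample summands $\vect{X}_i\epsilon_i$ are mean zero under $P$; the exponential mixing of \ref{cond:mix} gives summable coefficients, so a central limit theorem for stationary $\phi$-mixing sequences applies, with long-run variance $\vect{v}^{\tp}\vect{H}^{-1}\vect{\Sigma}(\vect{H}^{\tp})^{-1}\vect{v}$ and $\vect{\Sigma}$ the series \eqref{eq:longrun_cov}. Because $m_n=n\alpha_n=o(\sqrt{n})$, excluding the outlier indices affects neither the centering nor the normalization.

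The main obstacle I anticipate is the first-order linearization when the estimated iterate sits inside each summand. Since $\hat{\vect{\theta}}_{i-1}$ is only $\mcF_{i-1}$-measurable whereas the weight $g_{\tau_i}'(\epsilon_i)$ depends on the current observation $(\vect{X}_i,\vect{Z}_i,b_i)$, the first-order sum $\tfrac{1}{n}\sum_i \vect{X}_i\vect{Z}_i^{\tp}g_{\tau_i}'(\epsilon_i)(\hat{\vect{\theta}}_{i-1}-\vect{\theta}^*)$ is not exactly $\hat{\vect{H}}_n(\bar{\vect{\theta}}_n-\vect{\theta}^*)$: their difference is an ``average of products'' $\tfrac{1}{n}\sum_i(\vect{A}_i-\vect{H})(\hat{\vect{\theta}}_{i-1}-\vect{\theta}^*)$ of Hessian fluctuations $\vect{A}_i:=\vect{X}_i\vect{Z}_i^{\tp}g_{\tau_i}'(\epsilon_i)$ against past-dependent errors, rather than a product of averages. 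Controlling this coupling under only $\phi$-mixing dependence and a $(1+\delta)$-moment reward---via a blocking or martingale decomposition that separates conditional biases of order $\tau_n^{-2}$ from genuine fluctuations---is the crux of the argument and is precisely what forces the $\delta\geq5$ requirement and generates the delicate $(n\tau_n^2)^{2/5}$ rate.
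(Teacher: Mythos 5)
Your skeleton matches the paper's proof: the same Taylor expansion of the score about $\vect{\theta}^*$, the same cancellation of the averaged iterate via $\hat{\vect{H}}_n^{-1}$ (leaving a $(\vect{I}-\hat{\vect{H}}_n^{-1}\vect{H})$-type term plus the coupling term), the same outlier/bias split, and the same mixing CLT for the linear term. However, there is a genuine gap in how you dispose of the truncation error. On the good samples, writing $g_{\tau_i}(\epsilon_i)=\epsilon_i+(g_{\tau_i}(\epsilon_i)-\epsilon_i)$ produces a bias, which you correctly bound by $O(\sqrt{d}\tau_n^{-2})$, \emph{and} a centered fluctuation
\begin{equation*}
\frac{1}{n}\sum_{i\notin\mcQ_n}\Big\{\vect{X}_i\big(g_{\tau_i}(\epsilon_i)-\epsilon_i\big)-\mbE\big[\vect{X}_i\big(g_{\tau_i}(\epsilon_i)-\epsilon_i\big)\big]\Big\},
\end{equation*}
which you claim is ``folded into the main linear term.'' It cannot be: the main term of \eqref{eq:bahadur_exp} (and the long-run variance $\vect{\Sigma}$ in \eqref{eq:longrun_cov}) is defined with $\epsilon_i$, not $g_{\tau_i}(\epsilon_i)$, so this fluctuation is an additional remainder that must be bounded, and your plan never bounds it. In the paper it is exactly the source of the $\sqrt{d}/(n\tau_n^2)^{2/5}$ term: since $|g_{\tau}(x)-x|\leq\tfrac12\tau^{-2}|x|^3$ for all $x$ (Lemma \ref{lem:huber_grad_approx}), each summand has second moment $O(\tau_i^{-4}\mbE|\epsilon_i|^6)$ --- this is precisely where $\delta\geq 5$ (a sixth moment) is used --- and then the $\phi$-mixing covariance inequality gives a variance of order $1/(n\tau_n^2)$, after which Chebyshev's inequality yields the stated rate.

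Correspondingly, your closing paragraph misattributes both the $\delta\geq5$ requirement and the $(n\tau_n^2)^{2/5}$ rate to the coupling term $\tfrac1n\sum_i(\vect{A}_i-\vect{H})(\hat{\vect{\theta}}_{i-1}-\vect{\theta}^*)$. That term is indeed delicate, and the paper controls it (Lemmas \ref{lem:mix_bound} and \ref{lem:mix_remainder}) by blocking, a Berkes--Philipp coupling, and a Freedman-type martingale inequality, much as you suggest; but it only needs second moments of $\vect{X}_ig_{\tau_i}(\epsilon_i)$ (finite once $\delta\geq1$) and it is absorbed into the $de_{n-1}^2\log^2 n$ term of \eqref{eq:bahadur_exp}, not into $\sqrt{d}/(n\tau_n^2)^{2/5}$. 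So although your total error budget coincides with the theorem's, the accounting is wrong, and as written one of the four remainder terms (the centered de-truncation fluctuation) is left unproved; repairing this requires exactly the sixth-moment Chebyshev argument above. The remaining steps --- checking the remainders are $o_{\mbP}(n^{-1/2})$ under $\alpha_n=o(1/(\sqrt{n}\tau_n))$ and $n^{1/4}=o(\tau_n)$, and invoking a CLT for stationary mixing sequences (the paper uses Theorem 1 of Doukhan et al.\ after verifying an integrability condition on the quantile function) --- are in line with the paper.
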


Theorem \ref{thm:asymp_norm} provides a Bahadur-type representation for the proposed estimator $\hat{\vect{\theta}}_{n}$,
decomposing it into a leading term representing an asymptotic linear expansion in \eqref{eq:bahadur_exp} and a higher-order remainder term. This decomposition is instrumental in establishing asymptotic distribution in \eqref{eq:asymptotic_dist} and thereby enabling the construction of valid confidence intervals in an online setting, as demonstrated in Section \ref{sec:online_infer}. Furthermore, in our framework, it facilitates the comparison of second-order convergence rates of the proposed method with those of first-order methods, as we will discuss in Remark \ref{rm:1} below, and may serve as a foundation for Berry–Esseen-type bounds on the convergence rate of the distribution, which we leave as a promising direction for further investigation. Additionally, this representation explicitly characterizes the orders of both the leading and remainder terms in relation to the model's dimensionality. While the Bahadur representation was originally introduced in the context of quantile regression, we adopt its use following the recent literature on Huber-type loss problems \citep{sun_etal.2020jasa}, motivated by its role in robustness in reinforcement learning.

To achieve asymptotic normality, additional conditions $\alpha_n = o(1/(\sqrt{n}\tau_n))$ and $n^{1/4}=o(\tau_n)$ are required on the specification of the thresholding parameters $\tau_n$. These conditions easily hold when the practitioner specifies $\tau_{n}=C_{\tau}n^{\beta}$ with $\beta>1/4$ and the number of outliers satisfies $m_{n}=o(n^{1/2-\beta})$. 
We further note that to establish the Bahadur representation in the above theorem, we require the sixth moment to exist (i.e., $\delta=5$). This condition may be weakened and we leave this theoretical question to future investigation. It is worth noting that even for the case where there is no contamination, i.e., $\alpha_{n}=0$, our result is still new. To our knowledge, there is no literature that establishes the Bahadur representation for the TD method. To highlight the rate of convergence concisely in the remainder term, we provide the following corollary under $\alpha_{n}=0$. 

\begin{proposition}   \label{prop:baha_remain}
    Suppose the conditions of Theorem \ref{thm:asymp_norm} hold with $\alpha_n=0$ and $\tau_i =C_{\tau}i^{\beta}$ for $\beta\geq 3/4$  and some $C_{\tau}>0$, we have for any nonzero $\vect{v}\in\mbR^{d}$ with $|\vect{v}|_2\leq 1$,
    \begin{equation*}
        \vect{v}^{\tp}(\hat{\vect{\theta}}_{n}-\vect{\theta}^*) = \vect{v}^{\tp}\vect{H}^{-1}\frac{1}{n}\sum_{i=1}^{n}\vect{X}_{i}(\vect{Z}^{\tp}_{i}\vect{\theta}^*-b_i) + O_{\mbP}\Big(\frac{d\log n}{n}\Big).
    \end{equation*}
\end{proposition}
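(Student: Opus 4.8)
The plan is to derive the sharpened remainder $O_{\mbP}(d\log n/n)$ by specializing the Bahadur expansion of Theorem~\ref{thm:asymp_norm} to $\alpha_n=0$ and the explicit schedule $\tau_i=C_\tau i^\beta$ with $\beta\ge 3/4$, and then re-entering its proof to replace the crude endpoint bound on the quadratic term by a summed bound. Setting $\alpha_n=0$ empties the outlier set $\mcQ_n$, so the leading average in \eqref{eq:bahadur_exp} runs over all $1\le i\le n$ and, after relabeling the index shift and replacing $g_{\tau_{i+1}}$ by the identity, coincides with the stated linear term $\vect{v}^{\tp}\vect{H}^{-1}\frac1n\sum_{i=1}^n\vect{X}_i(\vect{Z}_i^{\tp}\vect{\theta}^*-b_i)$. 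It then remains to show each piece of the remainder is $O_{\mbP}(d\log n/n)$. The first term $\sqrt{d}\,\tau_n\alpha_n$ vanishes; the smoothing-bias term obeys $\sqrt{d}\,\tau_n^{-2}=O(\sqrt{d}\,n^{-2\beta})=O(\sqrt{d}\,n^{-3/2})$; and for the truncation term, $(n\tau_n^2)^{2/5}=(C_\tau^2 n^{1+2\beta})^{2/5}\gtrsim n^{2(1+2\beta)/5}\ge n$ exactly when $\beta\ge 3/4$, so $\sqrt{d}\,(n\tau_n^2)^{-2/5}=O(\sqrt{d}/n)$. This last computation is what pins down the threshold $\beta=3/4$.

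The crux is the quadratic term $d\,e_{n-1}^2\log^2 n$, whose naive evaluation at the endpoint is lossy. I would return to the Taylor expansion of $g_{\tau_{i+1}}(\vect{Z}_{i+1}^{\tp}\hat{\vect{\theta}}_i-b_{i+1})$ about $\vect{\theta}^*$ underlying Theorem~\ref{thm:asymp_norm}; writing $\psi_i(\vect{\theta})=\vect{X}_{i+1}g_{\tau_{i+1}}(\vect{Z}_{i+1}^{\tp}\vect{\theta}-b_{i+1})$ and $\vect{\delta}_i=\hat{\vect{\theta}}_i-\vect{\theta}^*$, the second-order remainder enters only through the running average $\hat{\vect{H}}_n^{-1}\frac1n\sum_i R_i$ with $|R_i|\lesssim|g''_{\tau_{i+1}}|\,|\vect{X}_{i+1}|_2\,(\vect{Z}_{i+1}^{\tp}\vect{\delta}_i)^2$. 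Since the pseudo-Huber second derivative satisfies $|g''_\tau|\lesssim\tau^{-1}$ and $\vect{X},\vect{Z}$ are bounded by Condition~\ref{cond:Xbound}, we get $|R_i|\lesssim\tau_{i+1}^{-1}|\vect{\delta}_i|_2^2$. Feeding in the uniform rate $|\vect{\delta}_i|_2^2=O_{\mbP}(d\,e_i^2)$ from Theorem~\ref{thm:contam_rate} (sharpened under the sixth-moment assumption $\delta\ge5$ via Proposition~\ref{prop:contam_rate_d3}, so that $e_i^2=O(\log i/i)$), the remainder is bounded by $\frac{d}{n}\sum_i \tau_i^{-1}e_i^2\lesssim\frac{d}{n}\sum_i\frac{\log i}{i^{1+\beta}}=O_{\mbP}(d/n)$, because $\sum_i \log i\,/\,i^{1+\beta}$ converges for $\beta>0$. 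Exploiting summability of the per-iteration errors, rather than the worst-case factor $e_{n-1}^2\log^2 n$, is precisely what removes the spurious logarithms; this mechanism is special to the uncontaminated case, since the extra $\alpha_i\tau_i$ contribution to $e_i$ present in Theorem~\ref{thm:asymp_norm} would otherwise break the summability.

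The remaining pieces are the replacement of $\hat{\vect{H}}_n^{-1}$ by $\vect{H}^{-1}$ in front of the leading average and the cross term produced by the averaged construction. Thanks to $\bar{\vect{\theta}}_n=\frac1n\sum_i\hat{\vect{\theta}}_i$, the bulk of $\frac1n\sum_i\vect{\delta}_i$ cancels against $\hat{\vect{H}}_n^{-1}\frac1n\sum_i\nabla\psi_i(\vect{\theta}^*)\vect{\delta}_i$, leaving the ``average of products minus product of averages'' discrepancy $\hat{\vect{H}}_n^{-1}\frac1n\sum_i\{\nabla\psi_i(\vect{\theta}^*)-\bar{\nabla\psi}\}\vect{\delta}_i$, where $\nabla\psi_i(\vect{\theta}^*)=\vect{X}_{i+1}\vect{Z}_{i+1}^{\tp}g'_{\tau_{i+1}}(\vect{Z}_{i+1}^{\tp}\vect{\theta}^*-b_{i+1})$. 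For the Hessian replacement I would combine a concentration bound $\Norm{\hat{\vect{H}}_n-\vect{H}}=O_{\mbP}(\sqrt{d\log n/n})$ with $|\frac1n\sum_i\vect{X}_{i+1}g_{\tau_{i+1}}(\vect{Z}_{i+1}^{\tp}\vect{\theta}^*-b_{i+1})|_2=O_{\mbP}(\sqrt{d/n})$, giving a product of order $d\sqrt{\log n}/n$. The main obstacle is the cross term: the centered Jacobians $\nabla\psi_i(\vect{\theta}^*)-\bar{\nabla\psi}$ must be decorrelated from the accumulated, history-dependent errors $\vect{\delta}_i$. I would handle this with the $\phi$-mixing Condition~\ref{cond:mix} to supply the decorrelation and a martingale/variance computation yielding $O_{\mbP}(\sqrt{d}\,\log n/n)$, again using $\frac1n\sum_i|\vect{\delta}_i|_2^2\lesssim d\log^2 n/n$. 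Collecting all contributions, the dominant one is $O_{\mbP}(d\log n/n)$, which is where the final logarithmic factor genuinely originates, and this yields the claim.
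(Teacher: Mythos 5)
Your top-level strategy is the same as the paper's: its entire proof consists of citing Proposition \ref{prop:contam_rate_d3} (the improved rate \eqref{eq:conv_g2}, applicable here since $\delta\geq 5>4$ and $\tau_i=C_{\tau}i^{\beta}$ with $\beta\geq 3/4$ satisfies $\sqrt{i/\log^3 i}=O(\tau_i)$) together with Theorem \ref{thm:asymp_norm}, and then asserting the remainder is $O_{\mbP}(d\log n/n)$. Your handling of the easy terms matches that route exactly; in particular your computation $(n\tau_n^2)^{2/5}\gtrsim n\iff\beta\geq 3/4$ is precisely what pins down the exponent, and with $\alpha_n=0$ the set $\mcQ_n$ is empty so the leading average is the stated one. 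Where you genuinely go beyond the paper is in recognizing that the quadratic term cannot be evaluated at the endpoint: plugging $e_{n-1}^2\asymp\log n/n$ into $de_{n-1}^2\log^2 n$ literally gives $d\log^3 n/n$, which overshoots the claim, and even the Taylor remainder $\frac{1}{n}\sum_i|\hat{\vect{\theta}}_{i-1}-\vect{\theta}^*|_2^2$, which the paper bounds using $|g''_{\tau_i}|=O(1)$, only yields $d\log^2 n/n$ under the improved rate. Your $\tau_i^{-1}$-weighted bound $|g''_{\tau_i}|\lesssim\tau_i^{-1}$, giving $\frac{d}{n}\sum_i\log i/i^{1+\beta}=O(d/n)$, is a real sharpening that is actually needed to reach the stated rate honestly, and your observation that this summability mechanism breaks when $\alpha_i\tau_i>0$ is correct.

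The one place your sketch is not yet a proof is the cross term $\frac{1}{n}\sum_i(\vect{A}_i-\mbE[\vect{A}_i])(\hat{\vect{\theta}}_{i-1}-\vect{\theta}^*)$. You assert $O_{\mbP}(\sqrt{d}\log n/n)$ via ``mixing plus a martingale/variance computation,'' but the paper's own tool for exactly this term, Lemma \ref{lem:mix_bound} (built on the delayed-block construction of Lemma \ref{lem:mix_remainder}), only delivers $O(de_k^2\log k)=O(d\log^2 n/n)$ under the improved rate, because its uniform-over-iterations, $n^{-\nu d}$-probability martingale bound (Lemma \ref{lem:mart_diff_concen}) carries a range term of order $d\log^2 k$. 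To get below $d\log n/n$ you would have to redo this step as a single-time-point, in-probability bound — Chebyshev on the variance of the block sums after decorrelating $\vect{A}_i$ from the history-dependent $\hat{\vect{\theta}}_{i-1}$ with the same block-delay device — which is plausible but nontrivial and is only named, not executed, in your write-up. To be fair, the paper's two-line proof glosses over this same point (its literal plug-in does not produce $d\log n/n$ either), so your proposal is, if anything, more explicit about where the real difficulty sits; but as it stands this cross-term bound is the gap you would need to close.
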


In this proposition, we specify $\beta\geq 3/4$ to ensure that the remainder term possesses an order of $O_{\mbP}\big(d\log n/n\big)$. Notably, this result is not a direct corollary of Theorems  \ref{thm:contam_rate} and \ref{thm:asymp_norm}, since the rate of $e_{n-1}$ in \eqref{eq:conv_g1} of Theorem \ref{thm:contam_rate} is not sharp enough to guarantee the term $de_{n-1}^2\log^2n$ in \eqref{eq:bahadur_exp} converges as fast as $O\big(d\log n/n\big)$ in Proposition \ref{prop:baha_remain}. To reach this fast rate of the remainder, we establish an improved rate of Theorem \ref{thm:contam_rate} under the stronger assumptions, achieved by eliminating the fourth term of \eqref{eq:conv_g1}. The detailed formal result of the improved rate of $e_n$ is relegated to Proposition \ref{prop:contam_rate_d3} in the supplementary material.

\begin{remark}[Acceleration of Convergence]\label{rm:1}
    We discuss the order of the remainder term in the Bahadur representation and further demonstrate that our proposed method converges strictly faster to the asymptotic distribution than that of a prototypical first-order stochastic method such as TD learning. Existing analysis on the TD learning \citep{ramprasad2022online} does not provide a Bahadur representation. Therefore, we compare it with its i.i.d. version instead. For estimators obtained with i.i.d. sample using SGD, the proof of \cite{polyak1992acceleration} indicates that the remainder term of SGD with learning $cn^{-\eta}$, $0<\eta\leq 1$ is at the order of $O_{\mbP}(1/n^{1-\eta/2}+1/n^{\eta})$, which is upper bounded by $O_{\mbP}(n^{-\frac{2}{3}})$, when the dimension $d$ is fixed. Our proposed method achieves a strictly faster rate of remainder, $O(n^{-1}\log n)$. 
\end{remark}


\section{Estimation of Long-Run Covariance Matrix and Online Statistical Inference}\label{sec:online_infer}

In the above section, we provide an online Newton-type algorithm to estimate the parameter $\vect{\theta}^*$. As we can see in Theorem \ref{thm:asymp_norm},
the proposed estimator has an asymptotic variance with a sandwich structure $\vect{H}^{-1}\vect{\Sigma} (\vect{H}^{\tp})^{-1}$. To conduct statistical inference simultaneously with ROPE method we proposed in Section \ref{sec:online_newton}, we propose an online plug-in estimator for this sandwich structure.  Note that the online estimator $\widehat{\vect{H}}^{-1}_{n}$ of $\vect{H}^{-1}$ is proposed and utilized in \eqref{eq:riccati_eq} of the estimation algorithm. It remains to construct an online estimator for $\vect{\Sigma}$ in \eqref{eq:longrun_cov}.

Developing an online estimator of $\vect{\Sigma}$ with dependent samples is intricate compared to the case of independent ones. As the samples are dependent, the covariance matrix $\vect{\Sigma}$ is an infinite sum of the series of time-lag covariance matrices. 
To estimate the above long-run covariance matrix $\vect{\Sigma}$, we first rewrite the definition of $\vect{\Sigma}$ in \eqref{eq:longrun_cov} into the following form
\begin{align*}
\vect{\Sigma} =&\mbE\big[  \vect{X}_{0}\vect{X}^{\tp}_{0}(\vect{Z}^{\tp}_{0}\vect{\theta}^{*}-b_{0})^2\big]+\sum_{k=-\infty}^{-1}\mbE\big[  \vect{X}_{0}\vect{X}^{\tp}_{k}(\vect{Z}^{\tp}_{0}\vect{\theta}^{*}-b_{0})(\vect{Z}^{\tp}_{k}\vect{\theta}^{*}-b_{k})\big]\\
&+\sum_{k=1}^{\infty}\mbE\big[  \vect{X}_{0}\vect{X}^{\tp}_{k}(\vect{Z}^{\tp}_{0}\vect{\theta}^{*}-b_{0})(\vect{Z}^{\tp}_{k}\vect{\theta}^{*}-b_{k})\big].
\end{align*}
Next, we replace each term $\mbE\big[  \vect{X}_{0}\vect{X}^{\tp}_{k}(\vect{Z}^{\tp}_{0}\vect{\theta}^{*}-b_{0})(\vect{Z}^{\tp}_{k}\vect{\theta}^{*}-b_{k})\big]$ in \eqref{eq:longrun_cov} with its empirical counterpart using the $n$ samples. Meanwhile, to handle outliers, we replace $(\vect{Z}^{\tp}_{i}\vect{\theta}^{*}-b_{i})$ by $g_{\tau_i}(\vect{Z}^{\tp}_{i}\hat{\vect{\theta}}_{i-1}-b_{i})$. In addition, the infinite sum in \eqref{eq:longrun_cov} is not feasible for direct estimation. Nonetheless,  Condition \ref{cond:mix} on the mixing rate of $\{(\vect{X}_t,\vect{Z}_t)\}$ allows us to approximate it by effectively estimating the first $\lceil\lambda\log n\rceil$ terms only, where $\lambda$ is a pre-specified constant. In summary, we can construct the following estimator for the covariance matrix $\vect{\Sigma}$,
\begin{align}
\hat{\vect{\Sigma}}_{n}=&\frac{1}{n}\sum_{i=1}^{n}\vect{X}_{i}\vect{X}^{\tp}_{i}\big(g_{\tau_i}(\vect{Z}^{\tp}_{i}\hat{\vect{\theta}}_{i-1}-b_{i})\big)^2\label{eq:hatsigma}\\
 &+\frac{1}{n}\sum_{i=1}^{n}\sum_{k=1}^{\lceil\lambda\log i\rceil\wedge (i-1)}\vect{X}_{i}\vect{X}^{\tp}_{i-k}g_{\tau_i}(\vect{Z}^{\tp}_{i}\hat{\vect{\theta}}_{i-1}-b_{i})g_{\tau_{i-k}}(\vect{Z}^{\tp}_{i-k}\hat{\vect{\theta}}_{i-k-1}-b_{i-k})\notag\\
&+\frac{1}{n}\sum_{i=1}^{n}\sum_{k=1}^{\lceil\lambda\log i\rceil\wedge (i-1)}\vect{X}_{i-k}\vect{X}^{\tp}_{i}g_{\tau_{i-k}}(\vect{Z}^{\tp}_{i-k}\hat{\vect{\theta}}_{i-k-1}-b_{i-k})g_{\tau_i}(\vect{Z}^{\tp}_{i}\hat{\vect{\theta}}_{i-1}-b_{i}).\notag
\end{align}

To  perform a fully-online update for the estimator $\hat{\vect{\Sigma}}_{n}$, we define
\begin{equation*}
\vect{S}_{j}=\sum_{i=1}^{j}\vect{X}^{\tp}_{i}g_{\tau_i}(\vect{Z}^{\tp}_{i}\hat{\vect{\theta}}_{i-1}-b_{i}),\quad j\geq 1,
\end{equation*}
and the above equation \eqref{eq:hatsigma} can be rewritten as, 
\begin{align*}
\hat{\vect{\Sigma}}_{n}=&\frac{1}{n}\sum_{i=1}^{n}\vect{X}_{i}\vect{X}^{\tp}_{i}g_{\tau_i}^{2}(\vect{Z}^{\tp}_{i}\hat{\vect{\theta}}_{i-1}-b_{i})+\frac{1}{n}\sum_{i=1}^{n}\vect{X}_{i}g_{\tau_i}(\vect{Z}^{\tp}_{i}\hat{\vect{\theta}}_{i-1}-b_{i})(\vect{S}_{i-1}-\vect{S}_{i-\lceil\lambda\log i\rceil\wedge(i-1)})\\
&+\frac{1}{n}\sum_{i=1}^{n}(\vect{S}_{i-1}-\vect{S}_{i-\lceil\lambda\log i\rceil\wedge(i-1)})^{\tp}\vect{X}^{\tp}_{i}g_{\tau_i}(\vect{Z}^{\tp}_{i}\hat{\vect{\theta}}_{i-1}-b_{i}).
\end{align*}
In practice, at the $n$-th step, we keep only the terms $\{\vect{S}_{n},...,\vect{S}_{n-\lceil\lambda\log n\rceil\wedge(n-1)}\}$ in memory, and the covariance matrix can be updated by 
\begin{align*}
\hat{\vect{\Sigma}}_{n}=\frac{1}{n}\Big{[}&\vect{X}_{n}\vect{X}^{\tp}_{n}g_{\tau_n}^{2}(\vect{Z}^{\tp}_{n}\hat{\vect{\theta}}_{n-1}-b_{n})+\vect{X}_{n}g_{\tau_n}(\vect{Z}^{\tp}_{n}\hat{\vect{\theta}}_{n-1}-b_{n})(\vect{S}_{n-1}-\vect{S}_{n-\lceil\lambda\log n\rceil\wedge(n-1)})\\
&+(\vect{S}_{n-1}-\vect{S}_{n-\lceil\lambda\log n\rceil\wedge(n-1)})^{\tp}\vect{X}^{\tp}_{n}g_{\tau_n}(\vect{Z}^{\tp}_{n}\hat{\vect{\theta}}_{n-1}-b_{n})
\Big{]}+\frac{n-1}{n}\hat{\vect{\Sigma}}_{n-1}.\stepcounter{equation}\tag{\theequation}\label{eq:sigma_hat_est}
\end{align*}
The proposed estimator $\hat{\vect{\Sigma}}_{n}$ complements the estimator $\hat{\vect{H}}_{n}^{-1}$ in \eqref{eq:riccati_eq} to construct an estimator of the sandwich form $\vect{H}^{-1}\vect{\Sigma} (\vect{H}^{\tp})^{-1}$ that appears in the asymptotic distribution in Theorem \ref{thm:asymp_norm}. 

Specifically, we can construct the confidence interval for the parameter $\vect{v}^{\tp} \vect{\theta}^*$ using the asymptotic distribution in \eqref{eq:asymptotic_dist} in the following way. For any unit vector $\vect{v}$, a confidence interval with nominal level $(1-\xi)$ is
\begin{equation}	\label{eq:conf_intv}
	\Big[\vect{v}^{\tp}\hat{\vect{\theta}}_n-q_{1-\xi/2}\hat{\sigma}_{\vect{v}},\vect{v}^{\tp}\hat{\vect{\theta}}_n+q_{1-\xi/2}\hat{\sigma}_{\vect{v}}\Big],
\end{equation}
where $\hat{\sigma}_{\vect{v}}^2=\vect{v}^{\tp}\hat{\vect{H}}_{n}^{-1}\hat{\vect{\Sigma}}_n (\hat{\vect{H}}_n^{\tp})^{-1}\vect{v}$  and $q_{1-\xi/2}$ denotes the $(1-\xi/2)$-th quantile of a standard normal distribution.

\begin{remark}
We provide some insights into the computational complexity of our inference procedure. Both estimators $\hat{\vect{H}}_{n}^{-1}$ and $\hat{\vect{\Sigma}}_{n}$ can be computed in an $O(d^2)$ per-iteration computation cost, as discussed in \eqref{eq:riccati_eq}. In addition, the computation $\hat{\sigma}_{\vect{v}}^2=\vect{v}^{\tp}\hat{\vect{H}}_{n}^{-1}\hat{\vect{\Sigma}}_n (\hat{\vect{H}}_n^{\tp})^{-1}\vect{v}$ requires three vector-matrix multiplication with the same computation complexity $O(d^2)$. On the contrary, the online bootstrap method proposed in \cite{ramprasad2022online} has a per-iteration complexity of at least $O(Bd)$, where the resampling size $B$ in the bootstrap is usually much larger than $d$ in practice.
\end{remark}

The construction in \eqref{eq:conf_intv} serves also for the uncertainty quantification of the value function $\vect{\phi}(s)^\top\vect{\theta}^*$ if one specifies $\vect{v}=\vect{\phi}(s)$. Furthermore, the constructed confidence interval can be utilized to implement early stopping, reducing costs associated with data collection. One potential approach is to adopt the early stopping rule outlined in \cite{xia_khamaru_etal.2023jmlr}, which is based on predefined tolerance probability, stopping checkpoints and target accuracy level. This allows the robust policy evaluation algorithm to terminate efficiently once the desired precision is achieved, optimizing resource usage.

In the following theorem, we provide the theoretical guarantee of the confidence interval construction by showing that our online estimator of the covariance matrix, $\hat{\vect{\Sigma}}_{n}$, is consistent.

\begin{theorem} \label{prop:cov_est}
Suppose the conditions in Theorem \ref{thm:asymp_norm} hold.  The covariance estimator $\hat{\vect{\Sigma}}_n$ defined in \eqref{eq:sigma_hat_est} satisfies
\begin{equation*}
\|\hat{\vect{\Sigma}}_{n}-\vect{\Sigma}\|=O_{\mbP}\left(\sqrt{d}\tau_n^2\alpha_n+  \sqrt{d}\tau_n^{-1}+\tau_n\sqrt{\frac{d\log n}{n}}+ \frac{d\tau_n^2\log^2n}{n}\right).
\end{equation*}
\end{theorem}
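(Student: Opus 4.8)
The plan is to bound $\|\hat{\vect{\Sigma}}_n-\vect{\Sigma}\|$ through a three-way split. I would first introduce an oracle matrix $\check{\vect{\Sigma}}_n$ obtained from \eqref{eq:hatsigma} by replacing every data-driven residual $g_{\tau_i}(\vect{Z}_i^\tp\hat{\vect{\theta}}_{i-1}-b_i)$ with the true-parameter residual $g_{\tau_i}(\vect{Z}_i^\tp\vect{\theta}^*-b_i)$, and then decompose
\begin{equation*}
\hat{\vect{\Sigma}}_n-\vect{\Sigma}=\underbrace{(\hat{\vect{\Sigma}}_n-\check{\vect{\Sigma}}_n)}_{\text{(I) plug-in error}}+\underbrace{(\check{\vect{\Sigma}}_n-\mbE\check{\vect{\Sigma}}_n)}_{\text{(II) fluctuation}}+\underbrace{(\mbE\check{\vect{\Sigma}}_n-\vect{\Sigma})}_{\text{(III) bias}}.
\end{equation*}
Throughout I would exploit that the pseudo-Huber derivative satisfies $|g_\tau'|\le1$ (so $g_\tau$ is $1$-Lipschitz) and $|g_\tau|\le\tau$, and that $|\vect{X}_i|_2,|\vect{Z}_i|_2$ are bounded by Condition \ref{cond:Xbound}. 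Each of the three pieces will be shown to be dominated by the four terms in the stated bound.

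For the plug-in error (I), abbreviate $u_i=\vect{Z}_i^\tp\vect{\theta}^*-b_i$ and $\hat u_i=\vect{Z}_i^\tp\hat{\vect{\theta}}_{i-1}-b_i$. Factoring a difference of squares as $g_{\tau_i}(\hat u_i)^2-g_{\tau_i}(u_i)^2=(g_{\tau_i}(\hat u_i)-g_{\tau_i}(u_i))(g_{\tau_i}(\hat u_i)+g_{\tau_i}(u_i))$ and bounding the second factor by $2\tau_i$, each summand is at most $2\tau_i|\vect{Z}_i|_2\,|\hat{\vect{\theta}}_{i-1}-\vect{\theta}^*|_2$; the lagged cross-terms are handled identically with an extra factor counting the $\lceil\lambda\log i\rceil$ lags. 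Feeding in the uniform high-probability rate $|\hat{\vect{\theta}}_{i-1}-\vect{\theta}^*|_2=O_{\mbP}(\sqrt d\,e_{i-1})$ from Theorem \ref{thm:contam_rate} --- whose intersection form over all $i$ is exactly what licenses substituting the random $\hat{\vect{\theta}}_{i-1}$ into every term --- and averaging $\tfrac1n\sum_i\tau_i e_{i-1}$ with $\tau_i\asymp i^{\beta}$ reproduces all four rates. Notably, the product of the Lipschitz factor $\tau_i$ with the Huber-bias component $\tau_i^{-2}$ of $e_i$ is what generates the $\sqrt d\,\tau_n^{-1}$ term, while the $\alpha_i\tau_i$ component yields $\sqrt d\,\tau_n^2\alpha_n$.

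The bias term (III) splits into three pieces. Truncating the two-sided infinite lag series at $\lceil\lambda\log n\rceil$ incurs an error decaying geometrically by the $\phi$-mixing Condition \ref{cond:mix}: the lag-$k$ cross-covariance of $\vect{X}_0\vect{X}_k^\tp u_0u_k$ is $O(\rho^k)$ once combined with the moment control of Condition \ref{cond:Bbound}, so a large enough $\lambda$ renders the tail $O(n^{-1})$. The thresholding bias, comparing $g_{\tau_i}(u_i)$ against $u_i$ term by term through the expansion $u-g_\tau(u)=u\,(1-(1+(u/\tau)^2)^{-1/2})$ together with Cauchy--Schwarz, is $O(\tau_i^{-2})$ under $\delta\ge5$ and averages into a contribution absorbed by $\sqrt d\,\tau_n^{-1}$. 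Finally, the $\alpha_n$-fraction of indices in $\mcQ_n$ carrying arbitrary residuals is controlled deterministically by $|g_{\tau_i}|\le\tau_i$: each outlier contributes at most $O(\tau_i^2)$ over its $O(\log i)$ lagged products, yielding the $\sqrt d\,\tau_n^2\alpha_n$ term.

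The fluctuation term (II) is the main obstacle. I would reduce to the operator norm by a covering argument: fix a $\tfrac14$-net $\mcV\subset\mbS^{d-1}$ with $|\mcV|\le e^{Cd}$, so that $\|\check{\vect{\Sigma}}_n-\mbE\check{\vect{\Sigma}}_n\|\le2\max_{\vect{v}\in\mcV}|\vect{v}^\tp(\check{\vect{\Sigma}}_n-\mbE\check{\vect{\Sigma}}_n)\vect{v}|$. For fixed $\vect{v}$ the target is a normalized sum of a stationary $\phi$-mixing array; crucially, although each per-index summand is bounded only by $O(\tau_i^2\log i)$ (a sum of $\lceil\lambda\log i\rceil$ lagged products each at most $O(\tau_i^2)$), its variance is merely $O(\tau_i^2)$ rather than the $O(\tau_i^4)$ a crude range bound would suggest, because $\mbE[g_{\tau_i}^4]\le\tau_i^2\,\mbE[g_{\tau_i}^2]=O(\tau_i^2)$ by the second-moment part of Condition \ref{cond:Bbound}. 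A Bernstein inequality for $\phi$-mixing sequences (after the usual big-block/small-block reduction to near-independence) then gives, at confidence $e^{-t}$, a deviation of order $\sqrt{\tau_n^2 t/n}+\tau_n^2(\log n)\,t/n$; taking $t\asymp d+\nu\log n$ to absorb both the union bound over $\mcV$ and the target probability $1-O(n^{-\nu})$ produces exactly $\tau_n\sqrt{d\log n/n}$ and $d\tau_n^2\log^2 n/n$. The genuinely hard part is making this concentration uniform despite the time-varying threshold $\tau_i$ and the widening lag window $\lceil\lambda\log i\rceil$, which preclude a direct appeal to a stationary-sequence inequality and force a blocking argument that separately tracks the range scale $\tau_n^2\log n$ and the smaller variance scale $\tau_n^2$.
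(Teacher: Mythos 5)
Your overall architecture coincides with the paper's own proof: the same oracle decomposition (plug-in error from substituting $\hat{\vect{\theta}}_{i-1}$, fluctuation of the oracle matrix around its mean, and a bias consisting of lag truncation, thresholding, and outliers), and the same tools (the uniform event of Theorem \ref{thm:contam_rate} together with Lemma \ref{lem:seq_bound} for the plug-in term, geometric decay of $\|\vect{\Gamma}_k\|$ under Condition \ref{cond:mix} for the truncation, the approximation bounds of Lemma \ref{lem:huber_grad_approx} plus Cauchy--Schwarz for the thresholding bias, and a net-plus-blocking Bernstein argument for the concentration). Your plug-in and bias steps are sound and correspond to \eqref{eq:sigma_hat_diff}, \eqref{eq:trunc_cov} and \eqref{eq:cov_drift} in the paper.

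The gap is in the fluctuation term (II). You aggregate the $\lceil\lambda\log i\rceil$ lagged products into a single per-index summand and assert that its variance is $O(\tau_i^2)$, justified by $\mbE[g_{\tau_i}^4]\le\tau_i^2\,\mbE[g_{\tau_i}^2]$. That inequality controls the second moment of \emph{one} lagged product only; the variance of the sum of $O(\log i)$ products, which all share the common factor $\vect{v}^{\tp}\vect{X}_ig_{\tau_i}(\epsilon_i)$ and are therefore far from uncorrelated across lags, is at best $O(\tau_n^2\log n)$ even after invoking mixing in the lag index, and $O(\tau_n^2\log^2 n)$ crudely. Feeding the correct variance into your Bernstein bound gives a deviation of order at least $\tau_n\sqrt{d}\,\log n/\sqrt{n}$, which exceeds the claimed $\tau_n\sqrt{d\log n/n}$ by a logarithmic factor, so the theorem as stated is not recovered. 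Two repairs are available. The cheap one: replace the bound $|g_{\tau}|\le\tau$ by $|g_{\tau}(x)|\le|x|$, so that under the moment condition with $\delta\geq5$ one has $\mbE\big[g_{\tau_i}^2(\epsilon_i)g_{\tau_{i-k}}^2(\epsilon_{i-k})\big]\le\mbE[\epsilon_i^2\epsilon_{i-k}^2]=O(1)$; then the variance proxy is benign and the range $\tau_n^2$ enters only through the higher-order term $d\tau_n^2\log^2n/n$. The paper's route: perform the concentration separately for each lag $k$, treating $\{\vect{Y}_{i,k}\}_i$ as a $\phi$-mixing array with coefficient $\phi((j-k)_+)$ and using exactly this $O(1)$ variance (this is the remark in the paper's proof that ``the only difference is that $\var(\eta_l)=O(1)$ \ldots and $\vect{u}^{\tp}\vect{Y}_{i,k}\vect{v}$ is bounded by $\tau_n^2$''), and then sum the per-lag bounds over $k$; this also sidesteps the nonuniformity issues you flag with the widening lag window and the time-varying $\tau_i$.
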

Theorem \ref{prop:cov_est} provides an upper bound on the estimation error $\hat{\vect{\Sigma}}_{n}-\vect{\Sigma}$. To achieve the consistency on $\hat{\vect{\Sigma}}_{n}$, we specify the thresholding parameter  $\tau_n = C_{\tau}n^{\beta}$ for $1/4<\beta<1/2$, such that
\[
\|\hat{\vect{\Sigma}}_{n}-\vect{\Sigma}\|=O_{\mbP}\left(\sqrt{d}\tau_n^2\alpha_n+  \frac{\sqrt{d\log n}}{n^{1/2-\beta}}\right).
\] 
In this case, as long as the fraction of outliers $\alpha_n$ satisfies $\alpha_n = o(1/(n^{2\beta}\sqrt{d}))$, we obtain a consistent estimator of the matrix $\vect{\Sigma}$. In other words, our proposed robust algorithm allows $o(n^{1-2\beta})$ outliers, ignoring the dimension.
We summarize the result in the following corollary.

\begin{corollary} \label{cor:conf_intv}
    Suppose the conditions of Theorem \ref{thm:asymp_norm} hold, and the fraction of outliers satisfies $\alpha_n = o(1/(n^{2\beta}\sqrt{d}))$, where we specify $\tau_i = C_{\tau}i^{\beta}$ and $1/4<\beta<1/2$.  Then given a vector $\vect{v}\in\mbR^d$ and a pre-specified confidence level $1-\xi$, we have
    \begin{equation*}
        \lim_{n\rightarrow\infty}\mbP\left(\vect{v}^{\tp}\vect{\theta}^*\in\Big[\vect{v}^{\tp}\hat{\vect{\theta}}_n-q_{1-\xi/2}\hat{\sigma}_{\vect{v}},\vect{v}^{\tp}\hat{\vect{\theta}}_n+q_{1-\xi/2}\hat{\sigma}_{\vect{v}}\Big]\right)=1-\xi,
    \end{equation*}
    where $\hat{\sigma}_{\vect{v}}^2=\vect{v}^{\tp}\hat{\vect{H}}_{n}^{-1}\hat{\vect{\Sigma}}_n (\hat{\vect{H}}_n^{\tp})^{-1}\vect{v}$  and $q_{1-\xi/2}$ denotes the $(1-\xi/2)$-th quantile of a standard normal distribution.
\end{corollary}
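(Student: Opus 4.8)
The plan is to prove the corollary by a Slutsky-type argument that combines the asymptotic normality of Theorem~\ref{thm:asymp_norm} with the consistency of the plug-in variance estimator $\hat{\sigma}_{\vect{v}}^2$. First I would check that the specification $\tau_i=C_{\tau}i^{\beta}$ with $1/4<\beta<1/2$ is compatible with the hypotheses of Theorem~\ref{thm:asymp_norm}: the requirement $n^{1/4}=o(\tau_n)$ holds because $\beta>1/4$, while $\alpha_n=o(1/(\sqrt{n}\tau_n))$ is part of the assumed conditions of that theorem. Hence the pivot $T_n:=\sqrt{n}\,\vect{v}^{\tp}(\hat{\vect{\theta}}_n-\vect{\theta}^*)/\sigma_{\vect{v}}$ converges in distribution to $\mcN(0,1)$, with $\sigma_{\vect{v}}^2=\vect{v}^{\tp}\vect{H}^{-1}\vect{\Sigma}(\vect{H}^{\tp})^{-1}\vect{v}$, and this quantity is strictly positive since $\vect{\Sigma}$ is positive definite and $\vect{H}$ is invertible for the nonzero $\vect{v}$ by Condition~\ref{cond:cond_num}.

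The core step is to show $\hat{\sigma}_{\vect{v}}/\sigma_{\vect{v}}\xrightarrow{\mbP}1$, which reduces to the consistency of the two building blocks of the sandwich form. For the long-run covariance I would invoke Theorem~\ref{prop:cov_est}: under $\tau_n=C_{\tau}n^{\beta}$ with $\beta<1/2$ and $\alpha_n=o(1/(n^{2\beta}\sqrt{d}))$, the general bound collapses to $\|\hat{\vect{\Sigma}}_n-\vect{\Sigma}\|=O_{\mbP}(\sqrt{d}\tau_n^2\alpha_n+\sqrt{d\log n}/n^{1/2-\beta})$, where the first term is $o_{\mbP}(1)$ by the outlier assumption ($\sqrt{d}\tau_n^2\alpha_n=\sqrt{d}\,C_{\tau}^2n^{2\beta}\alpha_n\to0$) and the second is $o_{\mbP}(1)$ because $\beta<1/2$. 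For the information matrix I would appeal to the consistency $\hat{\vect{H}}_n\xrightarrow{\mbP}\vect{H}$ established within the proof of Theorem~\ref{thm:contam_rate}: the empirical matrix $\hat{\vect{H}}_n$ of \eqref{eq:Hess_fml} converges to $\vect{H}=\mbE[\vect{X}\vect{Z}^{\tp}]$ because $g_{\tau}'(\cdot)\to1$ as $\tau_n\to\infty$, together with a law-of-large-numbers argument for the $\phi$-mixing sequence under Conditions~\ref{cond:mix}--\ref{cond:Xbound}; combined with Condition~\ref{cond:cond_num} and the continuity of matrix inversion on the set of uniformly well-conditioned matrices, this yields $\hat{\vect{H}}_n^{-1}\xrightarrow{\mbP}\vect{H}^{-1}$.

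Putting these together, the continuous mapping theorem applied to $(\vect{A},\vect{B})\mapsto\vect{v}^{\tp}\vect{A}\vect{B}\vect{A}^{\tp}\vect{v}$ gives $\hat{\sigma}_{\vect{v}}^2\xrightarrow{\mbP}\sigma_{\vect{v}}^2$, and since $\sigma_{\vect{v}}^2>0$ a further application gives $\sigma_{\vect{v}}/\hat{\sigma}_{\vect{v}}\xrightarrow{\mbP}1$. Slutsky's theorem then yields $\sqrt{n}\,\vect{v}^{\tp}(\hat{\vect{\theta}}_n-\vect{\theta}^*)/\hat{\sigma}_{\vect{v}}\xrightarrow{d}\mcN(0,1)$. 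Writing the coverage event as $\{|\vect{v}^{\tp}(\hat{\vect{\theta}}_n-\vect{\theta}^*)|\leq q_{1-\xi/2}\hat{\sigma}_{\vect{v}}\}=\{|(\sigma_{\vect{v}}/\hat{\sigma}_{\vect{v}})T_n|\leq q_{1-\xi/2}\}$, its probability converges to $\mbP(|Z|\leq q_{1-\xi/2})=2\Phi(q_{1-\xi/2})-1=1-\xi$, which is exactly the claim.

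I expect the main obstacle to be the consistency of $\hat{\vect{H}}_n^{-1}$ rather than the routine Slutsky manipulation, since $\hat{\vect{H}}_n$ is built from the running, non-final iterates $\hat{\vect{\theta}}_{i-1}$ and from the truncating derivatives $g_{\tau_i}'$, so one must control both the estimation error of the $\hat{\vect{\theta}}_{i-1}$ (via Theorem~\ref{thm:contam_rate}) and the bias induced by $g_{\tau_i}'(\cdot)\neq1$, uniformly along the trajectory as $\tau_i\to\infty$. A secondary technical point is to guarantee that $\hat{\sigma}_{\vect{v}}$ stays bounded away from zero so that division by it is legitimate; this follows from $\sigma_{\vect{v}}^2>0$ together with the consistency $\hat{\sigma}_{\vect{v}}^2\xrightarrow{\mbP}\sigma_{\vect{v}}^2$ just established.
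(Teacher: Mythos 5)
Your proposal is correct and follows essentially the same route as the paper: the corollary is obtained by combining the asymptotic normality in Theorem~\ref{thm:asymp_norm} with the consistency of $\hat{\vect{\Sigma}}_n$ from Theorem~\ref{prop:cov_est} (whose bound vanishes under $\tau_n=C_{\tau}n^{\beta}$, $1/4<\beta<1/2$, and $\alpha_n=o(1/(n^{2\beta}\sqrt{d}))$) and the consistency of $\hat{\vect{H}}_n^{-1}$ already established inside the proof of Theorem~\ref{thm:contam_rate} (Lemma~\ref{lem:huber_concen_hess}), then applying the continuous mapping theorem and Slutsky. Your reading that the contamination condition $\alpha_n=o(1/(\sqrt{n}\tau_n))$ and $n^{1/4}=o(\tau_n)$ are inherited from Theorem~\ref{thm:asymp_norm}'s hypotheses matches the paper's intent, so no gap remains.
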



\section{Numerical Experiments}	\label{sec:sim}
In this section, we assess the performance of our ROPE algorithm in numerical experiments. We construct all confidence intervals with a nominal coverage of $95\%$, and compare our method with the online bootstrap method with the linear stochastic approximation proposed in \cite{ramprasad2022online}, which we refer to as LSA. 

\subsection{Parameter Inference for Infinite-Horizon MDP}	\label{sec:ihmdp}

In the first experiment, we focus on an infinite-horizon Markov Decision Process (MDP) setting. Specifically, we create an environment with a state space of size $50$ and an action space of size $5$. The dimension of the features is fixed at $10$. The transition probability kernel of the MDP, the state features are randomly generated from $\mcN(\vect{0}_p,\mbI_p)$, and the policy under evaluation is generated by picking an action uniformly from the action space for each state. By employing the Bellman equation \eqref{eq:bellman_eq}, we can compute the expected rewards at each state under the policy. To introduce variability, we add noise to the expected rewards, drawn from different distributions such as the standard normal distribution and $t$ distribution with a degree of freedom of $2.25$. The main objective of this experiment is parameter inference. Particularly, we construct a confidence interval for the first coordinate of the true parameter $\vect{\theta}^*$ to study the effect of the first feature on the value function.

We set the thresholding level as $\tau_i = C(i/(\log i)^2)^{\beta} $ where $\beta$ is a positive constant. We begin with an investigation of the influence of the parameters $C$ and $\beta$ on the coverage probability and width of the confidence interval in our ROPE algorithm. Figure \ref{fig:ihmdp_inner} displays the results, revealing that the performance is relatively robust to variations in $C$ and $\beta$, irrespective of the type of noise applied. Notably, although our theoretical guarantees are based on the noises with a $6$-th order moment (See Theorem \ref{thm:asymp_norm}), the experiment indicates that our method exhibits a wider range of applicability.

In subsequent experiments, we specify $C=0.5$ and $\beta=1/3$ for the ROPE algorithm and compare it with the LSA method. Alongside the coverage probability and width of the confidence interval, we also assess the average running time for both methods. The findings are depicted in Figure \ref{fig:ihmdp_outer}. It is evident that, under a light-tailed noise that admits normal distribution, the two methods yield comparable results, with the ROPE method consistently outperforming LSA in terms of the confidence interval width. When the noise exhibits heavy-tailed characteristics, the confidence interval width of LSA is notably larger than that of ROPE. Additionally, the runtime of ROPE is much shorter than that of LSA. 

\begin{figure}[t]
\begin{subfigure}[t]{0.42\textwidth}
    \includegraphics[width=\linewidth]{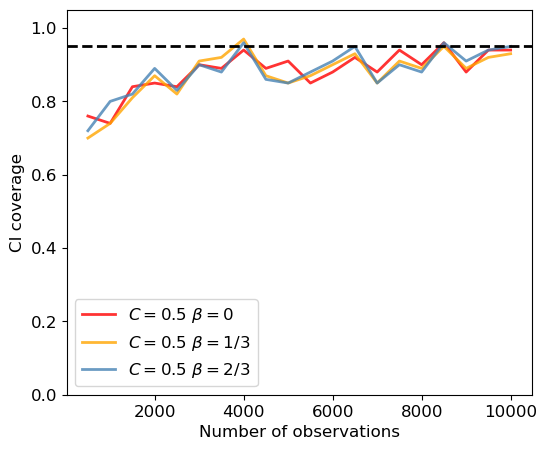}
\end{subfigure}
\begin{subfigure}[t]{0.42\textwidth}
  \includegraphics[width=\linewidth]{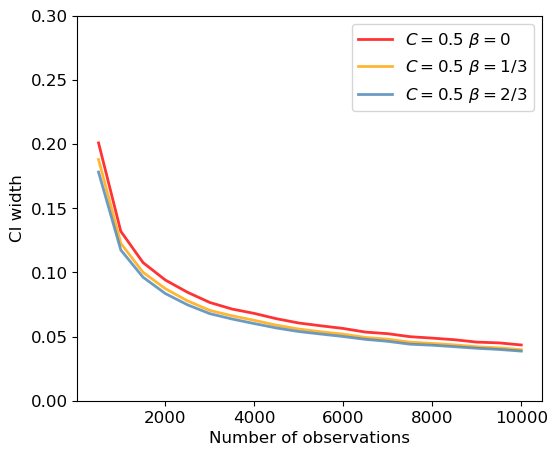}
\end{subfigure}\hfill

\begin{subfigure}[t]{0.42\textwidth}
    \includegraphics[width=\linewidth]{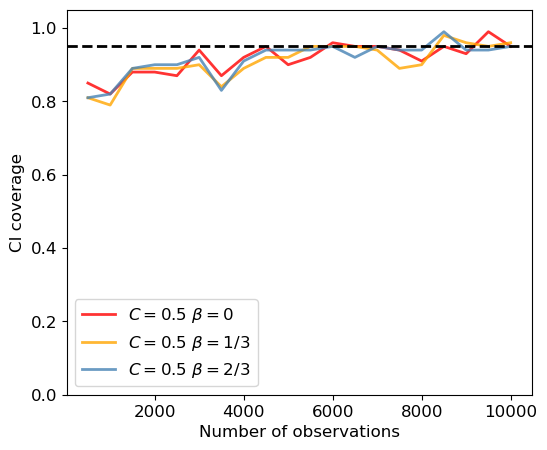}
\end{subfigure}
\begin{subfigure}[t]{0.42\textwidth}
    \includegraphics[width=\linewidth]{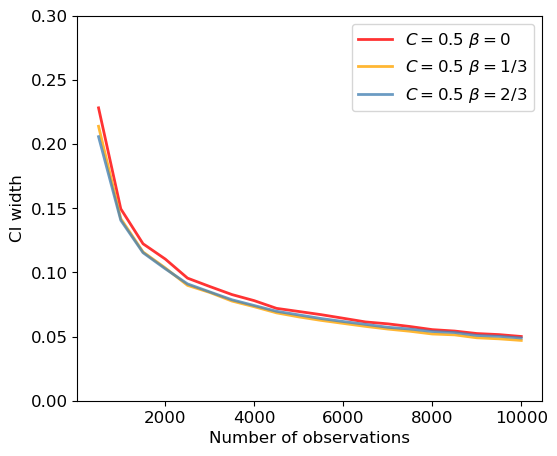}
\end{subfigure}\hfill

\caption{Coverage probability (the first column) and the width of confidence interval (the second column) of $\mathrm{ROPE}$ of various $C$ and $\beta$. We specify the noise distribution as standard normal (the first row) and Student's $t_{2.25}$ (the second row).}
\label{fig:ihmdp_inner}
\end{figure}

In the LSA method, the step size is determined by the expression $\alpha i^{-\eta}$, which relies on two positive hyperparameters $\alpha$ and $\eta$. In this experiment, we investigate the sensitivity of LSA method on these parameters. For comparison, we also present the result of our ROPE algorithm, which does not require any hyperparameter tuning. In this particular experiment, we specify the noise distribution to be standard normal, and directly apply the online Newton step on the square loss (as opposed to the pseudo-Huber loss). Notably, Figure \ref{fig:ihmdp_lsasens} illustrates that the LSA method is sensitive to the parameter $\alpha$. Specifically, when $\alpha$ takes on larger values, the LSA method generates significantly wider confidence intervals than that of LSA with well-tuned hyperparameters, which is undesirable in practical applications. Meanwhile, our ROPE method always generates confidence intervals with a valid width and comparable coverage rate. 

\subsection{Value Inference for FrozenLake RL Environment}	\label{sec:fl}

In this section, we consider the FrozenLake environment provided by OpenAI gym, which involves a character navigating an $8\times8$ grid world. The character's objective is to start from the first tile of the grid and reach the end tile within each episode. If the character successfully reaches the target tile, a reward of $1$ is obtained; otherwise, the reward is $0$. In our setup, we generate the state features uniformly from $[0,1]^p$, with a dimensionality of $p=4$. The policy under evaluation is pre-trained using $Q$-learning, and the true parameter can be explicitly computed using the transition probability matrix. Under a contaminated reward model, we introduce random perturbations by replacing the true reward with a value uniformly sampled from the range $[0,100]$ with probability $\alpha\in\{0,n^{-1},0.05n^{-1/2}\}$. Our goal is to infer the initial state value.
\begin{figure}[htp]
\begin{subfigure}[t]{0.325\textwidth}
    \includegraphics[width=\linewidth]{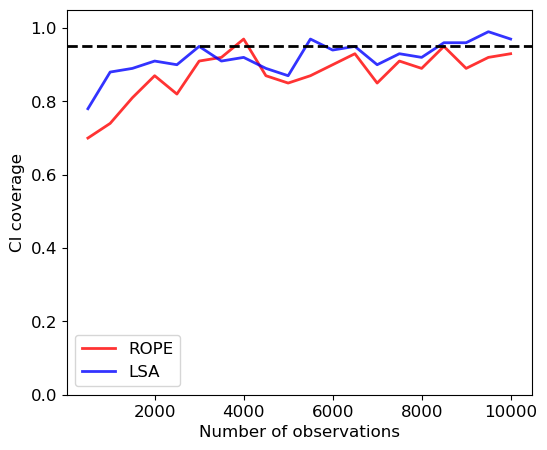}
\end{subfigure}\hfill
\begin{subfigure}[t]{0.325\textwidth}
  \includegraphics[width=\linewidth]{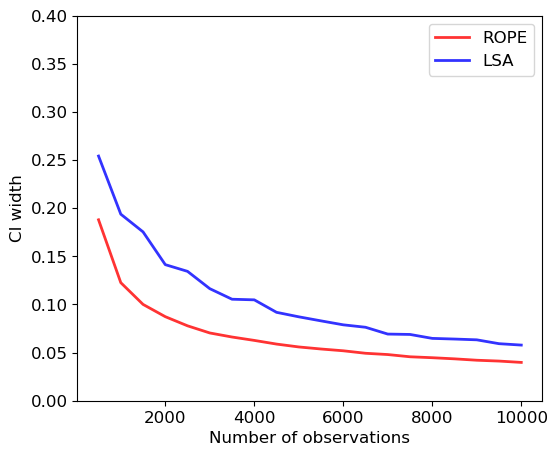}
\end{subfigure}\hfill
\begin{subfigure}[t]{0.325\textwidth}
    \includegraphics[width=\linewidth]{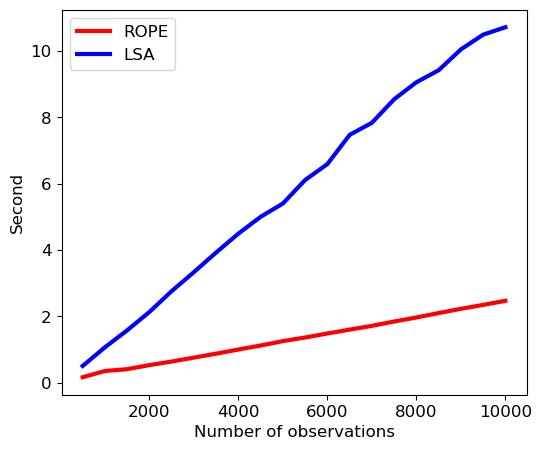}
\end{subfigure}\hfill

\begin{subfigure}[t]{0.325\textwidth}
    \includegraphics[width=\linewidth]{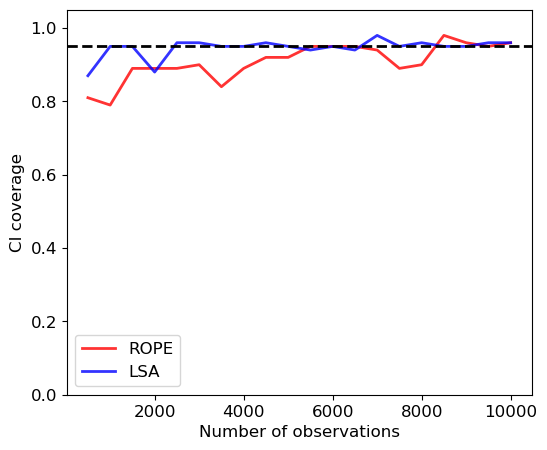}
\end{subfigure}\hfill
\begin{subfigure}[t]{0.325\textwidth}
    \includegraphics[width=\linewidth]{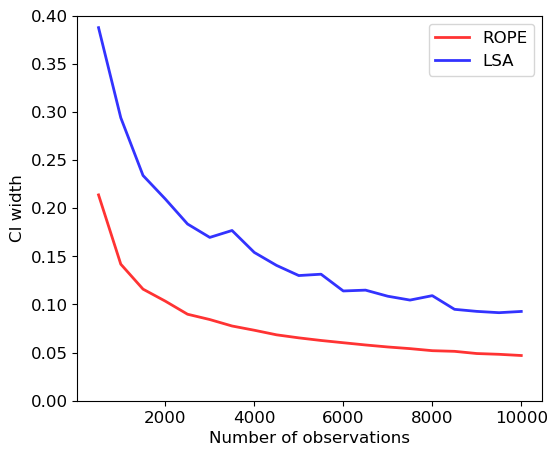}
\end{subfigure}\hfill
\begin{subfigure}[t]{0.325\textwidth}
    \includegraphics[width=\linewidth]{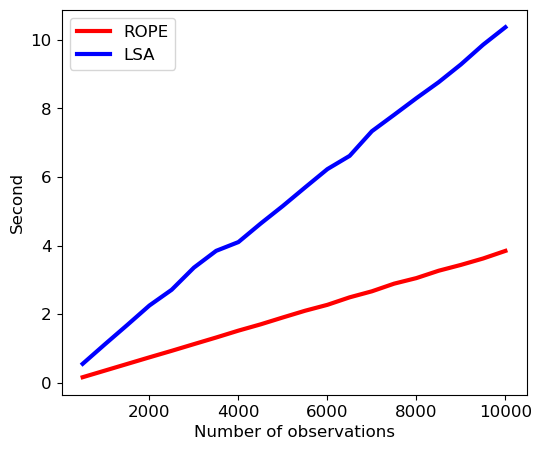}
\end{subfigure}\hfill

\caption{Coverage probability (the first column), the width of confidence interval (the second column), and computing time (the third column) of $\mathrm{ROPE}$ and $\mathrm{LSA}$. We specify the noise distribution as standard normal (the first row) and Student's $t_{2.25}$ (the second row).}
\label{fig:ihmdp_outer}
\end{figure}

\begin{figure}[htp]
\begin{subfigure}[t]{0.49\textwidth}
    \includegraphics[width=\linewidth]{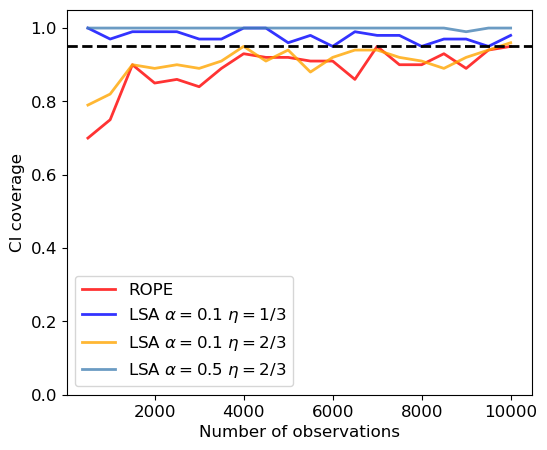}
\end{subfigure}
\begin{subfigure}[t]{0.49\textwidth}
  \includegraphics[width=\linewidth]{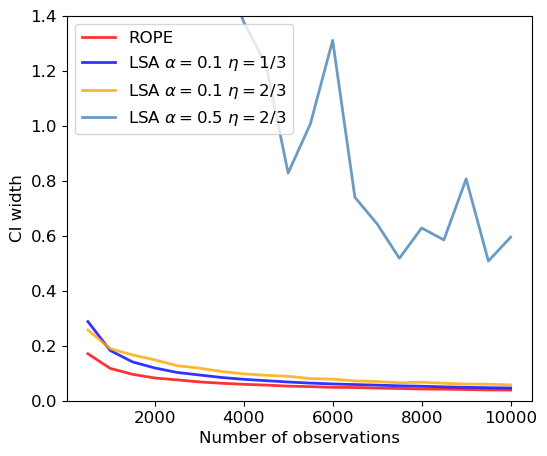}
\end{subfigure}\hfill

\caption{Coverage probability (left), and the width of confidence interval (right) of $\mathrm{ROPE}$ and $\mathrm{LSA}$ of various $\alpha$ and $\eta$. We specify the noise distribution as standard normal.}
\label{fig:ihmdp_lsasens}
\end{figure}

\begin{figure}[htp]
\begin{subfigure}[t]{0.325\textwidth}
    \includegraphics[width=\linewidth]{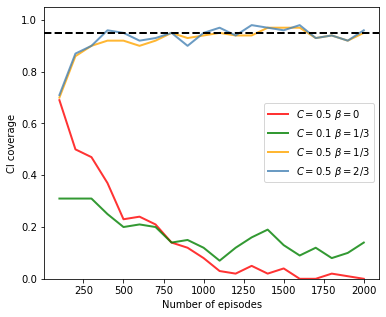}
\end{subfigure}\hfill
\begin{subfigure}[t]{0.325\textwidth}
  \includegraphics[width=\linewidth]{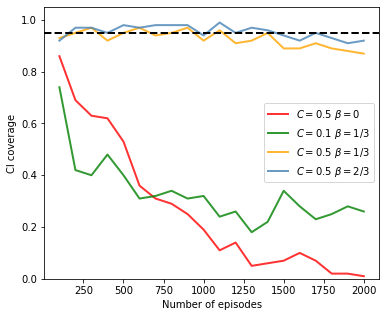}
\end{subfigure}\hfill
\begin{subfigure}[t]{0.325\textwidth}
    \includegraphics[width=\linewidth]{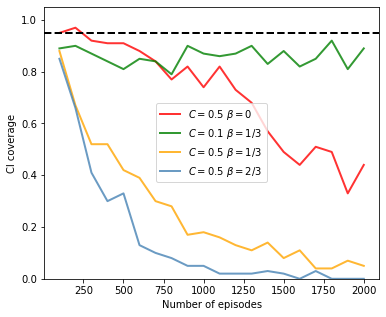}
\end{subfigure}

\begin{subfigure}[t]{0.325\textwidth}
    \includegraphics[width=\linewidth]{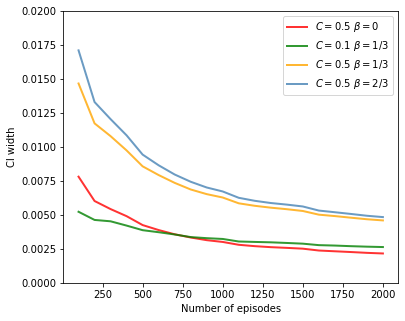}
\end{subfigure}\hfill
\begin{subfigure}[t]{0.325\textwidth}
    \includegraphics[width=\linewidth]{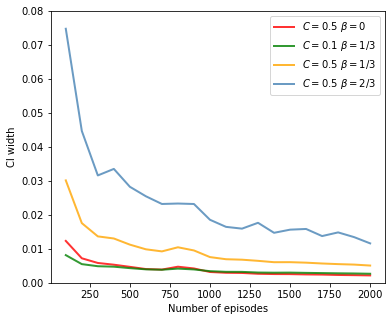}
\end{subfigure}\hfill
\begin{subfigure}[t]{0.325\textwidth}
    \includegraphics[width=\textwidth]{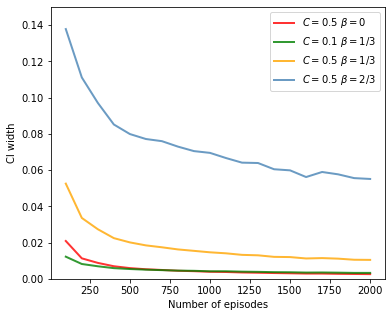}
\end{subfigure}

\caption{Coverage probability (the first row) and the width of confidence interval (the second row) of $\mathrm{ROPE}$ of various $C$ and $\beta$. We set the contamination rate to be $0$ (the first column), $n^{-1}$ (the second column), and $0.05n^{-1/2}$ (the third column), respectively.}
\label{fig:fl_inner}
\end{figure}

\begin{figure}[htp]
\begin{subfigure}[t]{0.325\textwidth}
    \includegraphics[width=\linewidth]{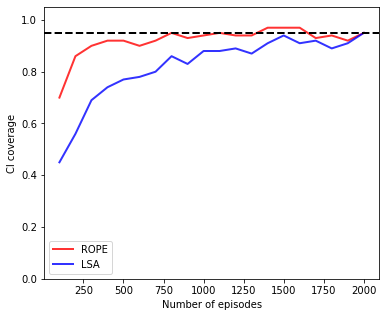}
\end{subfigure}\hfill
\begin{subfigure}[t]{0.325\textwidth}
  \includegraphics[width=\linewidth]{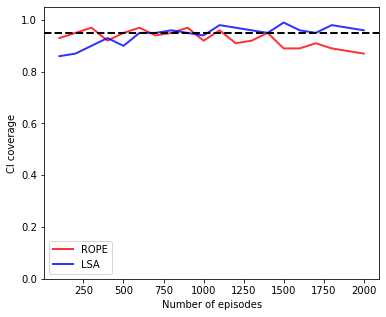}
\end{subfigure}\hfill
\begin{subfigure}[t]{0.325\textwidth}
    \includegraphics[width=\linewidth]{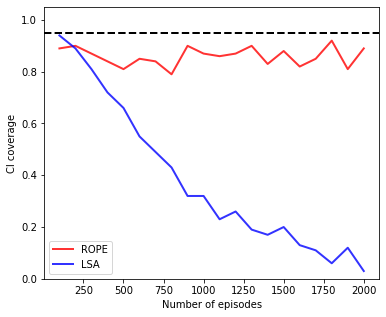}
\end{subfigure}

\begin{subfigure}[t]{0.325\textwidth}
    \includegraphics[width=\linewidth]{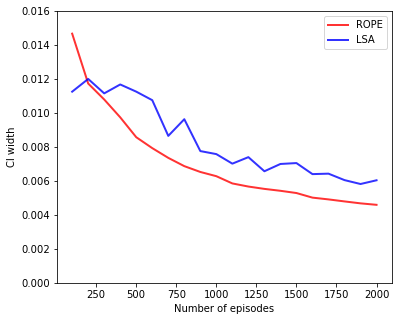}
\end{subfigure}\hfill
\begin{subfigure}[t]{0.325\textwidth}
    \includegraphics[width=\linewidth]{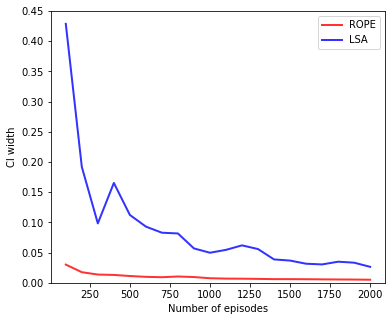}
\end{subfigure}\hfill
\begin{subfigure}[t]{0.325\textwidth}
    \includegraphics[width=\textwidth]{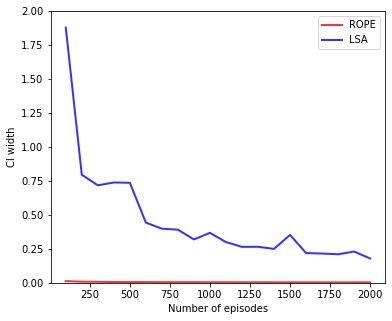}
\end{subfigure}

\begin{subfigure}[t]{0.325\textwidth}
    \includegraphics[width=\linewidth]{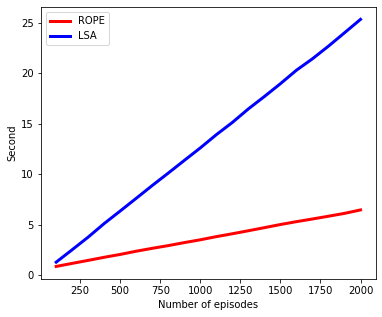}
\end{subfigure}\hfill
\begin{subfigure}[t]{0.325\textwidth}
    \includegraphics[width=\linewidth]{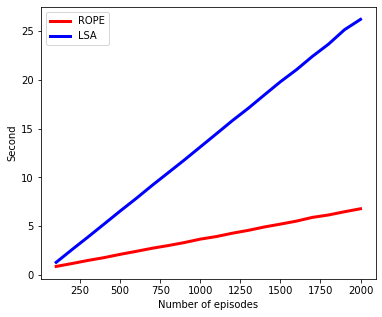}
\end{subfigure}\hfill
\begin{subfigure}[t]{0.325\textwidth}
    \includegraphics[width=\linewidth]{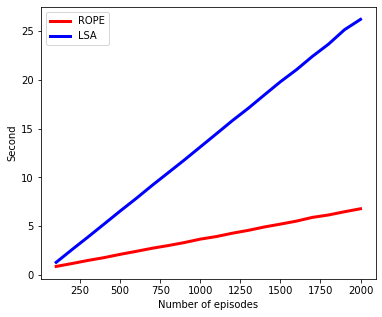}
\end{subfigure}
\caption{Coverage probability (the first row), the width of confidence interval (the second row), and computing time (the third row) of $\mathrm{ROPE}$ and $\mathrm{LSA}$. We set the contamination rate to be $0$ (the first column), $n^{-1}$ (the second column), and $0.05n^{-1/2}$ (the third column), respectively.}
\label{fig:fl_outer}
\end{figure}

\begin{figure}[htp]
\begin{subfigure}[t]{0.42\textwidth}
  \includegraphics[width=\linewidth]{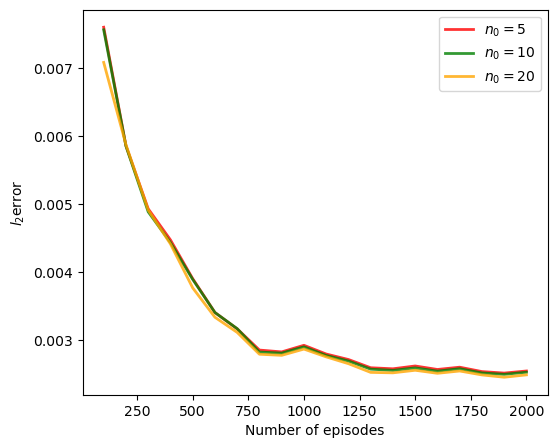}
\end{subfigure}
\begin{subfigure}[t]{0.42\textwidth}
  \includegraphics[width=\linewidth]{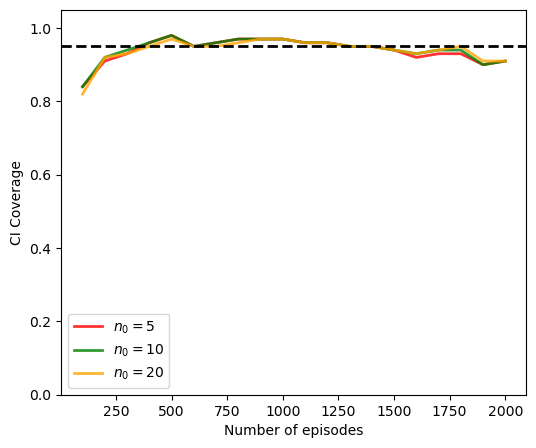}
\end{subfigure}
\hfill
\caption{The $\ell_2$ error (left) and coverage probability (right) of $\mathrm{ROPE}$ with different initialization using 5, 10, 20 episodes. We set the contamination rate to be $0.05n^{-3/4}$.}
\label{fig:fl_init}
\end{figure}

Following the approach in Section \ref{sec:ihmdp}, we proceed to examine the sensitivity of ROPE to the thresholding parameters $C$ and $\beta$. We present the results in Figure \ref{fig:fl_inner}, and observe that the performance of ROPE is influenced by both the contamination rate $\alpha_n$ and the chosen thresholding parameter. Specifically, when $\alpha_n$ is small, a larger thresholding parameter tends to yield better outcomes. However, as $\alpha_n$ increases, the use of a large thresholding parameter can negatively impact performance.

Subsequently, we set the values $C=0.5$, $\beta=1/3$ when $\alpha_n=0$ or $n^{-1}$, and $C=0.1,\beta=1/3$ when $\alpha_n=0.05n^{-1/2}$ for ROPE algorithm, and compare its performance with that of the LSA method. Figure \ref{fig:fl_outer} displays the results, which clearly demonstrates that our ROPE method consistently outperforms LSA in terms of coverage rates, CI widths, and running time. The advantage of ROPE becomes more pronounced as the contamination rate $\alpha_n$ increases.

In the next experiment, we investigate the impact of the initialization value $n_0$ on the performance of the ROPE algorithm. We consider the corruption rate $0.05n^{-3/4}$, while varying the number of episodes in the initialization by $\{5,10,20\}$. Each episode contains an average of $38$ observations, leading to corresponding values of $n_0$ approximately $\{190,380,760\}$. The parameters are set to $C=0.5$, $\beta=1/3$. The results in Figure \ref{fig:fl_init} indicate that the value of $n_0$ has a minimal effect on $\ell_2$-errors and coverage rates.

\subsection{Online Policy Evaluation on MIMIC-III Dataset}
In this section, we demonstrate the effectiveness of the proposed ROPE algorithm by applying it to the Medical Information Mart for Intensive Care version III (MIMIC-III) database \citep{johnson_etal.2016}. MIMIC-III is a freely accessible database containing de-identified critical care data collected from 2001 to 2012 across six ICUs at a Boston teaching hospital. Following \cite{Komorowski_etal.2018}, we select the data pertaining to adult sepsis patients and extract a set of 43 features to characterize each patient, including demographics, Elixhauser premorbid status, vital signs, and laboratory values. We then apply feature clustering, resulting in 752 distinct states. Each state is assigned a feature vector, computed as the mean of the features within that state. We follow the approach of \cite{Komorowski_etal.2018} and \cite{prasad2017reinforcement} to assign rewards to each state based on patient health metrics and mortality outcomes. The reward at the end of each episode reflects hospital mortality, while the rewards assigned to intermediate observations incorporate physiological parameters such as heart rate, respiratory rate, and arterial pH \citep{prasad2017reinforcement}. The terminal reward has a substantially larger magnitude than the intermediate rewards, naturally inducing a heavy-tailed reward distribution. This design is common in healthcare applications, as it emphasizes critical but infrequent outcomes of great clinical importance. Our objective is to perform \emph{on-policy} evaluation and construct confidence intervals for the value function corresponding to the initial state.

\begin{figure}[htp]
\begin{subfigure}[t]{0.325\textwidth}
    \includegraphics[width=\linewidth]{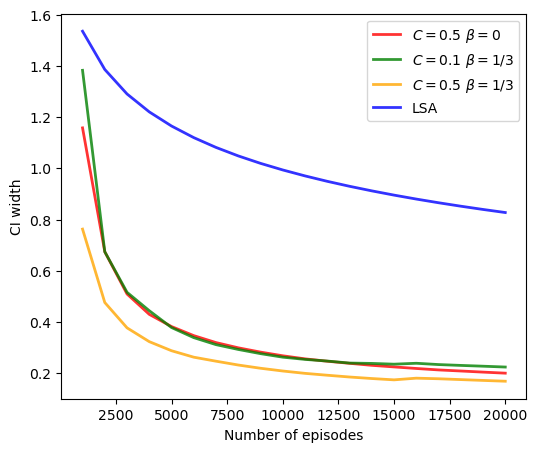}
\end{subfigure}\hfill
\begin{subfigure}[t]{0.325\textwidth}
  \includegraphics[width=\linewidth]{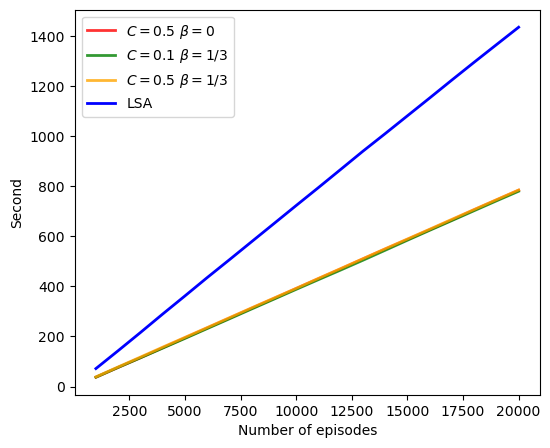}
\end{subfigure}\hfill
\begin{subfigure}[t]{0.325\textwidth}
    \includegraphics[width=\linewidth]{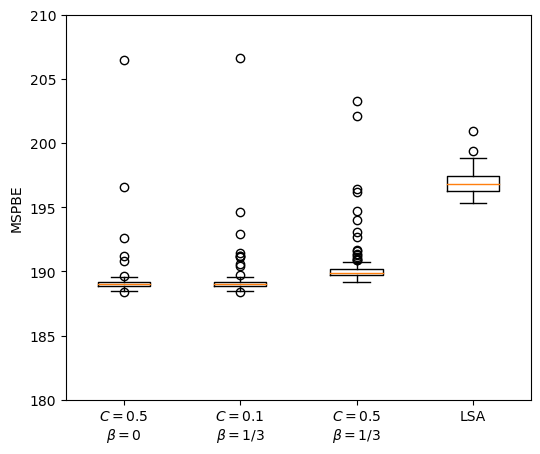}
\end{subfigure}

\caption{The width of confidence interval (the first column), computational time (the second column), and MSPBE (the third column) of $\mathrm{ROPE}$ and $\mathrm{LSA}$ for the MIMIC-III dataset.}
\label{fig:mimic}
\end{figure}

The final processed data contains 20943 episodes, comprising a total 278598 observations. We compare the LSA method with our ROPE method using various $(C,\beta)$ pairs: $(0.5,0),(0.1,1/3)$, and $(0.5,1/3)$, where ROPE is initialized with $100$ episodes. The discount factor is set to $\gamma = 0.99$, and the experiment is repeated 100 times. Since the true value is unknown in real-data analysis, we report only the evolution of confidence interval width and computational time as the number of episodes increases. Additionally, we present a box plot of the Mean-Squared Projected Bellman Error (MSPBE) in the final episode. The results, shown in Figure \ref{fig:mimic}, demonstrates that our proposed ROPE method outperforms LSA across all metrics. In particular, we observe that the unrobustified LSA method yields considerably wider confidence intervals than our ROPE method, and the dispersion of MSPBE values is also larger for LSA. These findings underscore the necessity of robust method in the presence of heavy-tailed noise.

\section{Concluding Remarks}	\label{sec:conclude}

This paper introduces a robust online Newton-type algorithm designed for policy evaluation in reinforcement learning under the influence of heavy-tailed noise and outliers. We demonstrate the estimation consistency as well as the asymptotic normality of our estimator. We further establish the Bahadur representation and show that it converges strictly faster than that of a prototypical first-order method such as TD. To further conduct statistical inference, we propose an online approach for constructing confidence intervals. Our experimental results highlight the efficiency and robustness of our approach when compared to the existing online method. 

While our current work focuses on TD learning for on-policy evaluation, our robust framework can be extended to Gradient Temporal-Difference (GTD) algorithms for off-policy evaluation. There is also potential to apply our approach to multi-step TD methods and TD($\lambda$), though this may introduce challenges due to increased dependencies. Additionally, we identify an extension to fitted-Q evaluation as a promising direction for future research. Beyond policy evaluation, extending our methodology to the reinforcement learning algorithms with policy optimization (e.g., deep Q-learning) presents significant potential but also introduces challenges, which we leave as open questions for future investigation.

\bibliographystyle{asa}
\bibliography{online_robust_RL}

\newpage

\section{Appendix}


\subsection{Experiment of the Effect of Thresholding parameters}

In this section, we extend the analysis from the infinite-horizon Markov Decision Process setting discussed in Section \ref{sec:ihmdp} by examining the effect of thresholding parameter $\tau$ on our ROPE method. As stated in Theorem \ref{thm:contam_rate}, the thresholding parameter is defind as $\tau = C_{\tau} \max(1,i^{\beta_1}/(\log i)^{\beta_2}),$ and is determined by three factors $C_{\tau},\beta_1$ and $\beta_2$. In our experiments, we investigate how variations in each of these factors affect performance. We adopt the ideal parameter triple $(C_{\tau},\beta_1,\beta_2) = (0.5,1/3,2/3)$ and add only $t_{2.25}$ noise to the reward. In each experiment, one parameter is varied while keep the other two remain fixed, and we report the corresponding coverage probability and confidence interval width. The results are presented in Figure \ref{fig:ihmdp_thred} below.

\begin{figure}[htp]
\begin{subfigure}[t]{0.325\textwidth}
    \includegraphics[width=\linewidth]{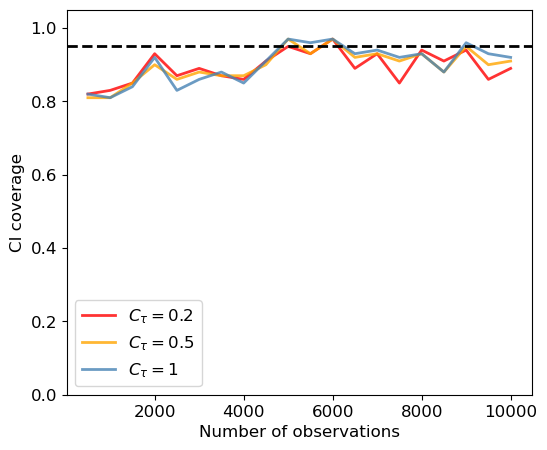}
\end{subfigure}\hfill
\begin{subfigure}[t]{0.325\textwidth}
  \includegraphics[width=\linewidth]{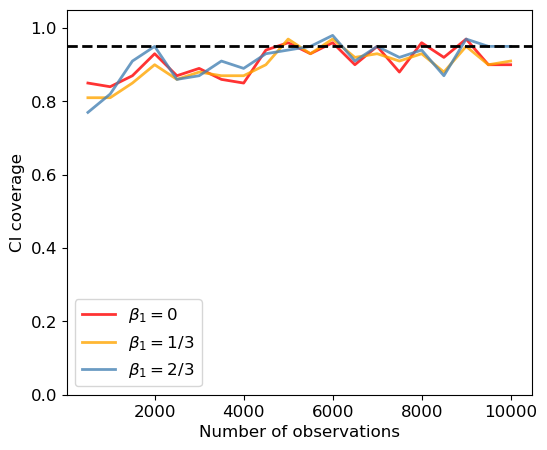}
\end{subfigure}\hfill
\begin{subfigure}[t]{0.325\textwidth}
    \includegraphics[width=\linewidth]{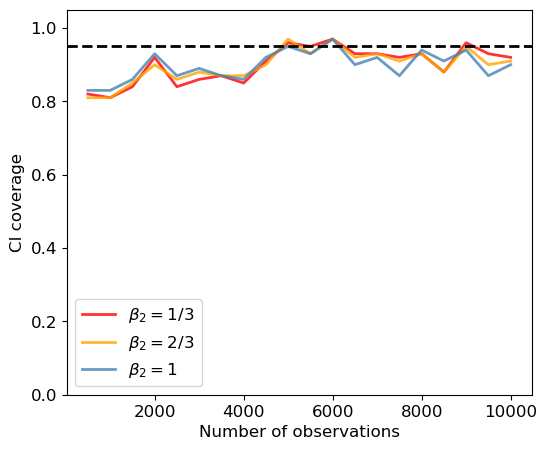}
\end{subfigure}\hfill

\begin{subfigure}[t]{0.325\textwidth}
    \includegraphics[width=\linewidth]{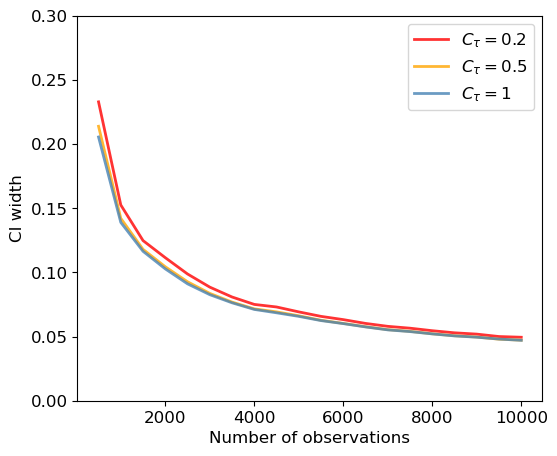}
\end{subfigure}\hfill
\begin{subfigure}[t]{0.325\textwidth}
    \includegraphics[width=\linewidth]{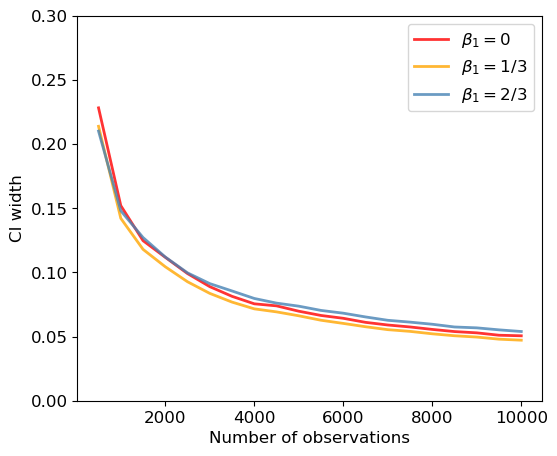}
\end{subfigure}\hfill
\begin{subfigure}[t]{0.325\textwidth}
    \includegraphics[width=\linewidth]{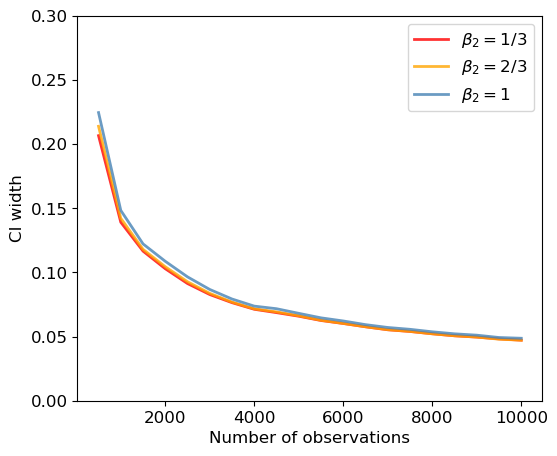}
\end{subfigure}\hfill

\caption{Coverage probability (the first row) and the width of confidence interval (the second row) $\mathrm{ROPE}$ under different thresholding parameters. We vary $C_{\tau}$ (the first column), $\beta_1$ (the second column) and $\beta_2$ (the third column) while other factors are held constant.}
\label{fig:ihmdp_thred}
\end{figure}

From our experimental results, we observe that the factors $C_{\tau},\beta_1$ and $\beta_2$ have a relatively limited impact on the performance under $t_{2.25}$ noise. In particular, as illustrated in the first the third columns of Figure \ref{fig:ihmdp_thred}, a smaller thresholding parameter leads to a narrower confidence interval width when $\beta_1$ is held constant. Moreover, the results in the second column indicate that the confidence interval width does not vary monotonically with respect to $\beta_1$, with $\beta_1=1/3$ emerging as the optimal choice.


\subsection{Technical Lemmas}

\begin{lemma}\label{lem:concen_mix} 
Let $Y_{1},...,Y_{n}$ be $\phi$-mixing sequence with $\phi(k)=O(\rho^{k})$. Assume that $|Y_{i}|\leq M$ and $\mbE Y_{i}=0$. Then for any fixed $\nu>0$ and $1\leq k\leq n$, we have some constant $C$ such that
\begin{equation*}
\begin{aligned}
&\mbP\Bigg(\Big|\frac{\sum_{i=1}^{k}Y_{i}}{k}\Big|\geq C\sqrt{\frac{\log n}{k}}+C\frac{\log^2 n}{k}\Bigg)=O(n^{-\nu}).
\end{aligned}
\end{equation*}
\end{lemma}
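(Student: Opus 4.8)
The plan is to reduce the statement to a two-regime Bernstein-type tail bound for the partial sum $S_k=\sum_{i=1}^k Y_i$ and then calibrate the deviation level. Concretely, I would first aim to establish that there are constants $c_0,c_1>0$ (depending only on $M$ and $\rho$) such that
\[
\mbP\big(|S_k|\geq t\big)\leq 4\exp\Big(-c_0\min\Big(\tfrac{t^2}{c_1 k},\tfrac{t}{\log n}\Big)\Big)\quad\text{for all }1\leq k\leq n,\ t>0.
\]
Given this, the event in the lemma is exactly $|S_k|\geq t$ with $t=C\sqrt{k\log n}+C\log^2 n$. Since $t\geq C\sqrt{k\log n}$ the variance exponent obeys $t^2/(c_1k)\geq C^2\log n/c_1$, while since $t\geq C\log^2 n$ the tail exponent obeys $t/\log n\geq C\log n$. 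Hence the exponent is at least $c_0\log n\cdot\min(C^2/c_1,C)$, and choosing $C$ large enough that $c_0\min(C^2/c_1,C)\geq\nu$ gives $4n^{-\nu}=O(n^{-\nu})$. This calculation also reveals the shape of the deviation: the extra factor $\log n$ in the denominator of the tail exponent is precisely what converts the linear-tail threshold $\log^2 n$ into a clean $\nu\log n$ in the exponent.

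The core step is therefore the Bernstein bound, which I would prove by a big-block/small-block decomposition with block width $L=\lceil\lambda\log n\rceil$ for a large constant $\lambda$. Partition $\{1,\dots,k\}$ into alternating ``big'' blocks and length-$L$ ``gap'' blocks, write $S_k=S_{\mathrm{big}}+S_{\mathrm{gap}}$, and handle each by a Chernoff argument; the two are symmetric, so I focus on $S_{\mathrm{big}}=\sum_{j=1}^q U_j$, where consecutive block sums $U_j$ are separated by gaps of length $L$ and $|U_j|\leq M\cdot(\text{block width})$. The dependence enters through the conditional moment generating function: using \eqref{eq:phi_mixing} in the (standard) form $\|\mbE[Y\mid\mcF_{\mathrm{past}}]-\mbE Y\|_\infty\leq 2\phi(L)\|Y\|_\infty$ for a future-measurable $Y$ separated by a gap $L$, I would take $Y=e^{\lambda U_j}$ to get the pointwise bound $\mbE[e^{\lambda U_j}\mid\mcF_{\mathrm{past}}]\leq \mbE e^{\lambda U_j}+2\phi(L)e^{\lambda M L}$. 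Iterating across blocks yields the multiplicative control $\mbE[\prod_j e^{\lambda U_j}]\leq\prod_j(\mbE e^{\lambda U_j}+2\phi(L)e^{\lambda ML})$. (Alternatively one may couple the $U_j$ to independent copies by Berbee's lemma, valid since geometric $\phi$-mixing implies geometric $\beta$-mixing, at total cost $q\,\phi(L)$.)

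To finish, I would constrain the Chernoff parameter to $0<\lambda\leq c/(ML)$, so that $e^{\lambda ML}=O(1)$ and, because $\phi(L)=O(\rho^{L})=O(n^{-\lambda|\log\rho|})$ is super-polynomially small, the product of the $q\leq n$ mixing corrections stays $O(1)$, giving $\mbE[\prod_j e^{\lambda U_j}]\leq 2\prod_j \mbE e^{\lambda U_j}$. For each block I would combine boundedness with geometric-decay covariance summability: $|\cov(Y_0,Y_m)|\leq 2\phi(m)M^2$ with $\sum_m\phi(m)<\infty$ yields $\var(U_j)=O(L)$ (not $O(L^2)$), hence the sub-Gaussian estimate $\mbE e^{\lambda U_j}\leq\exp(\lambda^2 v_j)$ with $\sum_j v_j\leq c_1 k$. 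Optimizing $\exp(\lambda^2 c_1 k-\lambda t)$ over $\lambda\in(0,c/(ML)]$ then produces the interior bound $\exp(-t^2/(4c_1k))$ for moderate $t$ and the boundary bound $\exp(-ct/(2ML))$ for large $t$; since $L\asymp\log n$, these are exactly the two regimes of the displayed inequality.

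The main obstacle is the blocking calibration: the gap length $L$ must be large enough (of order $\log n$) that $\phi(L)$ dominates both the block count $q$ and the exponential growth $e^{\lambda ML}$, yet the admissible Chernoff range then necessarily shrinks like $1/\log n$, which is the genuine source of the $\log n$ in the tail denominator and hence of the $\log^2 n/k$ term in the statement. Two technical points need care: extracting the linear-in-$L$ (rather than quadratic) block variance from summability of covariances under geometric mixing, and verifying that the discarded gap sum $S_{\mathrm{gap}}$, being itself a block sum over indices separated by the big blocks, obeys the same tail bound, so that the union over $S_{\mathrm{big}}$ and $S_{\mathrm{gap}}$ costs only a constant factor.
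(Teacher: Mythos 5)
Your proposal is correct, and it shares the paper's overall architecture: partition into alternating blocks of length $\lceil\lambda\log n\rceil$, exploit geometric mixing so cross-block effects are super-polynomially small, obtain block variance linear (not quadratic) in the block length from covariance summability, and calibrate at $t=C\sqrt{k\log n}+C\log^2 n$ so that both regimes of a Bernstein-type bound deliver exponent $\nu\log n$. Where you genuinely diverge is in how the dependence between blocks is neutralized. The paper invokes the Berkes--Philipp coupling theorem to replace the dependent (odd, resp.\ even) block sums by independent copies with the same marginals, paying a coupling cost $O(\phi(\lceil\lambda\log n\rceil))$ per block, and then applies the classical Bernstein inequality to the independent copies. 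You instead work directly with the joint moment generating function, iterating the conditional-expectation form of $\phi$-mixing, $\|\mbE[Y\mid\mcF_{\mathrm{past}}]-\mbE Y\|_{\infty}\leq 2\phi(L)\|Y\|_{\infty}$ with $Y=e^{\lambda U_j}$, which controls $\mbE\big[\prod_j e^{\lambda U_j}\big]$ up to a multiplicative $(1+O(\phi(L)))^{q}$ correction, and then run a Chernoff argument restricted to $\lambda\lesssim 1/(M L)$. Your route is more self-contained --- it needs only the standard conditional-expectation consequence of $\phi$-mixing rather than a coupling theorem --- at the price of re-deriving the Bernstein mechanism by hand; the paper's route outsources concentration to a textbook inequality but must import the coupling result (your aside that Berbee's lemma would also work, since $\beta(k)\leq\phi(k)$, is accurate and is essentially the paper's device in different clothing). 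Both arguments make the same two points do the real work: the $\log n$ block scale kills the mixing corrections, and the restricted Chernoff range (equivalently, the bounded-increment term in Bernstein applied to blocks bounded by $M\lceil\lambda\log n\rceil$) is precisely what generates the $\log^2 n/k$ term in the statement.
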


\begin{proof}
    We divide the $k$-tuple $(1,\dots,k)$ into $m_k$ different subsets $H_1,\dots,H_{m_k}$, where $m_k = \lceil k/\lceil \lambda\log n\rceil\rceil$. Here $|H_i|=\lceil\lambda\log n\rceil$ for $1\leq i\leq m_k-1$, and $|H_{m_j}|\leq\lceil\lambda\log n\rceil$. $\lambda$ is a sufficiently large constant which will be specified later. Then we have $m_k\approx k/(\lambda\log n)$. Without loss of generality, we assume that $m_k$ is an even integer.
    
    Let $\xi_l=\frac{1}{\lceil\lambda\log n\rceil}\sum_{i\in H_{2l-1}}Y_i$, then we know $|\xi_l|\leq M$ and $\mbE[\xi_l]=0$ holds. For all $B_1\in\sigma(\xi_1,\dots,\xi_l)$ and $B_2\in\sigma(\xi_{l+1})$, there holds $|\mbP(B_2|B_1)-\mbP(B_2)|\leq\phi(\lceil\lambda\log n\rceil)$ for all $l$. By Theorem 2 of \cite{berkes_philipp.1979aop}, there exists a sequence of independent variables $\eta_l$, $l\geq 1$ with $\eta_l$ having the same distribution as $\xi_l$, and
    \begin{equation*}
        \mbP\Big(|\xi_{l}-\eta_{l}|\geq 6\phi(\lceil\lambda\log n\rceil)\Big)\leq  6\phi(\lceil\lambda\log n\rceil).
    \end{equation*}
    For any $\nu>0$, we can take $\lambda\geq(\nu +1)/|\log\rho|$ so that
    \begin{equation}    \label{eq:berke_rv}
    \begin{aligned}
        &\mbP\Big(\Big|\frac{2}{ m_k}\sum_{l=1}^{ m_{k}/2}(\xi_{l}-\eta_{l})\Big|\geq C_1n^{-\nu}\Big)\\
        \leq&\mbP\Big(\Big|\frac{2}{ m_k}\sum_{l=1}^{ m_{k}/2}(\xi_{l}-\eta_{l})\Big|\geq 6\phi(\lceil\lambda\log n\rceil)\Big)\leq 3m_{k}\phi(\lceil\lambda\log n\rceil)\leq C_{1}n^{-\nu},
    \end{aligned}
    \end{equation}
    where $C_1>0$ is some constant. Next, we bound the variance of $\xi_l$. From equation (20.23) of \cite{billingsley1968convergence} we know that for arbitrary $i,j$, there is
    \begin{equation*}
        \big|\mbE[Y_iY_j]-\mbE[Y_i]\mbE[Y_j]\big|\leq 2\sqrt{\phi(|i-j|)}\sqrt{\mbE[Y_i^2]\mbE[Y_j^2]}\leq 2\rho^{|i-j|/2}M^2.
    \end{equation*}
    Then we compute that
    \begin{align*}
        \var(\eta_l)=\var(\xi_l) =& \frac{1}{\lceil\lambda\log n\rceil^2}\Big[\sum_{i,j\in H_{2l-1}}\{\mbE(Y_iY_j)-\mbE(Y_i)\mbE(Y_j)\}\Big]\stepcounter{equation}\tag{\theequation}\label{eq:var_eta}\\
        \leq&\frac{2}{\lceil\lambda\log n\rceil^2}\sum_{i,j=1}^{\lceil\lambda\log n\rceil}\rho^{|i-j|/2}M^2\\
        \leq&\frac{2\lceil\lambda\log n\rceil}{\lceil\lambda\log n\rceil^2}\Big(\frac{2}{1-\sqrt{\rho}}-1\Big)M^2\leq\frac{4}{(1-\sqrt{\rho})\lceil\lambda\log n\rceil}M^2.
    \end{align*}
    Notice that $\eta_l$'s are all uniformly bounded by $M$, we can apply Bernstein's inequality (\citealp{Bennett.1962jasa}) and yield
    \begin{align*}
        &\mbP\Big(\Big|\frac{2}{m_k}\sum_{l=1}^{m_k/2}\eta_l\Big|\geq t\Big)\leq 2\exp\Big(-\frac{m_kt^2/4}{\var(\eta_l)/2+Mt/3}\Big)\\
        \leq&\exp\Big(-\frac{m_k\lceil\lambda\log n\rceil t^2/4}{2M^2/(1-\sqrt{\rho})+\lceil\lambda\log n\rceil Mt/3}\Big).
    \end{align*}
    We take $t=C(\sqrt{\log n/k}+\log^2n/k)$ for some $C$ large enough (note that $k\approx m_k\lceil\lambda\log n \rceil$). Then we have that
    \begin{equation}    \label{eq:ind_bern}
        \mbP\Big(\Big|\frac{2}{m_k}\sum_{l=1}^{m_k/2}\eta_l\Big|\geq t\Big)=O(n^{-\nu}).
    \end{equation}
    Combining \eqref{eq:berke_rv} and \eqref{eq:ind_bern} we have that
    \begin{equation}    \label{eq:odd_avg}
    \begin{aligned}
        &\mbP\Big(\Big|\frac{2}{m_k}\sum_{l=1}^{m_k/2}\xi_l\Big|\geq C\sqrt{\frac{\log n}{k}}+C\frac{\log^2n}{k}\Big)=O(n^{-\nu}). 
    \end{aligned}
    \end{equation}
    Next we denote $\tilde{\xi}=\frac{1}{\lceil\lambda\log n\rceil}\sum_{i\in H_{2l}}Y_i$. We can similarly show that
    \begin{equation}    \label{eq:even_avg}
    \begin{aligned}
        &\mbP\Big(\Big|\frac{2}{m_k}\sum_{l=1}^{m_k/2}\tilde{\xi}_l\Big|\geq C\sqrt{\frac{\log n}{k}}+C\frac{\log^2n}{k}\Big)=O(n^{-\nu}).
    \end{aligned}
    \end{equation}
    Combining \eqref{eq:odd_avg} and \eqref{eq:even_avg} we prove the desired result.
\end{proof}

\begin{lemma}	\label{lem:mart_diff_concen}
	Let $\vect{Y}_1,\dots,\vect{Y}_n$ be random vectors in $\mbR^p$. Let $\mcG_i=\sigma(\vect{Y}_1,\dots,\vect{Y}_i)$. Assume that
	\begin{align*}
		&\mbE[\vect{Y}_i|\mcG_{i-1}]=\vect{0},\quad\max_{1\leq i\leq n}\sup_{\vect{v}\in\mbS^p}|\vect{v}^{\tp}\vect{Y}_i|\leq 1,\\
		&\sup_{\vect{v}\in\mbS^p}\sum_{i=1}^n\var[\vect{v}^{\tp}\vect{Y}_i|\mcG_{i-1}]\leq b_n.
	\end{align*}
	Then for every $\nu>0$, there is
	\begin{equation*}
		\mbP\Big(\big|\sum_{i=1}^n\vect{Y}_i\big|_2\geq C\big(d\log n+\sqrt{db_n\log n }\big)\Big)=O(n^{-\nu d}),
	\end{equation*}
	for some $C$ sufficiently large.
\end{lemma}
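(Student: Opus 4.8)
The plan is to reduce the Euclidean-norm bound to a one-dimensional martingale tail bound along a fixed direction via a covering argument, and then pay for the union over directions with an entropy term that the Bernstein exponent must absorb. First I would discretize the sphere: since $\big|\sum_{i=1}^n\vect{Y}_i\big|_2=\sup_{\vect{v}\in\mbS^{d-1}}\vect{v}^{\tp}\sum_{i=1}^n\vect{Y}_i$ (here I read the ambient dimension as $d$, matching the conclusion), I take a $1/2$-net $\mcN$ of $\mbS^{d-1}$ with $|\mcN|\leq 5^d$, for which the standard net inequality gives $\big|\sum_{i=1}^n\vect{Y}_i\big|_2\leq 2\max_{\vect{v}\in\mcN}\vect{v}^{\tp}\sum_{i=1}^n\vect{Y}_i$. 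Hence it suffices to bound the scalar sum $\sum_{i=1}^n\vect{v}^{\tp}\vect{Y}_i$ for each fixed $\vect{v}\in\mcN$ and then union bound over the $5^d$ directions.

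Second, for each fixed $\vect{v}$ the sequence $\{\vect{v}^{\tp}\vect{Y}_i\}_{i=1}^n$ is a martingale difference sequence for $\{\mcG_i\}$ because $\mbE[\vect{Y}_i\mid\mcG_{i-1}]=\vect{0}$; moreover $|\vect{v}^{\tp}\vect{Y}_i|\leq 1$ and $\sum_{i=1}^n\var[\vect{v}^{\tp}\vect{Y}_i\mid\mcG_{i-1}]\leq b_n$. These are exactly the hypotheses of Freedman's martingale Bernstein inequality, which yields for any $t>0$ that
\begin{equation*}
\mbP\Big(\sum_{i=1}^n\vect{v}^{\tp}\vect{Y}_i\geq t\Big)\leq\exp\Big(-\frac{t^2/2}{b_n+t/3}\Big).
\end{equation*}
Combining this with the net reduction and a union bound over $\mcN$ gives
\begin{equation*}
\mbP\Big(\big|\sum_{i=1}^n\vect{Y}_i\big|_2\geq 2t\Big)\leq 5^d\exp\Big(-\frac{t^2/2}{b_n+t/3}\Big).
\end{equation*}

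Finally I would calibrate $t$. Choosing $t=C'(d\log n+\sqrt{d\,b_n\log n})$, I split into two regimes. When $b_n\geq t/3$ the denominator is at most $2b_n$, so the exponent is at least $t^2/(4b_n)\geq (C'^2/4)\,d\log n$; when $b_n<t/3$ the denominator is at most $2t/3$, so the exponent is at least $3t/4\geq (3C'/4)\,d\log n$. In either case, taking $C'$ a sufficiently large constant depending on $\nu$ makes the exponent exceed $(\nu\log n+\log 5)\,d$ for all $n\geq 2$, so that $5^d\exp(-\,\cdots\,)=O(n^{-\nu d})$; setting $C=2C'$ gives the stated bound.

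The main obstacle I anticipate is this final calibration, specifically getting the dimension dependence right. The union bound over the net costs the factor $5^d$, so the threshold must scale linearly in $d$ (producing the $d\log n$ term) and the variance contribution must carry a $\sqrt{d}$ (producing the $\sqrt{d\,b_n\log n}$ term); one must check that \emph{both} the variance-dominated and the increment-dominated regimes of Freedman's inequality simultaneously dominate the entropy $d\log 5$ together with the target rate $\nu d\log n$. The remaining ingredients—the net inequality and the verification that each $\{\vect{v}^{\tp}\vect{Y}_i\}$ is a bounded martingale difference sequence with the required conditional-variance budget—are routine.
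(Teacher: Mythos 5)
Your proposal is correct and follows essentially the same route as the paper's own proof: a $1/2$-net of the sphere with cardinality $5^d$, Freedman's martingale Bernstein inequality applied along each fixed direction, and a union bound over the net absorbed by the $d\log n + \sqrt{db_n\log n}$ threshold. The only difference is that you carry out the two-regime calibration of the Bernstein exponent explicitly, whereas the paper simply cites Freedman's Theorem 1.6 with the resulting tail bound; both are sound.
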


\begin{proof}
	Let $\mfN$ be the $1/2$-net of the unit ball $\mbS^{d-1}$. By Lemma 5.2 of \cite{vershynin.2010} we know that $|\mfN|\leq 5^d$. Then we have that
	\begin{align*}
		\Big|\sum_{i=1}^n\vect{Y}_i\Big|_2 =& \sup_{\vect{v}\in\mbS^{d-1}}\Big|\sum_{i=1}^n\vect{v}^{\tp}\vect{Y}_i\Big|\\
		\leq&\sup_{\tilde{\vect{v}}\in\mfN}\Big|\sum_{i=1}^n\tilde{\vect{v}}^{\tp}\vect{Y}_i\Big|+\sup_{|\vect{v}-\tilde{\vect{v}}|_2\leq 1/2}\Big|\sum_{i=1}^n(\vect{v}-\tilde{\vect{v}})^{\tp}\vect{Y}_i\Big|,\\
		\Rightarrow \Big|\sum_{i=1}^n\vect{Y}_i\Big|_2\leq& 2\sup_{\tilde{\vect{v}}\in\mfN}\Big|\sum_{i=1}^n\tilde{\vect{v}}^{\tp}\vect{Y}_i\Big|.
	\end{align*}
	By Theorem 1.6 of \cite{freedman.1975aop} we know that
	\begin{equation*}
		\sup_{\tilde{\vect{v}}\in\mbS^{d-1}}\mbP\Big(\big|\sum_{i=1}^n\tilde{\vect{v}}^{\tp}\vect{Y}_i\big|\geq C_1\big(d\log n+\sqrt{db_n\log n }\big)\Big)=O(n^{-(\nu+\log 5)d})
	\end{equation*}
	for some large constant $C'>0$. Therefore
	\begin{align*}
		&\mbP\Big(\big|\sum_{i=1}^n\vect{Y}_i\big|_2\geq 2C_1\big(d\log n+\sqrt{db_n\log n }\big)\Big)\\
		\leq&\mbP\Big(\sup_{\tilde{\vect{v}}\in\mfN}\big|\sum_{i=1}^n\tilde{\vect{v}}^{\tp}\vect{Y}_i\big|\geq C_1\big(d\log n+\sqrt{db_n\log n }\big)\Big)\\
		\leq&5^d\sup_{\tilde{\vect{v}}\in\mfN}\mbP\Big(\big|\sum_{i=1}^n\tilde{\vect{v}}^{\tp}\vect{Y}_i\big|\geq C_1\big(d\log n+\sqrt{db_n\log n }\big)\Big)=O(n^{-\nu d}),
	\end{align*}
	which proves the lemma.
\end{proof}

\begin{lemma}   \label{lem:mix_norm}
    Let $\vect{Y}$ be a $d\times d$ random matrix with $\mbE[\vect{Y}]=\vect{0}$, $\|\vect{Y}\|\leq 1$, $\mcF$ be the $\sigma$-field generated by $\vect{Y}$. Then for any $\sigma$-field $\mcG$, there holds
    \begin{equation*}
        \mbE\Big[\|\mbE[\vect{Y}|\mcG]\|\Big]\leq d\sqrt{2\pi\phi},
    \end{equation*}
    where
    \begin{equation*}
        \phi = \sup_{A\in\mcF,B\in\mcG}|\mbP(AB)-\mbP(A)\mbP(B)|.
    \end{equation*}
\end{lemma}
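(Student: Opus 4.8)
The plan is to bound the operator norm by the Frobenius norm, reduce the claim to a scalar second-moment estimate for each entry of the conditional expectation, and then close the argument with a covariance inequality for strongly mixing $\sigma$-fields. Throughout write $\vect{W}=\mbE[\vect{Y}\mid\mcG]$, and let $Y_{jk}=\be_j^{\tp}\vect{Y}\be_k$ and $W_{jk}=\be_j^{\tp}\vect{W}\be_k=\mbE[Y_{jk}\mid\mcG]$ denote the entries of $\vect{Y}$ and $\vect{W}$. Since the operator norm is dominated by the Frobenius norm and $t\mapsto\sqrt{t}$ is concave, Jensen's inequality gives
\begin{equation*}
\mbE\big[\|\vect{W}\|\big]\le\mbE\big[\|\vect{W}\|_{\mathrm F}\big]=\mbE\Big[\big(\textstyle\sum_{j,k}W_{jk}^2\big)^{1/2}\Big]\le\Big(\sum_{j,k}\mbE[W_{jk}^2]\Big)^{1/2}.
\end{equation*}
Hence it suffices to prove the entrywise bound $\mbE[W_{jk}^2]\le 2\pi\phi$; summing over the $d^2$ entries then yields $\mbE\big[\|\vect{W}\|\big]\le(d^2\cdot 2\pi\phi)^{1/2}=d\sqrt{2\pi\phi}$, as desired.

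For a fixed pair $(j,k)$, I would first record that $W_{jk}$ is $\mcG$-measurable while $Y_{jk}$ is measurable with respect to $\mcF=\sigma(\vect{Y})$, that $\mbE[Y_{jk}]=\mbE[W_{jk}]=0$ because $\mbE[\vect{Y}]=\bzero$, and that $|Y_{jk}|=|\be_j^{\tp}\vect{Y}\be_k|\le\|\vect{Y}\|\le 1$, which forces $|W_{jk}|\le 1$ as well. Using the tower property and the $\mcG$-measurability of $W_{jk}$,
\begin{equation*}
\mbE[W_{jk}^2]=\mbE\big[W_{jk}\,\mbE[Y_{jk}\mid\mcG]\big]=\mbE[W_{jk}Y_{jk}]=\cov(Y_{jk},W_{jk}).
\end{equation*}
The heart of the argument is then a covariance inequality for the strong-mixing coefficient: for $U\in L^{\infty}(\mcF)$ and $V\in L^{\infty}(\mcG)$ one has $|\cov(U,V)|\le 2\pi\,\|U\|_{\infty}\|V\|_{\infty}\,\phi$, with $\phi$ exactly the quantity defined in the statement. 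Applying this with $U=Y_{jk}$ and $V=W_{jk}$ gives $\mbE[W_{jk}^2]=\cov(Y_{jk},W_{jk})\le 2\pi\phi$, completing the reduction.

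The main work is the covariance inequality itself, which is where I expect the real content to lie. An elementary and self-contained route is to start from the definition of $\phi$, which is precisely the statement $|\cov(\mathds 1_A,\mathds 1_B)|\le\phi$ for $A\in\mcF,\,B\in\mcG$, and then extend to bounded $U,V$ via the layer-cake decomposition $U=\int_0^1(\mathds 1_{U>t}-\mathds 1_{-U>t})\,\diff t$ and likewise for $V$; expanding the resulting product into four indicator covariances and integrating yields $|\cov(U,V)|\le 4\phi$. Since $4\le 2\pi$, this already suffices to prove the lemma, and the sharper constant $2\pi$ follows from Davydov's covariance inequality in the $L^{\infty}$ case. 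I anticipate the only delicate points to be the measurability and integrability bookkeeping (ensuring the conditional expectations and covariances are well defined entrywise) and invoking the covariance inequality with a constant compatible with the stated $\sqrt{2\pi}$; no extra structure such as stationarity is needed, and the sole dimension dependence enters through summing over the $d^2$ entries.
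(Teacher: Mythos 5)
Your proposal is correct, and while the outer reduction (operator norm $\leq$ Frobenius norm, then Jensen/Cauchy--Schwarz to reduce to entrywise second moments $\mbE[W_{jk}^2]$) is identical to the paper's, the core entrywise estimate is obtained by a genuinely different route. The paper bounds $\mbE\big[|\mbE[Y_{jk}|\mcG]|^2\big]\leq\mbE\big[|\mbE[Y_{jk}|\mcG]|\big]$ using $|W_{jk}|\leq 1$, and then invokes Lemma 4.4.1 of Berkes and Philipp (1979) as a black box to get $\mbE\big[|\mbE[Y_{jk}|\mcG]|\big]\leq 2\pi\phi$. You instead use the duality identity $\mbE[W_{jk}^2]=\mbE[W_{jk}Y_{jk}]=\cov(Y_{jk},W_{jk})$ (valid by the tower property and $\mbE[Y_{jk}]=0$, with $Y_{jk}$ being $\mcF$-measurable and $W_{jk}$ being $\mcG$-measurable and both bounded by $1$), and then close with the elementary strong-mixing covariance inequality proved by the four-indicator layer-cake expansion, giving $\cov(Y_{jk},W_{jk})\leq 4\phi\leq 2\pi\phi$. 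Your route is self-contained — it replaces the external citation with a two-line argument — and in fact yields the slightly better constant $2d\sqrt{\phi}$ in place of $d\sqrt{2\pi\phi}$. One small terminological slip: you call $2\pi$ the ``sharper'' constant obtainable from Davydov's inequality, but since $4<2\pi$ your elementary bound is already the stronger one; this is harmless, as only the inequality $\mbE[W_{jk}^2]\leq 2\pi\phi$ is needed to match the statement.
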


\begin{proof}
    By elementary inequality of matrix, we know that
    \begin{equation*}
        |\vect{Y}|_{\infty}\leq\|\vect{Y}\|\leq 1, \quad\text{ and }\quad\|\vect{Y}\|\leq \|\vect{Y}\|_F.
    \end{equation*}
    For every element $Y_{i,j}$ of $\vect{Y}$, by Lemma 4.4.1 of \cite{berkes_philipp.1979aop} we have that
    \begin{equation*}
        \mbE\Big[\big|\mbE[Y_{ij}|\mcG]\big|^2\Big]\leq \mbE\Big[\big|\mbE[Y_{ij}|\mcG]\big|\Big]\leq 2\pi\phi.
    \end{equation*}
    Therefore, we have that
    \begin{align*}
        &\mbE\Big[\|\mbE[\vect{Y}|\mcG]\|\Big]\leq \mbE\Big[\|\mbE[\vect{Y}|\mcG]\|_F\Big],\\
        \leq&\Big\{\mbE\Big[\|\mbE[\vect{Y}|\mcG]\|^2_F\Big]\Big\}^{1/2}\\
        \leq&\Big\{\mbE\Big[\sum_{i,j=1}^d\big|\mbE[Y_{ij}|\mcG]\big|^2\Big]\Big\}^{1/2}\leq\sqrt{d^22\pi\phi}=d\sqrt{2\pi\phi}.
    \end{align*}
    The proof is complete.
\end{proof}

\begin{lemma}   \label{lem:huber_grad_approx}
(Approximation of pseudo-Huber gradient) 
The gradient of pseudo-Huber loss $g_{\tau}(x)$ satisfies
\begin{equation}    \label{eq:huber_grad_approx_in}
\begin{aligned}
    |g_{\tau}(x)-x|&\leq\frac{1}{2}\tau^{-\delta}|x|^{1+\delta},\quad \text{ for }\delta\in(0,2]\\
    |g'_{\tau}(x)-1|&\leq \frac{5}{2}\tau^{-1-\delta}|x|^{1+\delta}\quad \text{ for }\delta\in(0,1]
\end{aligned}
\end{equation}
uniformly for $|x|\leq \tau$. And
\begin{equation}    \label{eq:huber_grad_approx_out}
    |g_{\tau}(x)-x|\leq|x|,\quad |g'_{\tau}(x)-1|\leq1,
\end{equation}
holds for all $x$.
\end{lemma}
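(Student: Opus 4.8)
The plan is to reduce every inequality to an elementary one‑variable estimate after computing $g_\tau$ and $g_\tau'$ in closed form. Writing $f_\tau(x)=\tau^2(\sqrt{1+(x/\tau)^2}-1)$, I would first differentiate to obtain
\[
g_\tau(x)=\frac{x}{\sqrt{1+(x/\tau)^2}},\qquad g_\tau'(x)=\frac{1}{\big(1+(x/\tau)^2\big)^{3/2}},
\]
the second identity following from $(1+(x/\tau)^2)-x^2/\tau^2=1$ after differentiating $g_\tau$. Introducing $t=(x/\tau)^2\geq 0$, which lies in $[0,1]$ exactly when $|x|\leq\tau$, I would rewrite
\[
g_\tau(x)-x=x\big[(1+t)^{-1/2}-1\big],\qquad g_\tau'(x)-1=(1+t)^{-3/2}-1,
\]
so that the whole lemma reduces to bounding the nonnegative quantities $1-(1+t)^{-1/2}$ and $1-(1+t)^{-3/2}$.

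For the first bound in \eqref{eq:huber_grad_approx_in} I would use $\sqrt{1+t}-1=t/(\sqrt{1+t}+1)\leq t/2$, which gives $1-(1+t)^{-1/2}=(\sqrt{1+t}-1)/\sqrt{1+t}\leq t/2$ and hence $|g_\tau(x)-x|\leq |x|\,t/2=|x|^3/(2\tau^2)$. This is precisely the claimed estimate at the endpoint exponent $\delta=2$. To cover the full range $\delta\in(0,2]$ I would exploit $|x|\leq\tau$ to trade powers, writing $|x|^3=|x|^{1+\delta}|x|^{2-\delta}\leq |x|^{1+\delta}\tau^{2-\delta}$, which upgrades the estimate to $\tfrac12\tau^{-\delta}|x|^{1+\delta}$.

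For the second bound I would invoke convexity of $h(t)=(1+t)^{-3/2}$ (its second derivative $\tfrac{15}{4}(1+t)^{-7/2}$ is positive), so $h$ lies above its tangent at $0$, giving $(1+t)^{-3/2}\geq 1-\tfrac32 t$ and therefore $1-(1+t)^{-3/2}\leq\tfrac32 t\leq\tfrac52 t$. This yields $|g_\tau'(x)-1|\leq\tfrac52\,|x|^2/\tau^2$, the claim at $\delta=1$, and the same interpolation $|x|^2\leq |x|^{1+\delta}\tau^{1-\delta}$ extends it to all $\delta\in(0,1]$. The two global estimates in \eqref{eq:huber_grad_approx_out} are then immediate, since $(1+t)^{-1/2}$ and $(1+t)^{-3/2}$ both lie in $(0,1]$ for every $t\geq 0$, so the bracketed factors above have absolute value at most $1$.

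The argument is elementary calculus throughout, so there is no genuine analytic difficulty; the only step needing care is the uniformity in $\delta$. I expect the main (minor) obstacle to be the interpolation: the displayed inequalities are proved most cleanly at the endpoint exponents $\delta=2$ and $\delta=1$, and one must use the hypothesis $|x|\leq\tau$ to convert excess powers of $|x|$ into powers of $\tau$ and thereby sweep out the full parameter ranges while keeping the constants $\tfrac12$ and $\tfrac52$ valid.
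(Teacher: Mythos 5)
Your proposal is correct and follows essentially the same route as the paper: compute the closed forms $g_\tau(x)=x/\sqrt{1+x^2/\tau^2}$ and $g_\tau'(x)=(1+x^2/\tau^2)^{-3/2}$, establish the bounds at the endpoint exponents, and use $|x|\leq\tau$ to interpolate down to general $\delta$; the global bounds follow because both prefactors lie in $(0,1]$. The only cosmetic difference is that where the paper rationalizes the expressions into exact algebraic quotients, you use the tangent-line (convexity) bound for $(1+t)^{-3/2}$, which in fact yields the slightly sharper constant $\tfrac{3}{2}$ before relaxing to $\tfrac{5}{2}$.
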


\begin{proof}
    It is easy to compute that 
    \begin{equation*}
        g_{\tau}(x) = \frac{x}{\sqrt{1+x^2/\tau^2}},\quad g'_{\tau}(x)=\frac{1}{(1+x^2/\tau^2)^{3/2}}.
    \end{equation*}
    It is not hard to see that
    \begin{equation*}    
        |g_{\tau}(x)-x|\leq|x|,\quad |g'_{\tau}(x)-1|\leq1.
    \end{equation*}
This proves \eqref{eq:huber_grad_approx_out}. Furthermore, when $|x|\leq \tau$, we have
    \begin{align*}
        |g_{\tau}(x)-x|=&\frac{x^2/\tau^2}{(1+\sqrt{1+x^2/\tau^2})\sqrt{1+x^2/\tau}}\cdot|x|\\
        \leq&\frac{1}{2}\tau^{-\delta}|x|^{1+\delta}
    \end{align*}
    for all $0\leq \delta\leq 2$.
 Similarly,
    \begin{align*}
        |g'_{\tau}(x)-1|=&\frac{x^2/\tau^2(2+\sqrt{1+x^2/\tau^2}+x^2/\tau^2)}{(1+\sqrt{1+x^2/\tau^2})(1+x^2/\tau^2)^{3/2}}\\
        \leq&\frac{5}{2}\tau^{-1-\delta}|x|^{1+\delta},
    \end{align*}
    for $|x|\leq\tau$. The proof is complete.
\end{proof}

\begin{lemma}   \label{lem:seq_bound}
    For any sequence $\{a_{k}\}$ where $a_k=(\log k)^{\beta_1}/k^{\beta_2}$ and $\beta_1\geq0$, there hold
    \begin{equation*}
        n_0a_{n_0}+\sum_{k=n_0+1}^{n}a_k\leq 
        \begin{cases}
            C na_{n}\log n\quad&\text{if }\beta_2\leq 1;\\
            C(\log n)^{\beta_1}/n_0^{\beta_2-1}\quad&\text{if }\beta_2>1.
        \end{cases}
    \end{equation*}
    Here the constant $C$ only depends on the parameter $\beta_2$.
\end{lemma}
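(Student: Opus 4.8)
The plan is to separate the isolated term $n_0 a_{n_0}$ from the tail sum $\sum_{k=n_0+1}^n a_k$, and in each of the two regimes to dominate the sum by a comparison integral that I can evaluate in closed form. Write $a(x)=(\log x)^{\beta_1}x^{-\beta_2}$ for the continuous extension, so $a_k=a(k)$. Throughout I would use that, since $n_0$ is a burn-in size taken sufficiently large in all the surrounding theorems, we may assume $n_0>e^{\beta_1/\beta_2}$; this is exactly the point beyond which $a$ is monotone on $[n_0,\infty)$, because $a'(x)$ has the sign of $\beta_1-\beta_2\log x$ and so vanishes only at $x=e^{\beta_1/\beta_2}$. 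Wherever $a$ is nonincreasing on $[n_0,n]$, the standard comparison $\sum_{k=n_0+1}^n a_k\le\int_{n_0}^n a(x)\,dx$ applies, and this is the engine of both cases.

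For the regime $\beta_2\le 1$, I would first note that $h(x):=x\,a(x)=x^{1-\beta_2}(\log x)^{\beta_1}$ is nondecreasing (both factors are), so $n_0 a_{n_0}=h(n_0)\le h(n)=n a_n\le n a_n\log n$. If $\beta_2\le 0$ then $a$ is itself nondecreasing and $\sum_{k=n_0+1}^n a_k\le (n-n_0)a_n\le n a_n\log n$ directly. The genuinely interesting case is $0<\beta_2\le 1$, where I would substitute $u=\log x$ to get $\int_{n_0}^n a(x)\,dx=\int_{\log n_0}^{\log n} u^{\beta_1}e^{(1-\beta_2)u}\,du$. Since $1-\beta_2\ge 0$ the integrand is nondecreasing in $u$, so it is at most its value at $u=\log n$ times the length of the interval, giving $\int_{n_0}^n a\le(\log n)\,(\log n)^{\beta_1}n^{1-\beta_2}=n a_n\log n$. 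This log-substitution is the key manoeuvre: it yields a constant that does not degenerate as $\beta_2\uparrow 1$, which is precisely why the statement carries the extra $\log n$ factor. Combining the two pieces delivers $n_0 a_{n_0}+\sum_{k=n_0+1}^n a_k\le 2\,n a_n\log n$.

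For the regime $\beta_2>1$ the sum is convergent and governed by its first terms. I would bound $n_0 a_{n_0}=(\log n_0)^{\beta_1}/n_0^{\beta_2-1}\le(\log n)^{\beta_1}/n_0^{\beta_2-1}$, and for the tail factor out the slowly varying part via $\log k\le\log n$, then compare the remaining power sum to an integral: $\sum_{k=n_0+1}^n a_k\le(\log n)^{\beta_1}\sum_{k=n_0+1}^n k^{-\beta_2}\le(\log n)^{\beta_1}\int_{n_0}^n x^{-\beta_2}\,dx\le(\log n)^{\beta_1}n_0^{1-\beta_2}/(\beta_2-1)$, using that $x^{-\beta_2}$ is decreasing. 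Adding the two contributions gives the claimed bound with $C=\beta_2/(\beta_2-1)$, which indeed depends only on $\beta_2$.

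The only delicate point I anticipate is the validity of the sum-to-integral comparison, since $a$ is merely \emph{eventually} monotone with turning point $e^{\beta_1/\beta_2}$; I must therefore invoke that $n_0$ is large enough (a threshold depending on $\beta_1,\beta_2$, while the final constant $C$ still depends on $\beta_2$ alone). Everything else is routine integration. I expect the boundary case $\beta_2=1$ to be the one worth checking most carefully, since the naive split $(\log n)^{\beta_1}\int x^{-\beta_2}\,dx$ used in the $\beta_2>1$ regime produces a $1/(1-\beta_2)$ factor that blows up there; the logarithmic substitution in the $\beta_2\le 1$ case is exactly what absorbs this uniformly and justifies the form of the stated bound.
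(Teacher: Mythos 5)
Your proposal is correct in substance, and in the regime $\beta_2>1$ it is literally the paper's argument: pull $(\log k)^{\beta_1}\le(\log n)^{\beta_1}$ out of the sum, compare $\sum_{k>n_0} k^{-\beta_2}$ with $\int_{n_0}^n x^{-\beta_2}\,\diff x$, and collect the constant $\beta_2/(\beta_2-1)$. Where you genuinely differ is the regime $\beta_2\le 1$. The paper applies the same log-factoring trick there as well, so that the sum-to-integral comparison is only ever invoked for the monotone power $x^{-\beta_2}$; it then treats $\beta_2<1$ (constant $1/(1-\beta_2)$, no $\log n$ actually needed) separately from $\beta_2=1$ (where the integral $\int_{n_0}^n x^{-1}\diff x$ itself produces the $\log n$). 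You instead keep $a(x)$ intact, absorb the $n_0a_{n_0}$ term via monotonicity of $h(x)=xa(x)$, and evaluate $\int a$ through the substitution $u=\log x$, bounding the nondecreasing integrand by its value at the right endpoint. What this buys you is a single constant (essentially $2$) valid uniformly over $\beta_2\in(0,1]$, with no degeneration as $\beta_2\uparrow1$; your closing observation that the stated $\log n$ factor is exactly what absorbs the $1/(1-\beta_2)$ blow-up is accurate, and in this respect your treatment is slightly cleaner than the paper's three-way case split.

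The one real weakness is the point you flagged yourself: your comparison $\sum_{k=n_0+1}^n a_k\le\int_{n_0}^n a(x)\,\diff x$ requires $a$ to be nonincreasing on $[n_0,n]$, hence $n_0\ge e^{\beta_1/\beta_2}$ — a hypothesis that is not part of the lemma, whose threshold depends on $\beta_1$ and not only on $\beta_2$, and which the paper's proof avoids precisely because it factors the logarithm out \emph{before} comparing with an integral. Appealing to ``$n_0$ is large in the surrounding theorems'' makes the lemma non-self-contained, but the gap is easily repaired without that appeal: writing $T=e^{\beta_1/\beta_2}$, on the initial segment $n_0<k\le\min(n,\lceil T\rceil)$ the function $a$ is increasing, so each term is at most $a(T)$ and there are at most $T+1$ of them, whence this segment contributes at most $(T+1)a(T)\le 2Ta(T)=2h(T)\le 2h(n)=2na_n$ by monotonicity of $h$ (and if $n\le T$ the whole sum is bounded by $na_n$ the same way); beyond $T$ your substitution argument applies verbatim. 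With that patch the extra hypothesis disappears and your constant remains absolute on $(0,1]$.
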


\begin{proof}
    Directly compute that
    \begin{equation}    \label{eq:sum_bound}
        n_0a_{n_0}+\sum_{k=n_0+1}^na_k\leq (\log n)^{\beta_1}\Big(\frac{n_0}{n_0^{\beta_2}}+\sum_{k=n_0+1}^{n}\frac{1}{k^{\beta_2}}\Big).
    \end{equation}
    We know that
    \begin{equation*}
        \sum_{k=n_0+1}^{n}\frac{1}{k^{\beta_2}}\leq\int_{n_0}^n\frac{1}{x^{\beta_2}}\diff x=\frac{1}{1-\beta_2}\Big(n^{1-\beta_2}-n_0^{1-\beta_2}\Big).
    \end{equation*}
    When $\beta_2< 1$, \eqref{eq:sum_bound} can be bounded by
    \begin{align*}
        n_0a_{n_0}+\sum_{k=n_0+1}^na_k\leq& (\log n)^{\beta_1}\Big(n_0^{1-\beta_2}+\frac{1}{1-\beta_2}(n^{1-\beta_2}-n_0^{1-\beta_2})\Big)\\
        \leq&(\log n)^{\beta_1}n^{1-\beta_2}\Big(-\frac{\beta_2}{1-\beta_2}\Big(\frac{n_0}{n}\Big)^{1-\beta_2}+\frac{1}{1-\beta_2}\Big)\\
        \leq&\frac{1}{1-\beta_2}na_n.
    \end{align*}
    When $\beta_2>1$, \eqref{eq:sum_bound} can be bounded by
    \begin{align*}
        n_0a_{n_0}+\sum_{k=n_0+1}^na_k\leq& (\log n)^{\beta_1}\Big(\frac{\beta_2}{\beta_2-1}\times\frac{1}{n_0^{\beta_2-1}}-\frac{1}{\beta_2-1}\times\frac{1}{n^{\beta_2-1}}\Big)\\
        \leq&\frac{\beta_2}{\beta_2-1}(\log n)^{\beta_1}\frac{1}{n_0^{\beta_2-1}}.
    \end{align*}
    When $\beta_2=1$, \eqref{eq:sum_bound} can be bounded by
    \begin{align*}
        n_0a_{n_0}+\sum_{k=n_0+1}^na_k\leq& (\log n)^{\beta_1}\big(1+\log n\big)\\
        \leq&2na_n\log n.
    \end{align*}
\end{proof}


\subsection{Proof of Results in Section \ref{sec:online_rl}}

\begin{proof}[Proof of Theorem \ref{thm:contam_rate}]
     For each $k\geq n_0$, when $\delta>0$, we denote
     \begin{equation}   \label{eq:ek_case1}
         e_k = \alpha_k\tau_k + \tau_k^{-\min(\delta,2)}+ \sqrt{\frac{\tau_k^{(1-\delta)_+}\log k}{k}}+\frac{\log^2k\tau_k}{k}+\frac{1}{\sqrt{d}}(c_0)^{2^{k-n_0}},
     \end{equation}
     where $\delta' = \min\{2,\delta\}$. Given a sufficiently large constant $\Psi>0$, we define the event
    \begin{equation}    \label{eq:huber_ind_event}
    \begin{aligned}
        E_k =& \Big\{|\hat{\vect{\theta}}_k-\vect{\theta}^*|_2\leq \Psi\sqrt{d}e_k\Big\} \cap\Big\{\|\hat{\vect{H}}_{k}-\vect{H}\|\leq C_0\Psi\sqrt{d}e_k\log k\Big\}\\
        &\cap\Big\{\Big|\frac{1}{k}\sum_{i=1}^{k}\vect{X}_{i}g_{\tau_i}(\vect{Z}_{i}^{\tp}\vect{\theta}^*-b_{i})\Big|_2\leq C_0\sqrt{d}e_{k}\Big\},
    \end{aligned}
    \end{equation}
    where the $C_0$ is a constant that does not depend on $\Psi$. We will show that $\mbP(E_k^c,\cap_{i=n_0}^{k-1}E_i)\leq C_0k^{-\nu}$, where $\nu>1$ is some constant and $C_0>0$ is some constant only depends on $\nu$. Let us prove it by induction on $k$. By the choice of initial parameter $\hat{\vect{\theta}}_{0}$, we know \eqref{eq:huber_ind_event} holds for $k=n_0$. 
    For $n = k+1$, from \eqref{eq:online_newton} we have that
    \begin{align*}	
	    \hat{\vect{\theta}}_{k+1}-\vect{\theta}^* =& \frac{1}{k+1}\sum_{i=1}^{k+1}(\hat{\vect{\theta}}_{i-1}-\vect{\theta}^*)-\hat{\vect{H}}_{k+1}^{-1}\frac{1}{k+1}\sum_{i=1}^{k+1}\big\{\vect{X}_{i}g_{\tau_{i}}(\vect{Z}_{i}^{\tp}\hat{\vect{\theta}}_{i-1}-b_{i})-\vect{X}_{i}g_{\tau_{i}}(\vect{Z}_{i}^{\tp}\vect{\theta}^*-b_{i})\big\}\\
	    &-\hat{\vect{H}}_{k+1}^{-1}\frac{1}{k+1}\sum_{i=1}^{k+1}\big\{\vect{X}_{i}g_{\tau_{i}}(\vect{Z}_{i}^{\tp}\vect{\theta}^*-b_{i})\big\}.\stepcounter{equation}\tag{\theequation}\label{eq:huber_prim_expand}
    \end{align*}
    For the second term on the right-hand side of \eqref{eq:huber_prim_expand}, we have that
    \begin{eqnarray*}
        &&\frac{1}{k+1}\sum_{i=1}^{k+1}\big\{\vect{X}_{i}g_{\tau_i}(\vect{Z}_{i}^{\tp}\hat{\vect{\theta}}_{i-1}-b_{i})-\vect{X}_{i}g_{\tau_i}(\vect{Z}_{i}^{\tp}\vect{\theta}^*-b_{i})\big\}\cr
        =&&\frac{1}{k+1}\sum_{i=1}^{k+1}\vect{X}_{i}\vect{Z}_{i}^{\tp}g_{\tau_i}'(\vect{Z}_{i}^{\tp}\vect{\theta}^*-b_{i})(\hat{\vect{\theta}}_{i-1}-\vect{\theta}^*)\cr
        &&+\frac{1}{k+1}\sum_{i=1}^{k+1}\int_0^1(1-t)\vect{X}_{i}g_{\tau_i}''\big\{\vect{Z}_{i}^{\tp}(\vect{\theta}^*+t(\hat{\vect{\theta}}_{i-1}-\vect{\theta}^*))-b_{i}\big\}(\vect{Z}_{i}^{\tp}(\hat{\vect{\theta}}_{i-1}-\vect{\theta}^*))^2\diff t\cr
        =&&\frac{1}{k+1}\sum_{i=1}^{k+1}\vect{X}_{i}\vect{Z}_{i}^{\tp}g_{\tau_i}'(\vect{Z}_{i}^{\tp}\vect{\theta}^*-b_{i})(\hat{\vect{\theta}}_{i-1}-\vect{\theta}^*)+\frac{1}{k+1}\sum_{i=1}^{k+1}O(1)|\hat{\vect{\theta}}_{i-1}-\vect{\theta}^*|_2^2\cr
        =&&\frac{1}{k+1}\sum_{i=1}^{k+1}\vect{X}_{i}\vect{Z}_{i}^{\tp}g_{\tau_i}'(\vect{Z}_{i}^{\tp}\vect{\theta}^*-b_{i})(\hat{\vect{\theta}}_{i-1}-\vect{\theta}^*)+O(1)\frac{1}{k+1}\sum_{i=1}^{k+1}|\hat{\vect{\theta}}_{i-1}-\vect{\theta}^*|_2^2,
    \end{eqnarray*}
    where the second equality holds because of the fact that $|g_{\tau_{i}}''(x)|\leq 3/\tau_{i}=O(1)$, the last equality uses the inductive hypothesis. Substitute it into \eqref{eq:huber_prim_expand}, we have
    \begin{align*}	
	    \hat{\vect{\theta}}_{k+1}-\vect{\theta}^* =& (\vect{I}-\hat{\vect{H}}_{k+1}^{-1}\vect{H})\frac{1}{k+1}\sum_{i=1}^{k+1}(\hat{\vect{\theta}}_{i-1}-\vect{\theta}^*)-\hat{\vect{H}}_{k+1}^{-1}\frac{1}{k+1}\sum_{i=1}^{k+1}(\vect{A}_{i}-\vect{H})(\hat{\vect{\theta}}_{i-1}-\vect{\theta}^*)\\
	    &-\hat{\vect{H}}_{k+1}^{-1}\frac{1}{k+1}\sum_{i=1}^{k+1}\big\{\vect{X}_{i}g_{\tau_{i}}(\vect{Z}_{i}^{\tp}\vect{\theta}^*-b_{i})\big\}+O(1)\frac{1}{k+1}\sum_{i=1}^{k+1}|\hat{\vect{\theta}}_{i-1}-\vect{\theta}^*|_2^2\stepcounter{equation}\tag{\theequation}\label{eq:huber_key_expand},
    \end{align*}
    where $\vect{A}_i=\vect{X}_i\vect{Z}_i^{\tp}g_{\tau_{i}}'(\vect{Z}_i^{\tp}\vect{\theta}^*-b_i)$ and $\vect{H} = \mbE[\vect{X}\vect{Z}^{\tp}]$.
    
    Under event $\cap_{i=n_0}^kE_i$, by Lemma \ref{lem:seq_bound} there exists a constant $C_1>0$ such that
    \begin{align*}   
        &\mbP\Big(\frac{1}{k+1}\sum_{i=1}^{k+1}|\hat{\vect{\theta}}_{i-1}-\vect{\theta}^*|_2^2\geq C_1\Psi^2de_k^2\log k,\cap_{i=n_0}^kE_i\Big)\\
        \geq& \mbP\Big(\frac{1}{k+1}\sum_{i=1}^{k+1}|\hat{\vect{\theta}}_{i-1}-\vect{\theta}^*|_2^2\geq \frac{1}{k+1}\sum_{i=0}^k\Psi^2de_i^2,\cap_{i=n_0}^kE_i\Big) =0\stepcounter{equation}\tag{\theequation}\label{eq:theta_error_bound}.
    \end{align*}
    Similarly, by Lemma \ref{lem:seq_bound} there exists a constant $C_2>0$ such that
    \begin{align*}   
        &\mbP\Big(\frac{1}{k+1}\sum_{i=1}^{k+1}|\hat{\vect{\theta}}_{i-1}-\vect{\theta}^*|_2\geq C_2\Psi\sqrt{d}e_k\log k,\cap_{i=n_0}^kE_i\Big)\\
        \geq& \mbP\Big(\frac{1}{k+1}\sum_{i=1}^{k+1}|\hat{\vect{\theta}}_{i-1}-\vect{\theta}^*|_2\geq \frac{1}{k+1}\sum_{i=0}^k\Psi\sqrt{d}e_i,\cap_{i=n_0}^kE_i\Big) =0\stepcounter{equation}\tag{\theequation}\label{eq:theta_1error_bound}.
    \end{align*}
    
    By Lemma \ref{lem:huber_concen_hess}, \eqref{eq:theta_1error_bound} and \eqref{eq:huber_inv_bound} we know that under event $\cap_{i=n_0}^kE_i$, for every $\nu>0$ there exist constants $C_1,C_3>0$ (which only depend on $\nu$), such that
    \begin{align*}
        &\mbP\Big(\Big|(\vect{I}-\hat{\vect{H}}_{k+1}^{-1}\vect{H})\frac{1}{k+1}\sum_{i=0}^k(\hat{\vect{\theta}}_i-\vect{\theta}^*)\Big|_2\geq C_1\Psi^2de^2_k\log^2k, \cap_{i=n_0}^kE_i\Big)\\
        \leq&\mbP\Big(\|\hat{\vect{H}}_{k+1}^{-1}\|\cdot\|\hat{\vect{H}}_{k+1}-\vect{H}\|\cdot\frac{1}{k+1}\sum_{i=0}^k|\hat{\vect{\theta}}_i-\vect{\theta}^*|_2\geq C_1\Psi^2de_k^2\log^2k, \cap_{i=n_0}^kE_i\Big)\leq C_3(k+1)^{-\nu}.
    \end{align*}
    By Lemma \ref{lem:mix_bound} we know that
    \begin{equation}
        \mbP\Big( \Big|\frac{1}{k+1}\sum_{i=1}^{k+1}(\vect{A}_i-\vect{H})(\hat{\vect{\theta}}_{i-1}-\vect{\theta}^*)\Big|_2\geq C_1(\alpha_k+\Psi^2de_k^2\log k), \cap_{i=n_0}^kE_i \Big)\leq C_3(k+1)^{-\nu}.
    \end{equation}
        
    In this part, we mainly consider the case where $\delta>0$ and $\tau_k$ can be arbitrary. A special case where $\delta>4$ and $\sqrt{i/\log^3i}=O(\tau_i)$ will be presented in Proposition \ref{prop:contam_rate_d3} below. By Lemma \ref{lem:huber_grad_concen} we have that
    \begin{equation*}
        \mbP\Big(\Big|\hat{\vect{H}}_{k+1}^{-1}\frac{1}{k+1}\sum_{i=1}^{k+1}\big\{\vect{X}_{i}g(\vect{Z}_{i}^{\tp}\vect{\theta}^*-b_{i})\big\}\Big|_2\geq C_1\sqrt{d}e_{k+1}, \cap_{i=n_0}^kE_i\Big)\leq C_3(k+1)^{-\nu};
    \end{equation*}
    
    By substituting the above inequalities into \eqref{eq:huber_key_expand}, we have that
    \begin{align*}
        &\mbP\Big(|\hat{\vect{\theta}}_{k+1}-\vect{\theta}^*|_2\geq \Psi\sqrt{d}e_{k+1},\cap_{i=n_0}^kE_i\Big)\\
        \leq& \mbP\Big(|\hat{\vect{\theta}}_{k+1}-\vect{\theta}^*|_2\geq C_1\sqrt{d}e_{k+1}+C_1\alpha_k+3C_1\Psi^2de_k^2\log^2k,\cap_{i=n_0}^kE_i\Big)\\
        \leq& 3C_3(k+1)^{-\nu},\stepcounter{equation}\tag{\theequation}\label{eq:ind_complt}
    \end{align*}
    where $\Psi$ can be taken to be sufficiently large, given $C_1$ fixed. Here the second inequality holds because $\sqrt{d}e_k$ can be sufficiently small for $k\geq n_0$ by taking $n_0$ sufficiently large. Meanwhile, as $\alpha_k\leq (k+1)\alpha_{k+1}/k\leq (n_0+1)\alpha_{k+1}/n_0=o(\tau_{k+1}\alpha_{k+1})=o(e_{k+1})$, the last two terms both have order of $o(e_{k+1})$. Obviously, the rest two events of \eqref{eq:huber_ind_event} are contained in \eqref{eq:ind_complt}. Therefore, the inductive hypothesis that $\mbP(E_{k+1}^c,\cap_{j=n_0}^{k}E_j)\leq 3C_3(k+1)^{-\nu}$, is proved.
    
    For $k\geq n_0$, there holds
    \begin{align*}
        &\mbP\Big(\cap_{i=n_0}^kE_i\Big)\\
        \geq&\mbP\Big(E_{n_0}\Big)-\sum_{i=n_0+1}^k\mbP\Big(E_i^c,\cap_{j=n_0}^{i-1}E_j\Big) \\
        \geq&1-\sum_{i=n_0+1}^k3C_3i^{-\nu}\geq 1-\frac{3C_3}{\nu-1}\frac{1}{n_0^{\nu-1}},
    \end{align*}
    which proves the theorem.
\end{proof}

\begin{lemma}   \label{lem:huber_concen_hess}
    (Bound of the pseudo-Huber Hessian matrix) Under the same assumptions as in Theorem \ref{thm:contam_rate}, there exists uniform constants $C$ and $c$, such that the Hessian matrix $\hat{\vect{H}}_{k+1}$ satisfies
    \begin{equation*}
        \mbP\Big(\|\hat{\vect{H}}_{k+1}-\vect{H}\|\geq C\Psi\sqrt{d}e_k\log k,\cap_{i=n_0}^kE_i\Big)\leq c(k+1)^{-\nu d}.
    \end{equation*}
\end{lemma}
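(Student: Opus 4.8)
The plan is to write $\hat{\vect{H}}_{k+1}-\vect{H}$ as a telescoping sum of three matrices and bound each in operator norm on the event $\cap_{i=n_0}^kE_i$. Reindexing \eqref{eq:Hess_fml} and abbreviating $w_i=\vect{Z}_i^{\tp}\vect{\theta}^*-b_i$, I would split
\begin{align*}
\hat{\vect{H}}_{k+1}-\vect{H}
&=\frac{1}{k+1}\sum_{i=1}^{k+1}\vect{X}_{i}\vect{Z}_{i}^{\tp}\big[g_{\tau_i}'(\vect{Z}_{i}^{\tp}\hat{\vect{\theta}}_{i-1}-b_{i})-g_{\tau_i}'(w_i)\big]\\
&\quad+\frac{1}{k+1}\sum_{i=1}^{k+1}\vect{X}_{i}\vect{Z}_{i}^{\tp}\big[g_{\tau_i}'(w_i)-1\big]+\frac{1}{k+1}\sum_{i=1}^{k+1}\big(\vect{X}_{i}\vect{Z}_{i}^{\tp}-\vect{H}\big),
\end{align*}
and call these (I), (II), (III). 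Condition \ref{cond:Xbound} gives $|\vect{X}_i|_2\le C_X$ and $|\vect{Z}_i|_2\le(1+\gamma)C_X$, so $\|\vect{X}_i\vect{Z}_i^{\tp}\|\le C$ uniformly, which I will use throughout.

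Term (I) I would control \emph{deterministically} on the event, with no probability involved; this is the piece that produces the target rate. Since $g_{\tau}(x)=x/\sqrt{1+x^2/\tau^2}$ has $|g_{\tau}''(x)|\le C/\tau$, a mean-value expansion gives $|g_{\tau_i}'(\vect{Z}_i^{\tp}\hat{\vect{\theta}}_{i-1}-b_i)-g_{\tau_i}'(w_i)|\le (C/\tau_i)|\vect{Z}_i|_2|\hat{\vect{\theta}}_{i-1}-\vect{\theta}^*|_2\le C'|\hat{\vect{\theta}}_{i-1}-\vect{\theta}^*|_2$. Hence $\|(\mathrm{I})\|\le \frac{C}{k+1}\sum_{i=1}^{k+1}|\hat{\vect{\theta}}_{i-1}-\vect{\theta}^*|_2$, and on $\cap_{i=n_0}^kE_i$ this is at most $\frac{C\Psi\sqrt{d}}{k+1}\sum_{i=0}^k e_i$. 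Applying Lemma \ref{lem:seq_bound} term-by-term to the polynomial pieces of $e_i$ (the super-exponential piece summing to lower order) bounds this by $C\Psi\sqrt{d}\,e_k\log k$, matching the claimed rate.

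Terms (II) and (III) are centered $\phi$-mixing matrix averages, each split into bias and fluctuation. The bias of (III) vanishes by stationarity. For the bias of (II), Lemma \ref{lem:huber_grad_approx} together with \ref{cond:Bbound} yields $\mbE_P|g_{\tau}'(w)-1|\le C\tau^{-1-\min(\delta,1)}$ (the interior bound $\tfrac52\tau^{-1-\delta}|w|^{1+\delta}$ on $\{|w|\le\tau\}$ plus a Markov tail bound), while the $\alpha_k$-fraction from $Q$ contributes at most $C\alpha_k$; averaging $\tau_i^{-1-\min(\delta,1)}$ via Lemma \ref{lem:seq_bound} gives $\|\mbE(\mathrm{II})\|\lesssim \tau_k^{-\min(\delta,2)}+\alpha_k\lesssim e_k$. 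The fluctuations are the crux: I would bound $\|\cdot\|=\sup_{\vect{u},\vect{v}\in\mbS^{d-1}}\vect{u}^{\tp}(\cdot)\vect{v}$ by a $1/2$-net argument as in Lemma \ref{lem:mart_diff_concen}, reducing to $25^{d}$ scalar bounded $\phi$-mixing averages, each controlled by Lemma \ref{lem:concen_mix} at level $n^{-\nu'}$ with $\nu'\asymp\nu d$ so the union bound over the net is absorbed. The outlier indices $\mcQ$ are peeled off first and bounded deterministically by $\frac{1}{k+1}|\mcQ|\cdot C\le C\alpha_k\lesssim e_k$, so that the mixing concentration is applied only to the clean sub-sequence, which is still geometrically $\phi$-mixing.

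The main obstacle is exactly this operator-norm concentration for $\phi$-mixing matrix averages at the dimension-dependent level $n^{-\nu d}$. One must simultaneously decorrelate the mixing sequence into near-independent blocks (Berkes--Philipp coupling, as in the proof of Lemma \ref{lem:concen_mix}), pay the $25^d$ covering cost over the pair of spheres, and prevent the arbitrary contamination from disrupting the independence structure, all while tracking that the resulting deviation is $O\big(\sqrt{d}\,(\sqrt{\log k/k}+\log^2k/k)\big)\lesssim\sqrt{d}\,e_k$; pushing the confidence to $n^{-\nu d}$ is precisely what forces the extra $\sqrt{d}$ into the deviation scale. Combining (I)--(III) by a union bound then gives $\mbP\big(\|\hat{\vect{H}}_{k+1}-\vect{H}\|\ge C\Psi\sqrt{d}\,e_k\log k,\ \cap_{i=n_0}^kE_i\big)\le c(k+1)^{-\nu d}$, as claimed.
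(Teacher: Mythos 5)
Your proposal is correct in substance and follows essentially the same route as the paper's proof: peel off the outliers for an $O(\alpha_k)$ contribution, bound the $\hat{\vect{\theta}}_{i-1}$-versus-$\vect{\theta}^*$ term deterministically on the event via $\sup_x|g_{\tau}''(x)|=O(1)$ and Lemma \ref{lem:seq_bound} (this is the dominant $\sqrt{d}e_k\log k$ piece), control the truncation bias through Lemma \ref{lem:huber_grad_approx} plus Condition \ref{cond:Bbound}, and handle the centered fluctuation by a covering argument combined with the scalar $\phi$-mixing concentration of Lemma \ref{lem:concen_mix}, with the deviation inflated by $\sqrt{d}$ to push the probability to $(k+1)^{-\nu d}$ against the exponential net size; the paper merely centers at $\mbE[\vect{A}_i]$ instead of at $\vect{X}_i\vect{Z}_i^{\tp}$, a cosmetic reorganization of your terms (II) and (III). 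One small fix: a $1/2$-net does not control the operator norm over two spheres, since the self-bounding step gives $\|\vect{M}\|\le\sup_{\text{net}}+\tfrac12\|\vect{M}\|+\tfrac12\|\vect{M}\|$, which is vacuous; you need $\epsilon$-nets with $\epsilon<1/2$ (the paper uses $1/4$-nets, hence $9^{2d}$ pairs rather than your $25^d$), after which your argument goes through verbatim.
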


\begin{proof}
    We note that
    \begin{align*}
        \|\hat{\vect{H}}_{k+1}-\vect{H}\|\leq\frac{1}{k+1}\Big\|\sum_{i\in\mcQ_k}\vect{X}_i\vect{Z}_ig'_{\tau_i}(\vect{Z}_i^{\tp}\hat{\vect{\theta}}_{i-1}-b_i)-\vect{H}\Big\|+\frac{1}{k+1}\Big\|\sum_{i\notin\mcQ_k}\vect{X}_i\vect{Z}_ig'_{\tau_i}(\vect{Z}_i^{\tp}\hat{\vect{\theta}}_{i-1}-b_i)-\vect{H}\Big\|.
    \end{align*}
    For the first term, there is
    \begin{align*}
        &\Big\|\vect{X}\vect{Z}^{\tp}g'_{\tau}(\vect{Z}^{\tp}\vect{\theta}-b)\Big\|\\
        =&\sup_{\vect{u},\vect{v}\in\mbS^{d-1}}\vect{u}^{\tp}\vect{X}\vect{Z}^{\tp}\vect{v}|g'_{\tau}(\vect{Z}^{\tp}\vect{\theta}-b)|\leq M^2,
    \end{align*}
    holds for all $\vect{\theta}$ and $\tau\geq1$. Therefore we have that
    \begin{align*}
        &\frac{1}{k+1}\Big\|\sum_{i\in\mcQ_k}\vect{X}_i\vect{Z}_ig'_{\tau_i}(\vect{Z}_i^{\tp}\hat{\vect{\theta}}_{i-1}-b_i)-\vect{H}\Big\|\stepcounter{equation}\tag{\theequation}\label{eq:contam_hess_term2}\\
        \leq&\frac{1}{k+1}\sum_{i\in\mcQ_k}\|\vect{X}_i\vect{Z}_ig'_{\tau_i}(\vect{Z}_i^{\tp}\hat{\vect{\theta}}_{i-1}-b_i)\|+\frac{1}{k+1}\sum_{i\in\mcQ_i}\|\vect{H}\|\leq 2M^2\alpha_k.
    \end{align*}
    For the second term, we first prove that
    \begin{equation}    \label{eq:huber_center_hess}
        \mbP\Big(\Big\|\frac{1}{k+1}\sum_{i\notin\mcQ_k}(\vect{A}_i-\mbE[\vect{A}_i])\Big\|\geq C_1\sqrt{\frac{d\log k}{k+1}}\Big)=O((k+1)^{-\nu d}).
    \end{equation}
    Let $\mfN$ be the $1/4$-net of the unit ball $\mbS^{d-1}$. By Lemma 5.2 of \cite{vershynin.2010} we know that $|\mfN|\leq 9^{d}$. Then we have that
    \begin{eqnarray*}
        \Big\|\frac{1}{k+1}\sum_{i\notin\mcQ_k}(\vect{A}_i-\mbE[\vect{A}_i])\Big\|=&&\sup_{\vect{u},\vect{v}\in\mbS^{d-1}}\Big|\frac{1}{k+1}\sum_{i\notin\mcQ_k}\vect{u}^{\tp}(\vect{A}_i-\mbE[\vect{A}_i])\vect{v}\Big|\cr
        \leq&&\sup_{\tilde{\vect{u}},\tilde{\vect{v}}\in\mfN}\Big|\frac{1}{k+1}\sum_{i\notin\mcQ_k}\tilde{\vect{u}}^{\tp}(\vect{A}_i-\mbE[\vect{A}_i])\tilde{\vect{v}}\Big|\cr
        &&+\sup_{\tilde{\vect{u}}\in\mfN}\sup_{|\vect{v}-\tilde{\vect{v}}|_2\leq 1/4}\Big|\frac{1}{k+1}\sum_{i\notin\mcQ_k}\tilde{\vect{u}}^{\tp}(\vect{A}_i-\mbE[\vect{A}_i])(\vect{v}-\tilde{\vect{v}})\Big|_2\cr
        &&+\sup_{\vect{v}\in\mbS^{d-1}}\sup_{|\vect{u}-\tilde{\vect{u}}|_2\leq 1/4}\Big|\frac{1}{k+1}\sum_{i\notin\mcQ_k}(\vect{u}-\tilde{\vect{u}})^{\tp}(\vect{A}_i-\mbE[\vect{A}_i])\vect{v}\Big|_2\cr
        \Rightarrow\Big\|\frac{1}{k+1}\sum_{i\notin\mcQ_k}(\vect{A}_i-\mbE[\vect{A}_i])\Big\|\leq&&2\sup_{\tilde{\vect{u}},\tilde{\vect{v}}\in\mfN}\Big|\frac{1}{k+1}\sum_{i\notin\mcQ_k}\tilde{\vect{u}}^{\tp}(\vect{A}_i-\mbE[\vect{A}_i])\tilde{\vect{v}}\Big|.
    \end{eqnarray*}
    Applying Lemma \ref{lem:concen_mix} to every $\frac{1}{k+1}\sum_{i\notin\mcQ_k}\tilde{\vect{u}}^{\tp}(\vect{A}_i-\mbE[\vect{A}_i])\tilde{\vect{v}}$ we can obtain that
    \begin{equation*}
        \sup_{\tilde{u},\tilde{\vect{v}}\in\mfN}\mbP\Big(\Big|\frac{1}{k+1}\sum_{i\notin\mcQ_k}\tilde{\vect{u}}^{\tp}(\vect{A}_i-\mbE[\vect{A}_i])\tilde{\vect{v}}\Big|_2\geq C_1\sqrt{\frac{d\log k}{k+1}}\Big)=O((k+1)^{-(\nu+\log 9)d})
    \end{equation*}
    for some constant $C_1>0$. 
    Therefore
    \begin{align*}
        &\mbP\Big(\Big\|\frac{1}{k+1}\sum_{i\notin\mcQ_k}(\vect{A}_i-\mbE[\vect{A}_i])\Big\|\geq 2C_1\sqrt{\frac{d\log k}{k+1}}\Big)\\
        \leq& \mbP\Big(\sup_{\tilde{u},\tilde{\vect{v}}\in\mfN}\Big|\frac{1}{k+1}\sum_{i\notin\mcQ_k}\tilde{\vect{u}}^{\tp}(\vect{A}_i-\mbE[\vect{A}_i])\tilde{\vect{v}}\Big|_2\geq C_1\sqrt{\frac{d\log k}{k+1}}\Big)\\
        \leq&9^{2d}\sup_{\tilde{\vect{u}},\tilde{\vect{v}}\in\mfN}\mbP\Big(\Big|\frac{1}{k+1}\sum_{i\notin\mcQ_k}\tilde{\vect{u}}^{\tp}(\vect{A}_i-\mbE[\vect{A}_i])\tilde{\vect{v}}\Big|_2\geq C_1\sqrt{\frac{d\log k}{k+1}}\Big)=O((k+1)^{-\nu d}),
    \end{align*}
    which proves \eqref{eq:huber_center_hess}.  Next we prove that when $(\vect{X},\vect{Z},b)$ are normal data, for every $\tau>0$,
    \begin{equation*}
    	\Big\|\mbE[\vect{X}\vect{Z}^{\tp}g'_{\tau}(\vect{Z}^{\tp}\vect{\theta}^*-b)]-\mbE[\vect{X}\vect{Z}^{\tp}]\Big\|\leq  C_2\tau^{-\delta'}.
    \end{equation*}
    Indeed, denote $\epsilon = \vect{Z}^{\tp}\vect{\theta}^*-b$, we have that
    \begin{align*}
    	&\Big\|\mbE[\vect{X}\vect{Z}^{\tp}g'_{\tau}(\vect{Z}^{\tp}\vect{\theta}^*-b)]-\mbE[\vect{X}\vect{Z}^{\tp}]\Big\|\\
	\leq&\sup_{\vect{u},\vect{v}}\mbE[\vect{u}^{\tp}\vect{X}\vect{Z}^{\tp}\vect{v}|g'_{\tau}(\epsilon)-1|]\\
	\leq&\sup_{\vect{u},\vect{v}}\mbE[\vect{u}^{\tp}\vect{X}\vect{Z}^{\tp}\vect{v}\mbI(|\epsilon|>\tau)+\frac{5}{2}\vect{u}^{\tp}\vect{X}\vect{Z}^{\tp}\vect{v}\tau^{-1-\delta''}|\epsilon|^{1+\delta''}\mbI(|\epsilon|\leq\tau)]\\
	\leq&\sup_{\vect{u},\vect{v}}\mbE\big[\vect{u}^{\tp}\vect{X}\vect{Z}^{\tp}\vect{v}\mbP(|\epsilon|>\tau|\vect{X},\vect{Z})\big]+\frac{5}{2}\tau^{-1-\delta''}\sup_{\vect{u},\vect{v}}\mbE\big[\vect{u}^{\tp}\vect{X}\vect{Z}^{\tp}\vect{v}\mbE[|\epsilon|^{1+\delta''}|\vect{X},\vect{Z}]\big]\leq C_2\tau^{-1-\delta''}\leq C_2\tau^{-\delta'},
    \end{align*}
    where the third line uses Lemma \ref{lem:huber_grad_approx}, and $\delta'' = \min(1,\delta)$. Then clearly we have that $1+\delta''\geq \delta'$. Therefore, by the choice of $\tau_i$ and Lemma \ref{lem:seq_bound}, we have
    \begin{equation}	\label{eq:EAH_diff}
    	\Big\|\frac{1}{k+1}\sum_{i\notin\mcQ_k}\mbE[\vect{A}_i]-\vect{H}\Big\|\leq\frac{1}{k+1}\sum_{i\notin\mcQ_k}\Big\|\mbE[\vect{A}_i]-\vect{H}\Big\|\leq C_2\tau_{k+1}^{-\delta'}.
    \end{equation}
    
    Under event $\cap_{i=n_0}^kE_i$, from \eqref{eq:theta_1error_bound} we know there is
    \begin{equation}    \label{eq:huber_hess_diff}
    \begin{aligned}
        \Big\|\frac{1}{k+1}\sum_{i\notin\mcQ_k}\vect{A}_i-\hat{\vect{H}}_k\Big\|=& \Big\|\frac{1}{k+1}\sum_{i\notin\mcQ_k}\vect{X}_i\vect{Z}_i^{\tp}\big\{g_{\tau_i}'(\vect{Z}_i^{\tp}\vect{\theta}^*-b_{i})-g_{\tau_i}'(\vect{Z}_i^{\tp}\hat{\vect{\theta}}_{i-1}-b_{i})\big\}\Big\|\\
        \leq&M^2\frac{1}{k+1}\sum_{i\notin\mcQ_k}|\hat{\vect{\theta}}_{i-1}-\vect{\theta}^*|_2\leq C_3\Psi\sqrt{d}e_k\log k,
    \end{aligned}
    \end{equation}
    Therefore combining \eqref{eq:huber_center_hess}, \eqref{eq:EAH_diff} and \eqref{eq:huber_hess_diff} we have that
    \begin{equation}	\label{eq:contam_hess_term1}
    \begin{aligned}
    .	&\mbP\Bigg(\frac{1}{k+1}\Big\|\sum_{i\notin\mcQ_k}\vect{X}_i\vect{Z}_ig'_{\tau_i}(\vect{Z}_i^{\tp}\hat{\vect{\theta}}_{i-1}-b_i)-\vect{H}\Big\| \geq C_4\Big(\sqrt{\frac{d\log k}{k+1}}+\tau_{k+1}^{-\delta'}+\Psi\sqrt{d}e_k\log k\Big)\Bigg)\\
    =&O((k+1)^{-\nu d}).
    \end{aligned}
    \end{equation}
    Combining \eqref{eq:contam_hess_term2} and \eqref{eq:contam_hess_term1}, we have that
    \begin{equation*}
    \begin{aligned}
        \mbP\Bigg(\|\hat{\vect{H}}_{k+1}-\vect{H}\|\geq C_5\Big(\alpha_k+\sqrt{\frac{d\log k}{k+1}}+\tau_{k+1}^{-\delta'}+\sqrt{d}e_k\log k\Big)\Bigg)=O((k+1)^{-\nu d}),
    \end{aligned}
    \end{equation*}
    which proves the lemma.  
    
    As a corollary, we can show that under event $\cap_{i=n_0}^kE_i$,
	\begin{equation}   \label{eq:huber_inv_bound}
		\|\hat{\vect{H}}_{k+1}^{-1}\|=(\Lambda_{\min}(\hat{\vect{H}}_{k+1}))^{-1}\leq \big(\Lambda_{\min}(\vect{H})-\|\hat{\vect{H}}_{k+1}-\vect{H}\|\big)^{-1}\leq 2/\Lambda_{\min}(\vect{H}).
	\end{equation}

\end{proof}

\begin{lemma}	\label{lem:huber_grad_concen}
	Under the same assumptions as in Theorem \ref{thm:contam_rate}, for every $\nu>0$, there exist constants $C$ and $c$ such that:
    \begin{itemize}
        \item[i)] If $\delta>0$ and $\tau_k$ can be arbitrary, then 
        \begin{equation*}
        		\mbP\Big(\Big|\frac{1}{k+1}\sum_{i=1}^{k+1}\big\{\vect{X}_{i}g_{\tau_i}(\vect{Z}_{i}^{\tp}\vect{\theta}^*-b_{i})\big\}\Big|_2\geq C\sqrt{d}e_{k+1}\Big)\leq c(k+1)^{-\nu}.
	\end{equation*}
        Here $e_{k+1}$ is defined in \eqref{eq:ek_case1}.
        \item[ii)] If $\delta>1$ and $\sqrt{k/\log^3k}=O(\tau_k)$, then
        \begin{equation*}
        		\mbP\Big(\Big|\frac{1}{k+1}\sum_{i=1}^{k+1}\big\{\vect{X}_{i}g_{\tau_i}(\vect{Z}_{i}^{\tp}\vect{\theta}^*-b_{i})\big\}\Big|_2\geq C\sqrt{d}e_{k+1}\Big)\leq c(k+1)^{-\min\{\nu, (\delta-1)/3\}}.
	\end{equation*}
        Here $e_{k+1}$ is defined in \eqref{eq:conv_g2}.
    \end{itemize}
\end{lemma}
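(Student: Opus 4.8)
The plan is to split the sum over contaminated and clean indices and, on the clean indices, to separate a deterministic bias from a centered fluctuation. Writing $\epsilon_i=\vect{Z}_i^{\tp}\vect{\theta}^*-b_i$, I decompose
\[
\frac{1}{k+1}\sum_{i=1}^{k+1}\vect{X}_ig_{\tau_i}(\epsilon_i)=\frac{1}{k+1}\sum_{i\in\mcQ_k}\vect{X}_ig_{\tau_i}(\epsilon_i)+\frac{1}{k+1}\sum_{i\notin\mcQ_k}\mbE_P[\vect{X}_ig_{\tau_i}(\epsilon_i)]+\frac{1}{k+1}\sum_{i\notin\mcQ_k}\big(\vect{X}_ig_{\tau_i}(\epsilon_i)-\mbE_P[\vect{X}_ig_{\tau_i}(\epsilon_i)]\big).
\]
Since $|g_{\tau_i}(x)|\leq\tau_i\leq\tau_{k+1}$ and $|\vect{X}_i|_2\leq C_X$ by \ref{cond:Xbound}, the contaminated block is bounded deterministically by $C_X\alpha_k\tau_{k+1}$, which is the $\alpha_n\tau_n$ contribution to $e_{k+1}$. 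It then remains to control the bias and the fluctuation, which I expect to produce the $\tau_n^{-\min(\delta,2)}$ term and the two statistical terms, respectively.

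For the bias I would use the centering identity $\mbE_P[\vect{X}(\vect{Z}^{\tp}\vect{\theta}^*-b)]=\vect{0}$ from the first-order condition \eqref{eqq}, so that $\mbE_P[\vect{X}g_{\tau}(\epsilon)]=\mbE_P[\vect{X}(g_{\tau}(\epsilon)-\epsilon)]$. Conditioning on $(\vect{X},\vect{Z})$ and invoking the pointwise estimate $|g_\tau(x)-x|\leq\tfrac12\tau^{-\min(\delta,2)}|x|^{1+\min(\delta,2)}$ on $\{|x|\leq\tau\}$ and $|g_\tau(x)-x|\leq|x|$ elsewhere from Lemma \ref{lem:huber_grad_approx}, together with the conditional moment bound $\mbE_P[|\epsilon|^{1+\delta}\mid\vect{X},\vect{Z}]\leq C_b$ of \ref{cond:Bbound}, gives $|\mbE_P[\vect{X}g_{\tau_i}(\epsilon_i)]|_2\leq C\tau_i^{-\min(\delta,2)}$. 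Averaging and using the monotonicity of $\tau_i$ (or Lemma \ref{lem:seq_bound}) yields a bias of order $\tau_{k+1}^{-\min(\delta,2)}$.

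For the fluctuation I would reduce the $\ell_2$ norm to a supremum over a $1/2$-net $\mfN$ of $\mbS^{d-1}$ exactly as in the proof of Lemma \ref{lem:huber_concen_hess}, so that it suffices to bound each scalar mixing average $\frac{1}{k+1}\sum_{i\notin\mcQ_k}\vect{v}^{\tp}(\vect{X}_ig_{\tau_i}(\epsilon_i)-\mbE_P[\vect{X}_ig_{\tau_i}(\epsilon_i)])$ and pay $|\mfN|\leq 5^d$ in the union bound. Each summand is mean-zero, bounded by $C_X\tau_{k+1}$, and, writing $g_\tau(\epsilon)^2\leq\tau^{1-\delta}|\epsilon|^{1+\delta}$ when $\delta\leq 1$ and $g_\tau(\epsilon)^2\leq\epsilon^2$ when $\delta>1$, has second moment of order $\tau_{k+1}^{(1-\delta)_+}$. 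Feeding these two quantities into the block-decomposition/Bernstein argument behind Lemma \ref{lem:concen_mix} (retaining the variance rather than only the uniform bound) produces a deviation of order $\sqrt{d}\big(\sqrt{\tau_{k+1}^{(1-\delta)_+}\log k/k}+\tau_{k+1}\log^2k/k\big)$ with probability $O((k+1)^{-\nu})$; here the factor $\sqrt{d}$, and the need for the exponent $\nu d$ to beat $5^d$, come from the net. Combining the three pieces gives part (i).

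For part (ii), where $\delta>1$ and $\tau_k\gtrsim\sqrt{k/\log^3k}$, the crude almost-sure bound $C_X\tau_{k+1}$ makes the Bernstein boundedness term $\tau\log^2k/k$ non-negligible, so the fourth term of $e_n$ can no longer be absorbed. The refinement I would carry out is to truncate $\epsilon_i$ at a level well below $\tau_{k+1}$: on the high-probability event where no clean $|\epsilon_i|$ exceeds this level, the effective per-term bound is $O(1)$ and the Bernstein argument now yields only the variance term, matching the sharper bound $e_{k+1}$ of \eqref{eq:conv_g2}; the complementary event, that some clean $|\epsilon_i|$ is large, is controlled by Markov's inequality through the $(1+\delta)$-th moment and contributes the polynomial probability $O((k+1)^{-(\delta-1)/3})$ after balancing the truncation level against the block length $\asymp\log k$. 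This truncation step, and in particular extracting the exponent $(\delta-1)/3$ from the heavy tail while keeping the variance term sharp, is the main obstacle; the remainder is bookkeeping analogous to Lemma \ref{lem:huber_concen_hess}.
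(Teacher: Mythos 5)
Your proposal follows essentially the same route as the paper's proof: split off the contaminated indices (bounded deterministically by $C_X\alpha_k\tau_{k+1}$), bound the bias $\mbE_P[\vect{X}g_{\tau_i}(\epsilon_i)]$ using the first-order condition \eqref{eqq} together with Lemma \ref{lem:huber_grad_approx} and the moment condition \ref{cond:Bbound} (giving $O(\tau^{-\min(\delta,2)})$ after averaging via Lemma \ref{lem:seq_bound}), and control the centered fluctuation by the blocking/Berkes--Philipp coupling/Bernstein scheme of Lemma \ref{lem:concen_mix}, retaining the variance $\mbE[g_{\tau}^2(\epsilon)]=O(\tau^{(1-\delta)_+})$ rather than only the uniform bound. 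Your plan for part (ii) --- truncate the clean errors at a level of order $\sqrt{k}/\log^{3/2}k$, so that the Bernstein boundedness term is absorbed into the variance term, and pay a Markov tail probability through the $(1+\delta)$-th moment --- is exactly the paper's $\tilde{\tau}$ argument and is the correct source of the exponent $(\delta-1)/3$.

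The one substantive deviation is your reduction of the $\ell_2$ norm via a $1/2$-net of $\mbS^{d-1}$ with a $5^d$ union bound; the paper instead proves the bound coordinate-by-coordinate and assembles the $\ell_2$ bound with a factor $\sqrt{d}$. Your variant does not deliver the dimension dependence you claim. To beat the $5^d$ cardinality, each fixed-direction deviation probability must be of order $k^{-(\nu+\log 5)d}$, so the exponent in the blocked Bernstein inequality must exceed a multiple of $d\log k$; in the linear (boundedness) regime this forces the deviation threshold to grow linearly in $d$, i.e.\ the term $\tau_{k+1}\log^2k/k$ becomes $d\,\tau_{k+1}\log^2k/k$ rather than $\sqrt{d}\,\tau_{k+1}\log^2k/k$, and the longer blocks needed to make the coupling error $m_k\phi(\lceil\lambda\log k\rceil)$ of order $k^{-\nu d}$ inflate it further. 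Since in part (i) the envelope $\tau_{k+1}$ may grow polynomially in $k$, this term is not negligible, and as written the net argument proves something weaker than the stated $C\sqrt{d}e_{k+1}$ unless $d$ grows slowly with $k$. The fix is the paper's: prove the scalar bound for each of the $d$ coordinates with probability $1-O(k^{-\nu})$ each (the union over coordinates costs only a factor $d$ in probability, which is harmless since $\nu$ is arbitrary), and then $|\cdot|_2\leq\sqrt{d}\max_{l}|\cdot_l|$ puts exactly $\sqrt{d}$ on every term. Finally, a small imprecision in part (ii): after truncation the per-term bound is $O(\tilde{\tau})$ with $\tilde{\tau}\asymp\sqrt{k}/\log^{3/2}k$, not $O(1)$, but your conclusion --- that the resulting boundedness term $\tilde{\tau}\log^2k/k$ is then of the same order $\sqrt{\log k/k}$ as the variance term --- is correct.
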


\begin{proof}
	\noindent\textbf{Proof of i).} We prove the bound coordinate-wisely. When $(\vect{X},\vect{Z},b)$ has no outlier, for the $l$-th coordinate, we first prove that for every $\tau>0$, there is
	\begin{equation*}
		|\mbE[X_lg_{\tau}(\vect{Z}^{\tp}\vect{\theta}^*-b)]|\leq C_1\tau^{-\delta'}.
	\end{equation*}
	Indeed,  $\epsilon=\vect{Z}^{\tp}\vect{\theta}^*-b$, we have that
	\begin{align*}
		|\mbE[X_lg_{\tau}(\epsilon)]|\leq& |\mbE[X_l\epsilon]|+ \big|\mbE[X_l|\epsilon - g_{\tau}(\epsilon)|\mbI(|\epsilon|\leq\tau)]\big| +\big|\mbE[X_l|g_{\tau}(\epsilon)-\epsilon|\mbI(|\epsilon|>\tau)]\big|\\
		\leq&\frac{1}{2}\tau^{-\delta'}\big|\mbE[X_l|\epsilon|^{1+\delta'}\mbI(|\epsilon|\leq\tau)]\big| +\big|\mbE[X_l|\epsilon|\mbI(|\epsilon|>\tau)]\big|\\
		\leq&\frac{1}{2}\tau^{-\delta'}\big|\mbE\big[X_l\mbE[|\epsilon|^{1+\delta'}\mbI(|\epsilon|\leq\tau)|\vect{X},\vect{Z}]\big]\big| +\big|\mbE\big[X_l\mbE[|\epsilon|\mbI(|\epsilon|>\tau)|\vect{X},\vect{Z}]\big]\big|\\
		\leq&C_1\tau^{-\delta'}.
	\end{align*}
	Therefore, by the choice of $\tau_i$ and Lemma \ref{lem:seq_bound}, we have
	\begin{equation}	\label{eq:huber_grad_exp}
		\Big|\frac{1}{k+1}\sum_{i=1}^{k+1}\mbE[X_{i,l}g_{\tau_i}(\vect{Z}^{\tp}_i\vect{\theta}^*-b_i)]\Big|\leq \frac{1}{k+1}\sum_{i=1}^{k+1}\Big|\mbE[X_{i,l}g_{\tau_i}(\vect{Z}^{\tp}_i\vect{\theta}^*-b_i)]\Big|\leq C_2\tau_{k+1}^{-\delta'}.
	\end{equation}
	Next, when all data are normal, we prove the rate of
	\begin{equation*}	
        		\Big|\frac{1}{k+1}\sum_{i=1}^{k+1}\big\{X_{i,l}g_{\tau_i}(\epsilon_i)-\mbE[X_{i,l}g_{\tau_i}(\epsilon_i)]\big\}\Big|.
	\end{equation*}
	We basically rehash the proof in Lemma \ref{lem:concen_mix}. 
	
	\noindent\textbf{Case 1: $\delta\in(0,1]$.} We divide the $k$-tuple $(1,\dots,k)$ into $m_k$ different subsets $H_1,\dots,H_{m_k}$, where $m_k = \lceil k/\lceil \lambda\log k\rceil\rceil$. Here $|H_i|=\lceil\lambda\log k\rceil$ for $1\leq i\leq m_k-1$, and $|H_{m_j}|\leq\lceil\lambda\log k\rceil$. Then we have $m_k\approx k/(\lambda\log k)$. Without loss of generality, we assume that $m_k$ is an even integer.
    
    Let $Y_i=X_{i,l}g_{\tau_i}(\epsilon_i)-\mbE[X_{i,l}g_{\tau_i}(\epsilon_i)]$ and $\xi_q=\frac{1}{\lceil\lambda\log k\rceil}\sum_{i\in H_{2q-1}}Y_i$, then we know $|\xi_q|\leq 2M\tau_{k+1}$ (since $X_{i,l}g_{\tau_i}(\epsilon)\leq |\vect{X}|_2\tau_i\leq M\tau_{k+1}$) and $\mbE[\xi_l]=0$ holds. For all $B_1\in\sigma(\xi_1,\dots,\xi_q)$ and $B_2\in\sigma(\xi_{q+1})$, there holds $|\mbP(B_2|B_1)-\mbP(B_2)|\leq\phi(\lceil\lambda\log k\rceil)$ for all $q$. By Theorem 2 of \cite{berkes_philipp.1979aop}, there exists a sequence of independent variables $\eta_l$, $l\geq 1$ with $\eta_l$ having the same distribution as $\xi_l$, and
    \begin{equation*}
        \mbP\Big(|\xi_{q}-\eta_{q}|\geq 6\phi(\lceil\lambda\log k\rceil)\Big)\leq  6\phi(\lceil\lambda\log k\rceil).
    \end{equation*}
    For any $\nu>0$, we can take $\lambda\geq(\nu +1)/|\log\rho|$ so that
    \begin{equation}    \label{eq:huber_berke_rv}
    \begin{aligned}
        &\mbP\Big(\Big|\frac{2}{ m_k}\sum_{q=1}^{ m_{k}/2}(\xi_{q}-\eta_{q})\Big|\geq C_3k^{-\nu}\Big)\\
        \leq&\mbP\Big(\Big|\frac{2}{ m_k}\sum_{q=1}^{ m_{k}/2}(\xi_{q}-\eta_{q})\Big|\geq 6\phi(\lceil\lambda\log k\rceil)\Big)\leq 3m_{k}\phi(\lceil\lambda\log k\rceil)\leq C_{3}k^{-\nu},
    \end{aligned}
    \end{equation}
    where $C_3>0$ is some constant. Next, we bound the variance of $\xi_q$. From equation (20.23) of \cite{billingsley1968convergence} we know that for arbitrary $i,j$, there is
    \begin{align*}
        &\big|\mbE\big[Y_iY_j\big]\big|\leq 2\sqrt{\phi(|i-j|)}\sqrt{\mbE[Y_i^2]\mbE[Y_j^2]} \leq 2C_4\rho^{|i-j|/2}M^2\tau_{k+1}^{1-\delta}.
    \end{align*}
    Then we compute that
    \begin{align*}
        \var(\eta_q)=\var(\xi_q) =& \frac{1}{\lceil\lambda\log k\rceil^2}\Big[\sum_{i,j\in H_{2q-1}}\{\mbE(Y_iY_j)\}\Big]\\
        \leq&\frac{2C_4}{\lceil\lambda\log k\rceil^2}\sum_{i,j=1}^{\lceil\lambda\log k\rceil}\rho^{|i-j|/2}M^2\tau_{k+1}^{1-\delta}\\
        \leq&\frac{2\lceil\lambda\log k\rceil C_4}{\lceil\lambda\log k\rceil^2}\Big(\frac{2}{1-\sqrt{\rho}}-1\Big)M^2\tau_{k+1}^{1-\delta}\leq\frac{4C_4}{(1-\sqrt{\rho})\lceil\lambda\log k\rceil}M^2\tau_{k+1}^{1-\delta}.
    \end{align*}
    Notice that $\eta_q$'s are all uniformly bounded by $2M\tau_{k+1}$, we can apply Bernstein's inequality (\citealp{Bennett.1962jasa}) and yield
    \begin{align*}
        &\mbP\Big(\Big|\frac{2}{m_k}\sum_{q=1}^{m_k/2}\eta_q\Big|\geq t\Big)\leq 2\exp\Big(-\frac{m_kt^2/4}{\var(\eta_q)/2+M\tau_kt/3}\Big)\\
        \leq&\exp\Big(-\frac{m_k\lceil\lambda\log k\rceil t^2/4}{2M^2C_{\epsilon}\tau_{k+1}^{1-\delta}/(1-\sqrt{\rho})+2\lceil\lambda\log k\rceil M\tau_{k+1}t/3}\Big).
    \end{align*}
    We take $t=C(\sqrt{\tau_{k+1}^{1-\delta}\log k/(k+1)}+\tau_{k+1}\log^2 k/(k+1))$ for some $C$ large enough (note that $k\approx m_k\lceil\lambda\log k \rceil$). Then we have that
    \begin{equation}    \label{eq:huber_ind_bern}
        \mbP\Big(\Big|\frac{2}{m_k}\sum_{q=1}^{m_k/2}\eta_q\Big|\geq t\Big)=O(k^{-\nu}).
    \end{equation}
    Combining \eqref{eq:huber_berke_rv} and \eqref{eq:huber_ind_bern} we have that
    \begin{equation}    \label{eq:huber_odd_avg}
    \begin{aligned}
        &\mbP\Big(\Big|\frac{2}{m_k}\sum_{q=1}^{m_k/2}\xi_q\Big|\geq C\sqrt{\frac{\tau^{1-\delta}_{k+1}\log k}{k+1}}+C\frac{\log^2 k\tau_{k+1}}{k+1}\Big)=O(k^{-\nu}).
    \end{aligned}
    \end{equation}
    Next we denote $\tilde{\xi}=\frac{1}{\lceil\lambda\log k\rceil}\sum_{i\in H_{2q}}Y_i$, then we can similarly show that
    \begin{equation}    \label{eq:huber_even_avg}
    \begin{aligned}
        &\mbP\Big(\Big|\frac{2}{m_k}\sum_{q=1}^{m_k/2}\tilde{\xi}_q\Big|\geq C\sqrt{\frac{\tau^{1-\delta}_{k+1}\log k}{k+1}}+C\frac{\log^2 k\tau_{k+1}}{k+1}\Big)=O(k^{-\nu}),
    \end{aligned}
    \end{equation}
    Combining \eqref{eq:huber_odd_avg} and \eqref{eq:huber_even_avg} we can prove that
    \begin{equation}	\label{eq:recentered_rate}
    	\mbP\Bigg(\Big|\frac{1}{k+1}\sum_{i=1}^{k+1}\big\{X_{i,l}g_{\tau_i}(\epsilon_i)-\mbE[X_{i,l}g_{\tau_i}(\epsilon_i)]\big\}\Big|\geq C\Big(\sqrt{\frac{\tau^{1-\delta}_{k+1}\log k}{k+1}}+\frac{\log^2 k\tau_{k+1}}{k+1}\Big)\Bigg)=O(k^{-\nu}).
    \end{equation}
    By combining it with \eqref{eq:huber_grad_exp} we have that
    \begin{equation*}
    	\mbP\Bigg(\Big|\frac{1}{k+1}\sum_{i=1}^{k+1}\big\{\vect{X}_{i}g_{\tau_i}(\vect{Z}_{i}^{\tp}\vect{\theta}^*-b_{i})\big\}\Big|_2 \geq C \Big(\sqrt{d}\tau_{k+1}^{-\delta'}+\sqrt{\frac{d\tau^{1-\delta}_{k+1}\log k}{k+1}}+\frac{\sqrt{d}\log^2 k\tau_{k+1}}{k+1}\Big)\Bigg)=O(k^{-\nu}).
    \end{equation*}
    
    \noindent\textbf{Case 2: $\delta>1$}. Similarly, we can obtain the rate 
    \begin{equation*}
    	\mbP\Bigg(\Big|\frac{1}{k+1}\sum_{i=1}^{k+1}\big\{X_{i,l}g_{\tau_i}(\epsilon_i)-\mbE[X_{i,l}g_{\tau_i}(\epsilon_i)]\big\}\Big|\geq C\Big(\sqrt{\frac{\log k}{k+1}}+\frac{\log^2 k\tau_{k+1}}{k+1}\Big)\Bigg)=O(k^{-\nu}),
    \end{equation*}    
    by directly plugging $\tau_k$ into \eqref{eq:recentered_rate}. Then we have that
    \begin{equation*}
    	\mbP\Bigg(\Big|\frac{1}{k+1}\sum_{i=1}^{k+1}\big\{\vect{X}_{i}g_{\tau_i}(\vect{Z}_{i}^{\tp}\vect{\theta}^*-b_{i})\big\}\Big|_2 \geq C\Big(\tau_{k+1}^{-\delta'}+\sqrt{\frac{\log k}{k+1}}+\frac{\log^2 k\tau_{k+1}}{k+1}\Big)\Bigg)=O(k^{-\nu}).
    \end{equation*}
    
    \noindent\textbf{Proof of ii).} When $\delta>1$ and $\sqrt{k/\log^3k} = O(\tau_k)$, we define the thresholding level $\tilde{\tau}=C_{\tau}\sqrt{k+1}/\log^{3/2}k$, and consider the truncated random variables $\vect{X}_ig_{\tau_i}(\epsilon_i)\mbI(|\epsilon_i|\leq\tilde{\tau})$. Here $C_{\tau}$ is a sufficiently large constant. Then we have that
    \begin{align*}
    	&\mbP\Big(\sum_{i=1}^{k+1}\vect{X}_ig_{\tau_i}(\epsilon_i)\neq \sum_{i=1}^{k+1}\vect{X}_ig_{\tau_i}(\epsilon_i)\mbI(|\epsilon_i|\leq\tilde{\tau})\Big)\\
	\leq&\mbP\Big(\cup_{i=1}^{k+1}\{\vect{X}_ig_{\tau_i}(\epsilon_i)\neq \vect{X}_ig_{\tau_i}(\epsilon_i)\mbI(|\epsilon_i|\leq\tilde{\tau})\}\Big)\\
	\leq&(k+1)\max_{1\leq i\leq k+1}\mbP\Big(|\epsilon_i|>\tilde{\tau}\Big) = O((k+1)^{-(\delta-1)/3}).\stepcounter{equation}\tag{\theequation}\label{eq:thred_coincide}
    \end{align*}
    Similarly as in \eqref{eq:huber_grad_exp}, we have that
    \begin{equation}	\label{eq:huberT_grad_exp}
    	\Big|\frac{1}{k+1}\sum_{i=1}^{k+1}\mbE[X_{i,l}g_{\tau_i}(\epsilon_i)\mbI(|\epsilon_i|\leq\tilde{\tau})]\Big|\leq \frac{1}{k+1}\sum_{i=1}^{k+1}\Big|\mbE[X_{i,l}g_{\tau_i}(\epsilon_i)\mbI(|\epsilon_i|\leq\tilde{\tau})]\Big|\leq C(\tau_{k+1}^{-\delta'}+\tilde{\tau}^{-\delta}).
    \end{equation}
    Similarly as in the proof of \eqref{eq:recentered_rate}, for every $\nu>0$, there exists $C>0$ such that
    \begin{align*}
        &\mbP\Bigg(\Big|\frac{1}{k+1}\sum_{i=1}^{k+1}\big\{X_{i,l}g_{\tau_i}(\epsilon_i)\mbI(|\epsilon_i|\leq\tilde{\tau})-\mbE[X_{i,l}g_{\tau_i}(\epsilon_i)\mbI(|\epsilon_i|\leq\tilde{\tau})]\big\}\Big|\geq C\big(\sqrt{\frac{\log k}{k+1}}+\frac{\log^2k\tilde{\tau}}{k+1}\big)\Bigg)=O(k^{-\nu})\stepcounter{equation}\tag{\theequation}\label{eq:Trecentered_rate}
    \end{align*}
    By the choice of $\tilde{\tau}$, we know that 
    \begin{equation*}
        \sqrt{\frac{\log k}{k+1}}+\frac{\log^2 k\tilde{\tau}}{k+1}=O\Big(\sqrt{\frac{\log k}{k+1}}\Big).
    \end{equation*}
    Combining \eqref{eq:thred_coincide}, \eqref{eq:huberT_grad_exp} and \eqref{eq:Trecentered_rate} we have that
    \begin{equation*}
    	\mbP\Bigg(\Big|\frac{1}{k+1}\sum_{i=1}^{k+1}\big\{\vect{X}_{i}g_{\tau_i}(\vect{Z}_{i}^{\tp}\vect{\theta}^*-b_{i})\big\}\Big|_2 \geq C\sqrt{d}\Big(\tau_{k+1}^{-\delta'}+\tilde{\tau}^{-\delta}+\sqrt{\frac{\log k}{k+1}}\Big)\Bigg)=O((k+1)^{-\min\{\nu,(\delta-1)/3\}}).
    \end{equation*}    
    When there are $\alpha_k$ fraction of outliers, denote $\mcQ_k$ as the index set, we have
    \begin{align*}
        &\Big|\frac{1}{k+1}\sum_{i=1}^{k+1}\big\{\vect{X}_{i}g_{\tau_i}(\vect{Z}_{i}^{\tp}\vect{\theta}^*-b_{i})\big\}\Big|_2\\
        \leq&\frac{1}{k+1}\Big|\sum_{i\in\mcQ_k}\big\{\vect{X}_{i}g_{\tau_i}(\vect{Z}_{i}^{\tp}\vect{\theta}^*-b_{i})\big\}\Big|_2+\frac{1}{k+1}\Big|\sum_{i\notin\mcQ_k}\big\{\vect{X}_{i}g_{\tau_i}(\vect{Z}_{i}^{\tp}\vect{\theta}^*-b_{i})\big\}\Big|_2\\
        =&O\big(\sqrt{d}\alpha_k\tau_{k+1}\big) + \frac{1}{k+1}\Big|\sum_{i\notin\mcQ_k}\big\{\vect{X}_{i}g_{\tau_i}(\vect{Z}_{i}^{\tp}\vect{\theta}^*-b_{i})\big\}\Big|_2,
    \end{align*}
    which proves the lemma.
\end{proof}

\begin{lemma}   \label{lem:mix_bound}
    (Bound of the mixed term) Under the same assumptions as in Theorem \ref{thm:contam_rate}, for every $\nu>0$, there exist constants $C$ and $c$, such that
    \begin{equation*}
        \mbP\Big(\Big|\frac{1}{k+1}\sum_{i=1}^{k+1}(\vect{A}_i-\vect{H})(\hat{\vect{\theta}}_{i-1}-\vect{\theta}^*)\Big|_2\geq C(\alpha_k+\Psi^2de_k^2\log k),\cap_{i=n_0}^kE_i\Big)\leq c(k+1)^{-\nu}.
    \end{equation*}
\end{lemma}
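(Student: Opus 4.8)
The plan is to bound the averaged sum by first isolating the outlier contribution and then extracting a martingale structure from the non-outlier terms. Throughout write $\vect u_{i-1}=\hat{\vect\theta}_{i-1}-\vect\theta^*$ and $\mcF_j=\sigma(s_1,\dots,s_j)$, and recall $\vect A_i=\vect X_i\vect Z_i^{\tp}g_{\tau_i}'(\vect Z_i^{\tp}\vect\theta^*-b_i)$. Since the boundedness in \ref{cond:Xbound} is imposed on the features $\vect X_i,\vect Z_i$ and $|g_{\tau}'|\le 1$, we have $\|\vect A_i\|\le M^2$ and $\|\vect H\|\le M^2$ for every $i$ regardless of contamination; hence on $\cap_{j=n_0}^{k}E_j$ the outlier block obeys $\frac1{k+1}\sum_{i\in\mcQ_k}|(\vect A_i-\vect H)\vect u_{i-1}|_2\le\frac{2M^2}{k+1}\sum_{i\in\mcQ_k}\Psi\sqrt d\,e_{i-1}=O(\alpha_k)$, because $\sqrt d\,e_{i-1}$ is bounded and $|\mcQ_k|=\alpha_k(k+1)$. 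To run the concentration arguments off the event, I would first replace $\vect u_{i-1}$ by $\vect u_{i-1}\ind{\cap_{j=n_0}^{i-1}E_j}$, which is $\mcF_i$-measurable, deterministically bounded by $\Psi\sqrt d\,e_{i-1}$, and coincides with $\vect u_{i-1}$ on the target event.

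For the non-outlier indices I would use the decomposition $\vect A_i-\vect H=(\vect A_i-\mbE[\vect A_i\mid\mcF_i])+(\mbE[\vect A_i\mid\mcF_i]-\mbE[\vect A_i])+(\mbE[\vect A_i]-\vect H)$. The deterministic bias $\mbE[\vect A_i]-\vect H$ has norm $O(\tau_i^{-\min(\delta,2)})$ by the computation already carried out in Lemma \ref{lem:huber_concen_hess} (see \eqref{eq:EAH_diff}); multiplying by $|\vect u_{i-1}|_2\le\Psi\sqrt d\,e_{i-1}$, using $\tau_i^{-\min(\delta,2)}\le e_i$, and summing through Lemma \ref{lem:seq_bound} keeps this piece within the budget $\Psi^2 d\,e_k^2\log k$. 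The first bracket is the key gain: since $\vect u_{i-1}$ is $\mcF_i$-measurable while $\vect A_i$ is $\mcF_{i+1}$-measurable, the vectors $(\vect A_i-\mbE[\vect A_i\mid\mcF_i])\vect u_{i-1}$ form a martingale difference sequence with respect to $\{\mcF_{i+1}\}$. I would then apply Lemma \ref{lem:mart_diff_concen} over a geometric (dyadic) partition of $\{n_0+1,\dots,k+1\}$ on which $e_{i-1}$ is essentially constant, with per-term bound $\lesssim\Psi\sqrt d\,e_{i-1}$ and predictable quadratic variation $\lesssim\sum_i\Psi^2 d\,e_{i-1}^2$; block by block this yields a fluctuation dominated by $\Psi^2 d\,e_k^2\log k$ (using $e_i\gtrsim\sqrt{\log i/i}$), the extra $\log k$ arising from the number of blocks.

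The main obstacle is the middle term $\vect W_i:=\mbE[\vect A_i\mid\mcF_i]-\mbE[\vect A_i]$, a mean-zero but $O(1)$-sized function of the current state $s_i$, so $\vect W_i\vect u_{i-1}$ carries no martingale structure and a naive bound only gives the linear, insufficient rate $\Psi\sqrt d\,e_k\log k$. To recover the quadratic rate I would use summation by parts: set $\vect T_j=\sum_{i=n_0+1}^{j}\vect W_i$ and write $\sum_i\vect W_i\vect u_{i-1}=\vect T_k\vect u_{k-1}-\sum_i\vect T_i(\vect u_i-\vect u_{i-1})$ up to boundary terms. A matrix concentration for $\vect T_j$ built from the geometric $\phi$-mixing — the block-coupling scheme of Lemma \ref{lem:concen_mix} applied entrywise and combined over a net as in Lemma \ref{lem:mart_diff_concen} — gives $\|\vect T_j\|=O_{\mbP}(\sqrt{dj\log j})$. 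The boundary term then contributes $\|\vect T_k\vect u_{k-1}\|/(k+1)\lesssim\Psi d\,e_k\sqrt{\log k/k}$, which lies within $\Psi^2 d\,e_k^2\log k$, while the increment sum requires controlling $|\hat{\vect\theta}_i-\hat{\vect\theta}_{i-1}|$ from the recursion \eqref{eq:online_newton}; I expect these averaged-Newton increments to be of order $\sqrt d\,e_{i-1}/i$ (up to lower-order logarithmic corrections), which makes $\frac1{k+1}\sum_i\|\vect T_i\|\,|\vect u_i-\vect u_{i-1}|$ summable at the required order. Verifying the increments at exactly this sharp order is the delicate quantitative step.

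Finally I would collect the three pieces, confirm each exceptional probability is $O((k+1)^{-\nu})$ — the martingale and mixing bounds produce $(k+1)^{-\nu d}$ after the net argument, comfortably smaller — and conclude by a union bound intersected with $\cap_{j=n_0}^{k}E_j$. The repeated bookkeeping (the truncation device that legitimizes the off-event concentration, and the use of Lemma \ref{lem:seq_bound} to turn sums of $e_{i-1}$ and $e_{i-1}^2$ into $e_k$-bounds) is routine but must be done consistently; the genuinely hard part is the Abel-summation control of $\vect W_i\vect u_{i-1}$, the only place where slow variation of the iterates, rather than conditional centering, supplies the cancellation needed for the quadratic rate.
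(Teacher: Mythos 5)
Your overall architecture is reasonable, and three of your four pieces do work: the $O(\alpha_k)$ outlier bound, the bias term $(\mbE[\vect{A}_i]-\vect{H})\vect{u}_{i-1}$ via \eqref{eq:EAH_diff} and Lemma \ref{lem:seq_bound}, and the Freedman-type bound for the conditionally centered part all close within the stated budget (modulo the minor fix that your filtration must also contain the rewards, since $\hat{\vect{\theta}}_{i-1}$ depends on $b_1,\dots,b_{i-1}$ and, through $\vect{Z}_{i-1}=\vect{\phi}(s_{i-1})-\gamma\vect{\phi}(s_i)$, on $s_i$). The genuine gap is exactly the step you flagged as delicate: your Abel-summation treatment of $\sum_i\vect{W}_i\vect{u}_{i-1}$ rests on the claim that the Newton increments satisfy $|\hat{\vect{\theta}}_i-\hat{\vect{\theta}}_{i-1}|_2=O(\sqrt{d}\,e_{i-1}/i)$, and this claim is false. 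From \eqref{eq:online_newton} and \eqref{eq:Giterative}, the increment contains $\frac{1}{i}\hat{\vect{H}}_i^{-1}\big(\vect{X}_ig_{\tau_i}(\vect{Z}_i^{\tp}\hat{\vect{\theta}}_{i-1}-b_i)-\vect{G}_{i-1}\big)$, and the single-sample gradient $\vect{X}_ig_{\tau_i}(\vect{Z}_i^{\tp}\hat{\vect{\theta}}_{i-1}-b_i)$ is of the order of the TD noise itself: typically $O(1)$, and as large as $\tau_i$ on heavy-tailed realizations. It is not $O(e_{i-1})$; the events $E_j$ control the iterates and the \emph{averaged} gradient, not individual noise draws. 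So the increments are $\Theta(1/i)$, and plugging the correct size into your Abel sum with $\|\vect{T}_i\|=O_{\mbP}(\sqrt{di\log i})$ gives $\frac{1}{k+1}\sum_i\sqrt{di\log i}\cdot O(\sqrt{d}/i)=O\big(d\sqrt{\log k/k}\big)$, i.e.\ the linear rate $de_k$, which misses the required $\Psi^2de_k^2\log k$ by a factor of order $\sqrt{k}$ up to logarithms.

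This cannot be repaired by iterating your decomposition on the increments: the $O(1/i)$ part of $\hat{\vect{\theta}}_i-\hat{\vect{\theta}}_{i-1}$ is driven by the new observation, which is again correlated with $\vect{W}_{i+1}$ through the state (note $\vect{Z}_i$ involves $s_{i+1}$), so you land back in the same cross-correlation problem one step later. What supplies the missing cancellation in the paper is not slow variation of the iterates but \emph{time separation plus mixing}: the paper replaces $\hat{\vect{\theta}}_{i-1}-\vect{\theta}^*$ by the block-delayed surrogate $\tilde{\vect{\theta}}_{i-1}$ of \eqref{eq:theta_tilde}, measurable with respect to observations lagging at least $\lceil\lambda\log k\rceil$ steps behind $i$; Lemma \ref{lem:mix_remainder} shows this substitution costs only $O(\Psi^2de_k^2)$, and then, because $\phi(\lceil\lambda\log k\rceil)=O(k^{-\nu-2})$, Lemma \ref{lem:mix_norm} makes the conditional expectation of the blocked products negligible, so they become martingale differences (after truncation) to which Lemma \ref{lem:mart_diff_concen} applies. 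Your lag-one conditional centering cannot manufacture this separation, since $\vect{W}_i$ and $\vect{u}_{i-1}$ are functions of the same current state; to salvage your route you would essentially have to rebuild the paper's delayed-block construction inside the increment analysis.
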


\begin{proof}
    Firstly, from \eqref{eq:EAH_diff} and Lemma \ref{lem:seq_bound}, under the event $\cap_{i=n_0}^kE_i$, we can bound the term
    \begin{align*}
        &\Big|\frac{1}{k+1}\sum_{i=1}^{k+1}(\mbE[\vect{A}_i]-\vect{H})(\hat{\vect{\theta}}_{i-1}-\vect{\theta}^*)\Big|_2\\
        \leq&\frac{1}{k+1}\sum_{i=1}^{k+1}\|\mbE[\vect{A}_i]-\vect{H}\|\cdot|\hat{\vect{\theta}}_{i-1}-\vect{\theta}^*|_2\\
        \leq&\frac{1}{k+1}\sum_{i=1}^{k+1}C\Psi\tau_{i}^{-\delta'}e_{i-1}=O(\tau_{k+1}^{-\delta'}\sqrt{d}e_k\log k)=O(\sqrt{d}e_k^2\log k).\stepcounter{equation}\tag{\theequation}\label{eq:eA_H}
    \end{align*}
    For ease of notation, we first consider the case when there is no outlier. Similar as in the proof of Lemma \ref{lem:concen_mix}, for each $k$, we evenly divide the tuple $\{n_0,\dots,k\}$ into $m_k$ subsets $H_1,\dots,H_{m_k}$, where $m_k = \lceil (k-n_0)/\lceil \lambda\log k\rceil\rceil$. Here $|H_q|=\lceil\lambda\log k\rceil$ for $1\leq q\leq m_k-1$, and $|H_{m_k}|\leq\lceil\lambda\log k\rceil$. $\lambda$ is a sufficiently large constant which will be specified later. Then we have $m_k\approx (k-n_0)/(\lambda\log k)$. We further denote $H_0=\{1,\dots,n_0\}$. Without loss of generality, we assume that $m_k$ is an even integer. For each $i\in\{n_0,\dots,k\}$, suppose $i\in H_{l_i}$, we construct the following random variable 
    \begin{equation}    \label{eq:theta_tilde}
        \tilde{\vect{\theta}}_i=\frac{1}{i}\sum_{q=0}^{l_i-2}\sum_{j\in H_q}(\hat{\vect{\theta}}_j-\vect{\theta}^*)+\vect{H}^{-1}\frac{1}{i}\sum_{q=0}^{l_i-2}\sum_{j\in H_q}\vect{X}_{j+1}g_{\tau_{j+1}}(\vect{Z}_{j+1}^{\tp}\hat{\vect{\theta}}_{j}-b_{j+1}).
    \end{equation}
    When $l_j=1$, we take the sum for the terms in $H_0$. For $i\in H_0$, we take $\tilde{\vect{\theta}}_i=\hat{\vect{\theta}}_i-\vect{\theta}^* = \hat{\vect{\theta}}_0-\vect{\theta}^*$. 
    Then by Lemma \ref{lem:mix_remainder} below we have that,
    \begin{equation}   `   \label{eq:main_approx} 
    \begin{aligned}
        &\mbP\Big(\Big|\frac{1}{k+1}\sum_{i=1}^{k+1}(\vect{A}_i-\mbE[\vect{A}_i])(\hat{\vect{\theta}}_{i-1}-\vect{\theta}^*)-\frac{1}{k+1}\sum_{i=1}^{k+1}(\vect{A}_i-\mbE[\vect{A}_i])\tilde{\vect{\theta}}_{i-1}\Big|_2\geq C\Psi^2de_k^2,\cap_{i=n_0}^kE_i\Big)\\
        =& O((k+1)^{-\nu}).
    \end{aligned}
    \end{equation}
    Moreover, from the proof of Lemma \ref{lem:mix_remainder}, we can obtain that under $\cap_{i=n_0}^kE_i$, there is 
    \begin{equation}    \label{eq:tilde_approx}
        |\hat{\vect{\theta}}_i-\vect{\theta}^*-\tilde{\vect{\theta}}_i|_2\leq \Psi\sqrt{d}e_i.
    \end{equation}
    It left to bound the term $\frac{1}{k+1}\sum_{j=0}^k(\vect{A}_i-\mbE[\vect{A}_i])\tilde{\vect{\theta}}_i$. To be more precise, we define the $\sigma$-field $\mcG_l=\sigma((\vect{X}_i,\vect{Z}_i,b_i):i\in\cup_{j=1}^{2l+1}H_j)$, and construct the random variable
    \begin{equation*}
        \vect{\xi}_l = \sum_{i\in H_{2l+1}}(\vect{A}_{i+1}-\mbE[\vect{A}_{i+1}])\tilde{\vect{\theta}}_{i}
    \end{equation*}
    Then by \eqref{eq:tilde_approx} we know that
    \begin{equation*}
        \mbP\Big(\cup_i\Big\{|\tilde{\vect{\theta}}_i|_2\geq 2\Psi\sqrt{d}e_i\Big\},\cap_{j=n_0}^k E_j\Big)=O((k+1)^{-\nu}).
    \end{equation*}
    We further set
    \begin{equation*}
        \tilde{\vect{\xi}}_l = \sum_{i\in H_{2l+1}}(\vect{A}_{i+1}-\mbE[\vect{A}_{i+1}])\tilde{\vect{\theta}}_i\mbI\Big\{|\tilde{\vect{\theta}}_i|_2\leq 2\Psi\sqrt{d}e_i\Big\},
    \end{equation*}
    then there is
    \begin{equation}    \label{eq:xi_coin}          
    \mbP\Big(\sum_{l=1}^{m_k/2}\vect{\xi}_l\neq \sum_{l=1}^{m_k/2}\tilde{\vect{\xi}}_l,\cap_{i=n_0}^kE_i\Big)=O(k^{-\nu}).
    \end{equation}
    Notice that $\{\tilde{\vect{\xi}}_l-\mbE[\tilde{\vect{\xi}}_l|\mcG_{l-1}],l\geq 1\}$ are martingale differences, and there is $\mbE[\tilde{\vect{\theta}}_{i}|\mcG_{l-1}]=\tilde{\vect{\theta}}_i$ for $i\in H_{2l+1}$. Therefore we have
    \begin{equation*}
        \mbE[\tilde{\vect{\xi}}_l|\mcG_{l-1}]=\sum_{i\in H_{2l+1}}\mbE[(\vect{A}_{i+1}-\mbE[\vect{A}_{i+1}])|\mcG_{l-1}]\tilde{\vect{\theta}}_i\mbI\Big\{|\tilde{\vect{\theta}}_i|_2\leq 2\Psi\sqrt{d}e_i\Big\}.
    \end{equation*}
    By Lemma \ref{lem:mix_norm} there is
    \begin{equation*}
        \mbE\Big[\big\|\mbE[\vect{A}_{i+1}-\mbE[\vect{A}_{i+1}]|\mcG_{l-1}]\big\|\Big]\leq Cd\sqrt{\phi(\lceil\lambda\log k\rceil)}=O(k^{-\nu-2}),
    \end{equation*}
    for some $\lambda$ large enough. Then Markov's inequality yields
    \begin{equation}    \label{eq:xi_exp_bound}
        \mbP\Big(\Big|\frac{1}{k}\sum_{l=1}^{m_k/2}\mbE[\tilde{\vect{\xi}}_l|\mcG_{l-1}]\Big|_2\geq C\frac{\sqrt{d}e_k}{k},\cap_{i=n_0}^kE_i\Big)=O(k^{-\nu}).
    \end{equation}
    It is direct to verify that
    \begin{align*}
        &\sup_{\vect{v}\in\mbS^{d-1}}|\vect{v}^{\tp}(\tilde{\vect{\xi}}_l-\mbE[\tilde{\vect{\xi}}_l|\mcG_{l-1}])|\leq C\Psi\lambda\log k,\\
        &\sup_{\vect{v}\in\mbS^{d-1}}\sum_{l=1}^{m_k/2}\var[|\vect{v}^{\tp}\tilde{\vect{\xi}}_l||\mcG_{l-1}]\leq\sup_{\vect{v}\in\mbS^{d-1}}\sum_{l=1}^{m_k/2}\mbE[|\vect{v}^{\tp}\tilde{\vect{\xi}}_l|^2|\mcG_{l-1}]\leq C\Psi^2kde_k^2\log k.
    \end{align*}
    Then we can apply Lemma \ref{lem:mart_diff_concen} and yield
    \begin{align*}
        &\mbP\Big(\Big|\sum_{l=1}^{m_k/2}(\tilde{\vect{\xi}}_l-\mbE[\tilde{\vect{\xi}}_l|\mcG_{l-1}])\Big|_2\geq C\Psi(d\log^2k+de_k\sqrt{k\log^2 k})\Big)=O(k^{-\nu d}).
    \end{align*}
    Combining it with \eqref{eq:xi_exp_bound} and \eqref{eq:xi_coin}, we have that
    \begin{equation*}
        \mbP\Big(\Big|\frac{1}{k+1}\sum_{l=1}^{m_k/2}\vect{\xi}_l\Big|_2\geq C\Psi\Big(\frac{d\log^2k}{k}+de_k\sqrt{\frac{\log^2 k}{k}}\Big),\cap_{i=n_0}^kE_i\Big)=O((k+1)^{-\nu})
    \end{equation*}
    A similar result holds for the even term. Note that $\frac{d\log^2k}{k}+de_k\sqrt{\frac{\log^2 k}{k}}=O(de_k^2)$, and together with \eqref{eq:main_approx} we have 
    \begin{equation*}
        \mbP\Big(\Big|\frac{1}{k+1}\sum_{j=0}^k(\vect{A}_{i+1}-\mbE[\vect{A}_{i+1}])(\hat{\vect{\theta}}_i-\vect{\theta}^*)\Big|_2\geq C\Psi^2de_k^2\log k,\cap_{i=n_0}^kE_i\Big)=O((k+1)^{-\nu}).
    \end{equation*}
    When taking the outliers into consideration, we have that
    \begin{align*}
    	&\Big|\frac{1}{k+1}\sum_{j=0}^k(\vect{A}_{i+1}-\mbE[\vect{A}_{i+1}])(\hat{\vect{\theta}}_i-\vect{\theta}^*)\Big|_2\\
	=&\Big|\frac{1}{k+1}\sum_{j\in\mcQ_k}(\vect{A}_{i+1}-\mbE[\vect{A}_{i+1}])(\hat{\vect{\theta}}_i-\vect{\theta}^*)\Big|_2+\Big|\frac{1}{k+1}\sum_{j\notin\mcQ_k}(\vect{A}_{i+1}-\mbE[\vect{A}_{i+1}])(\hat{\vect{\theta}}_i-\vect{\theta}^*)\Big|_2\\
	=&\Big|\frac{1}{k+1}\sum_{j\notin\mcQ_k}(\vect{A}_{i+1}-\mbE[\vect{A}_{i+1}])(\hat{\vect{\theta}}_i-\vect{\theta}^*)\Big|_2+O\big(\alpha_k),
    \end{align*}
    under the event $\cap_{i=n_0}^kE_i$, which proves the lemma by combining it with \eqref{eq:eA_H}.
\end{proof}

\begin{lemma}   \label{lem:mix_remainder}
    Let $\tilde{\vect{\theta}}_{i}$ be defined in \eqref{eq:theta_tilde}, for every $\nu>0$, there exist constants $C$ and $c$, such that
    \begin{equation*}    
    \begin{aligned}
        &\mbP\Big(\Big|\frac{1}{k+1}\sum_{i=1}^{k+1}(\vect{A}_i-\mbE[\vect{A}_i])(\hat{\vect{\theta}}_{i-1}-\vect{\theta}^*)-\frac{1}{k+1}\sum_{i=1}^{k+1}(\vect{A}_i-\mbE[\vect{A}_i])\tilde{\vect{\theta}}_{i-1}\Big|_2\geq C\Psi^2de_k^2,\cap_{i=n_0}^kE_i\Big)\\
        \leq& c(k+1)^{-\nu}.
    \end{aligned}
    \end{equation*}
\end{lemma}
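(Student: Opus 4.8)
The plan is to analyze the difference $\vect{\Delta}_{i}:=(\hat{\vect{\theta}}_{i}-\vect{\theta}^*)-\tilde{\vect{\theta}}_{i}$ explicitly. From \eqref{eq:online_newton} one has $\hat{\vect{\theta}}_{i}-\vect{\theta}^*=\frac{1}{i}\sum_{j=0}^{i-1}(\hat{\vect{\theta}}_{j}-\vect{\theta}^*)-\hat{\vect{H}}_{i}^{-1}S_i$, where $S_i:=\frac{1}{i}\sum_{j=0}^{i-1}\vect{X}_{j+1}g_{\tau_{j+1}}(\vect{Z}_{j+1}^{\tp}\hat{\vect{\theta}}_{j}-b_{j+1})$. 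Subtracting the definition \eqref{eq:theta_tilde}, the truncation there keeps only the blocks $H_0,\dots,H_{l_i-2}$, so $\vect{\Delta}_i$ splits into three pieces: (I) the averaged-error terms over the two dropped blocks $H_{l_i-1}\cup H_{l_i}$; (II) the Hessian-substitution term comparing $\hat{\vect{H}}_{i}^{-1}$ with $\vect{H}^{-1}$ on $S_i$, i.e.\ $(\vect{H}^{-1}-\hat{\vect{H}}_{i}^{-1})S_i$; and (III) the gradient contributions $\vect{H}^{-1}\frac{1}{i}\sum_{j\in H_{l_i-1}\cup H_{l_i}}\vect{X}_{j+1}g_{\tau_{j+1}}(\cdots)$ over the dropped blocks. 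The goal is then to show that $\frac{1}{k+1}\sum_{i=1}^{k+1}(\vect{A}_i-\mbE[\vect{A}_i])\vect{\Delta}_{i-1}$ is $O_{\mbP}(\Psi^2 d e_k^2)$ on $\cap_{i=n_0}^k E_i$.

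The benign part is (I). On $\cap_{i=n_0}^k E_i$ each $|\hat{\vect{\theta}}_j-\vect{\theta}^*|_2\le\Psi\sqrt{d}e_j$, and the dropped blocks contain only $O(\lambda\log k)$ indices, so $|(\mathrm{I})|_2=O(\Psi\sqrt{d}e_i\log k/i)$. Using $\|\vect{A}_i-\mbE[\vect{A}_i]\|\le 2M^2$ (as in the proof of Lemma \ref{lem:huber_concen_hess}) and averaging, its contribution is $O\big(\Psi\sqrt{d}\log k\cdot\frac{1}{k}\sum_i e_i/i\big)$; since $\sum_i e_i/i$ is summable this is $O(\Psi\sqrt{d}\log k/k)$, dominated by $\Psi^2 d e_k^2$. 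For (II) I would invoke \eqref{eq:huber_inv_bound} and the Hessian bound in $E_i$ to get $\|\vect{H}^{-1}-\hat{\vect{H}}_{i}^{-1}\|\le C\Psi\sqrt{d}e_i\log i$, and the identity $\hat{\vect{H}}_i^{-1}S_i=(\bar{\vect{\theta}}_i-\vect{\theta}^*)-(\hat{\vect{\theta}}_i-\vect{\theta}^*)$ together with Lemma \ref{lem:seq_bound} to bound $|S_i|_2=O(\Psi\sqrt{d}e_i)$ up to logarithmic factors; their product yields order $\Psi^2 d e_i^2$, which is the mechanism generating the quadratic rate. Term (III) is handled by extracting the mean through \eqref{eq:EAH_diff}-type estimates and treating the centered remainder with the mixing structure, since the raw Huber-gradient factors are only crudely bounded by $M\tau_{j+1}$. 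These same pointwise estimates also deliver $|\vect{\Delta}_i|_2\le\Psi\sqrt{d}e_i$, the auxiliary bound \eqref{eq:tilde_approx} needed in Lemma \ref{lem:mix_bound}.

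The main obstacle is that a purely deterministic triangle-inequality bound under $\cap_{i=n_0}^k E_i$ leaks logarithmic factors in (II) and (III): averaging $\Psi^2 d e_i^2\log i$ via Lemma \ref{lem:seq_bound} produces extra powers of $\log k$ and overshoots the stated $\Psi^2 d e_k^2$. To reach the clean rate I would retain, rather than discard, the conditionally mean-zero structure of $\vect{A}_i-\mbE[\vect{A}_i]$: group the indices into alternating blocks of width $\lceil\lambda\log k\rceil$ exactly as in Lemmas \ref{lem:concen_mix} and \ref{lem:mix_bound}, couple the dependent block sums to independent ones via \cite{berkes_philipp.1979aop}, and apply the martingale-difference concentration of Lemma \ref{lem:mart_diff_concen} combined with the conditional-expectation decay of Lemma \ref{lem:mix_norm}. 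The delicate bookkeeping is to verify that the predictable coefficients multiplying $\vect{A}_i-\mbE[\vect{A}_i]$—the Hessian fluctuation in (II) and the truncated gradient in (III)—have squared conditional variation summing to $O(\Psi^2 d e_k^2)$ rather than $O(\Psi^2 d e_k^2\,\mathrm{polylog})$, so that the concentration bound, not the crude envelope, sets the final order; the residual failure probability $O((k+1)^{-\nu})$ then arises from this concentration step.
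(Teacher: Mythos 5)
Your proposal follows essentially the same route as the paper's proof: the identical three-way decomposition of $\hat{\vect{\theta}}_i-\vect{\theta}^*-\tilde{\vect{\theta}}_i$ into the dropped-block averaged errors, the Hessian-substitution term $(\hat{\vect{H}}_{i-1}^{-1}-\vect{H}^{-1})S_i$, and the dropped-block gradient term, with the first two bounded deterministically under $\cap_{i=n_0}^k E_i$ (via \eqref{eq:huber_inv_bound} and Lemma \ref{lem:seq_bound}) and the last one handled by exactly the paper's machinery of alternating $\lceil\lambda\log k\rceil$-blocks, the coupling/conditional-mixing bound of Lemma \ref{lem:mix_norm}, and the martingale-difference concentration of Lemma \ref{lem:mart_diff_concen} applied after reindexing the double sum. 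Your observation that the purely deterministic bounds leak logarithmic factors is a fair reading (the paper itself is loose on these logs), and your fix of keeping the centered structure for the stochastic term is precisely what the paper does.
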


\begin{proof}
    Under event $\cap_{i=n_0}^kE_i$, using Lemma \ref{lem:seq_bound}, we have
    \begin{align*}
        &\big|\hat{\vect{\theta}}_{i}-\vect{\theta}^*-\tilde{\vect{\theta}}_i\big|_2\\
        =&\Big|\frac{1}{i}\sum_{j=(l_i-2)\lceil\lambda\log k\rceil+n_0+1}^{i-1}(\hat{\vect{\theta}}_j-\vect{\theta}^*)\Big|_2+\Big|\vect{H}^{-1}\frac{1}{i}\sum_{j=(l_i-2)\lceil\lambda\log k\rceil+n_0+1}^{i-1}\vect{X}_{j+1}g_{\tau_{j+1}}(\vect{Z}_{j+1}^{\tp}\hat{\vect{\theta}}_j-b_{j+1})\Big|_2\\
        &+\Big|(\hat{\vect{H}}_{i-1}^{-1}-\vect{H}^{-1})\frac{1}{i}\sum_{j=0}^{i-1}\vect{X}_{j+1}g_{\tau_{j+1}}(\vect{Z}_{j+1}^{\tp}\hat{\vect{\theta}}_j-b_{j+1})\Big|_2\\
        =&\Big|(\hat{\vect{H}}_{i-1}^{-1}-\vect{H}^{-1})\frac{1}{i}\sum_{j=0}^{i-1}\vect{X}_{j+1}g_{\tau_{j+1}}(\vect{Z}_{j+1}^{\tp}\hat{\vect{\theta}}_j-b_{j+1})\Big|_2\\
        &+\Big|\vect{H}^{-1}\frac{1}{i}\sum_{j=(l_i-2)\lceil\lambda\log k\rceil+n_0+1}^{i-1}\vect{X}_{j+1}g_{\tau_{j+1}}(\vect{Z}_{j+1}^{\tp}\vect{\theta}^*-b_{j+1})\Big|_2+O\Big(\frac{e_i\log k}{i}\Big).\stepcounter{equation}\tag{\theequation}\label{eq:tilde_diff}
    \end{align*}
    For the first term, by Lemma \ref{lem:huber_concen_hess} and \eqref{eq:huber_inv_bound} we have that under event $\cap_{i=n_0}^kE_i$,
    \begin{equation*}
        \|\hat{\vect{H}}_{i-1}^{-1}-\vect{H}^{-1}\|\leq \|\hat{\vect{H}}_{i-1}^{-1}\|\cdot\|\hat{\vect{H}}_{i-1}-\vect{H}\|\cdot\|\vect{H}^{-1}\|\leq C\Psi\sqrt{d}e_i.
    \end{equation*}
    On the other hand, there is
    \begin{align*}
        &\Big|\frac{1}{i}\sum_{j=0}^{i-1}\vect{X}_{j+1}g_{\tau_{j+1}}(\vect{Z}_{j+1}^{\tp}\hat{\vect{\theta}}_j-b_{j+1})\Big|_2\\
        \leq&\Big|\frac{1}{i}\sum_{j=0}^{i-1}\vect{X}_{j+1}g_{\tau_{j+1}}(\vect{Z}_{j+1}^{\tp}\vect{\theta}^*-b_{j+1})\Big|_2+M^2\frac{1}{i}\sum_{j=0}^{i-1}|\hat{\vect{\theta}}_j-\vect{\theta}^*|_2\leq C\Psi\sqrt{d}e_i\log i.
    \end{align*}
    Therefore, we have that
    \begin{align*}
        &\frac{1}{k+1}\sum_{i=1}^{k+1}(\vect{A}_i-\mbE[\vect{A}_i])(\hat{\vect{\theta}}_{i-1}-\vect{\theta}^*)-\frac{1}{k+1}\sum_{i=1}^{k+1}(\vect{A}_i-\mbE[\vect{A}_i])\tilde{\vect{\theta}}_{i-1}\\
        =&\frac{1}{k+1}\sum_{i=1}^{k+1}(\vect{A}_i-\mbE[\vect{A}_i])\vect{H}^{-1}\frac{1}{i}\sum_{j=(l_i-2)\lceil\lambda\log k\rceil+n_0+1}^{i-1}\vect{X}_{j+1}g_{\tau_{j+1}}(\vect{Z}_{j+1}^{\tp}\vect{\theta}^*-b_{j+1})+O(\Psi^2de_i^2)\\
        =&\frac{1}{k+1}\sum_{i=1}^{k+1}\Big\{\sum_{j=(l_i-2)\lceil\lambda\log k\rceil+n_0+1}^{i}\frac{1}{j}(\vect{A}_j-\mbE[\vect{A}_j])\Big\}\vect{H}^{-1}\vect{X}_{i}g_{\tau_{i}}(\vect{Z}_{i}^{\tp}\vect{\theta}^*-b_{i})+O(\Psi^2de_i^2).\stepcounter{equation}\tag{\theequation}\label{eq:remain_main}
    \end{align*}
    Denote
    \begin{equation*}
        \vect{Y}_i = \Big\{\sum_{j=(l_i-2)\lceil\lambda\log k\rceil+n_0+1}^{i}\frac{1}{j}(\vect{A}_j-\mbE[\vect{A}_j])\Big\}\vect{H}^{-1}\vect{X}_{i}g_{\tau_{i}}(\vect{Z}_{i}^{\tp}\vect{\theta}^*-b_{i}).
    \end{equation*}
    It is direct to verify that
    \begin{align*}
        |\mbE[Y_i]|_2\leq& \sqrt{\mbE\Big\|\sum_{j=(l_i-2)\lceil\lambda\log k\rceil+n_0+1}^{i}\frac{1}{j}(\vect{A}_j-\mbE[\vect{A}_j])\Big\|^2\mbE\big|\vect{H}^{-1}\vect{X}_{i}g_{\tau_{i}}(\vect{Z}_{i}^{\tp}\vect{\theta}^*-b_{i})\big|_2^2}\\
        =&O\Big(\frac{\sqrt{d}\log k}{i}\tau_i^{(1-\delta)_+/2}\Big)=O(\sqrt{d}e_i^2).\stepcounter{equation}\tag{\theequation}\label{eq:exp_remain_bound}
    \end{align*}
    Therefore, it left bound
    \begin{align*}
        \Big|\frac{1}{k+1}\sum_{i=1}^{k+1}(Y_i-\mbE[Y_i])\Big|_2.
    \end{align*}
    To this end, we further denote
    \begin{align*}
        \vect{\xi}_l = \sum_{i\in H_{4l+1}}(\vect{Y}_i-\mbE[\vect{Y}_i]),
    \end{align*}
    We define the $\sigma$-field $\mcG_l=\sigma((\vect{X}_i,\vect{Z}_i,b_i):i\in\cup_{j=1}^{4l+1}H_j)$, by Lemma \ref{lem:mix_norm} we have that
    \begin{equation*}
        \mbE\Big[\Big|\mbE[\vect{\xi}_l|\mcG_{l-1}]\Big|_2\Big]=O\Big(\frac{\sqrt{d}\tau_{l}\log k}{lk^{\nu+2}}\Big).
    \end{equation*}
    Therefore, by Markov's inequality
    \begin{equation}    \label{eq:xi_remain_exp_bound}
        \mbP\Big(\Big|\frac{1}{k}\sum_{l=1}^{m_k/4}\mbE[\vect{\xi}_l|\mcG_{l-1}]\Big|_2\geq C\frac{\sqrt{d}\tau_k\log k}{k^2},\cap_{i=n_0}^kE_i\Big)=O(k^{-\nu}).
    \end{equation}
    It is direct to verify that
    \begin{align*}
        &\sup_{\vect{v}\in\mbS^{d-1}}|\vect{v}^{\tp}(\vect{\xi}_l-\mbE[\vect{\xi}_l|\mcG_{l-1}])|\leq C\sqrt{d}\log k,\\
        &\sup_{\vect{v}\in\mbS^{d-1}}\sum_{l=1}^{m_k/2}\var[|\vect{v}^{\tp}\vect{\xi}_l||\mcG_{l-1}]\leq\sup_{\vect{v}\in\mbS^{d-1}}\sum_{l=1}^{m_k/2}\mbE[|\vect{v}^{\tp}\vect{\xi}_l|^2|\mcG_{l-1}]\\
        \leq& C_1\sum_{l=1}^{m_k/4}d\tau_l^{(1-\delta)_+}\log^2 k/l^2\leq C d\log^3 k.
    \end{align*}
    Then we can apply Lemma \ref{lem:mart_diff_concen} and yield
    \begin{align*}
        &\mbP\Big(\Big|\sum_{l=1}^{m_k/4}(\vect{\xi}_l-\mbE[\vect{\xi}_l|\mcG_{l-1}])\Big|_2\geq C(d^{3/2}\log^2k+d\log^2k)\Big)=O(k^{-\nu d}).
    \end{align*}
    Combining it with \eqref{eq:exp_remain_bound} and \eqref{eq:xi_remain_exp_bound}, we have that
    \begin{equation*}
        \mbP\Big(\Big|\frac{1}{k+1}\sum_{l=1}^{m_k/2}\vect{\xi}_l\Big|_2\geq C\frac{d^{3/2}\log^2k}{k},\cap_{i=n_0}^kE_i\Big)=O((k+1)^{-\nu})
    \end{equation*}
    A similar result holds for the average of $\vect{\xi}_l = \sum_{i\in H_{4l+q}}(\vect{Y}_i-\mbE[\vect{Y}_i])$, where $q=0,2,3$. Substitute it into \eqref{eq:remain_main}, we have that
    \begin{equation*}    
    \begin{aligned}
        &\mbP\Big(\Big|\frac{1}{k+1}\sum_{i=1}^{k+1}(\vect{A}_i-\mbE[\vect{A}_i])(\hat{\vect{\theta}}_{i-1}-\vect{\theta}^*)-\frac{1}{k+1}\sum_{i=1}^{k+1}(\vect{A}_i-\mbE[\vect{A}_i])\tilde{\vect{\theta}}_{i-1}\Big|_2\geq C\Psi^2de_k^2,\cap_{i=n_0}^kE_i\Big)\\
        =&O((k+1)^{-\nu}),
    \end{aligned}
    \end{equation*}
    which proves the lemma.
\end{proof}


\subsection{Proof of Results in Section \ref{sec:bahadur}}

\begin{proof}[Proof of Theorem \ref{thm:asymp_norm}]
	From the proof of Theorem \ref{thm:contam_rate}, as $n_0$ tends to infinity, we can obtain that
	\begin{equation}	\label{eq:normal_term1}
       		 \hat{\vect{\theta}}_{n}-\vect{\theta}^*=\vect{H}^{-1}\frac{1}{n}\sum_{i\notin\mcQ_{n}}\vect{X}_{i}g_{\tau_i}(\vect{Z}_{i}^{\tp}\vect{\theta}^*-b_{i})+O_{\mbP}\Big(\sqrt{d}\tau_n\alpha_n+de_{n-1}^2\log^2n\Big),
   	\end{equation}
	where $\vect{H} = \mbE[\vect{X}\vect{Z}^{\tp}]$. Denote $\epsilon_i = \vect{Z}_i^{\tp}\vect{\theta}^*-b_i$, we first bound the term
	\begin{align*}
		&\Big|\frac{1}{n}\sum_{i\notin\mcQ_n}\vect{X}_i(g_{\tau_i}(\epsilon_i)-\epsilon_i)\Big|_2\stepcounter{equation}\tag{\theequation}\label{eq:normal_term2}\\
		\leq&\Big|\frac{1}{n}\sum_{i\notin\mcQ_n}\vect{X}_i(g_{\tau_i}(\epsilon_i)-\epsilon_i) - \frac{1}{n}\sum_{i\notin\mcQ_n}\mbE[\vect{X}_i(g_{\tau_i}(\epsilon_i)-\epsilon_i)]\Big|_2+\frac{1}{n}\sum_{i\notin\mcQ_n}\Big|\mbE[\vect{X}_i(g_{\tau_i}(\epsilon_i)-\epsilon_i)]\Big|_2\\
		=&\Big|\frac{1}{n}\sum_{i\notin\mcQ_n}\vect{X}_i(g_{\tau_i}(\epsilon_i)-\epsilon_i) - \frac{1}{n}\sum_{i\notin\mcQ_n}\mbE[\vect{X}_i(g_{\tau_i}(\epsilon_i)-\epsilon_i)]\Big|_2+O(\sqrt{d}\tau_n^{-2}),
	\end{align*}
	where the last inequality uses the moment assumption on $\epsilon$ and similar argument as in \eqref{eq:huber_grad_exp}. To bound the first term, we directly compute its variance and use Chebyshev's inequality. More precisely, by equation (20.23) of \cite{billingsley1968convergence}, for arbitrary $i,j$ and each coordinate $l$, there is
	\begin{align*}
		&\Big|\mbE\Big[\big\{X_{i,l}(g_{\tau_i}(\epsilon_i)-\epsilon_i)-\mbE[X_{i,l}(g_{\tau_i}(\epsilon_i)-\epsilon_i)]\big\}\big\{X_{j,l}(g_{\tau_j}(\epsilon_j)-\epsilon_j)-\mbE[X_{j,l}(g_{\tau_j}(\epsilon_j)-\epsilon_j)]\big\}\Big]\Big|\\
		\leq&2\rho^{|i-j|/2}\sqrt{\mbE[X_{i,l}^2(g_{\tau_i}(\epsilon_i)-\epsilon_i)^2]\mbE[X_{j,l}^2(g_{\tau_j}(\epsilon_j)-\epsilon_j)^2]}\\
		\leq&2C_{0}\rho^{|i-j|/2}\frac{1}{\tau^2_i\tau^2_j}\sqrt{\mbE[|\epsilon_i|^6]\mbE[|\epsilon_j|^6]}\leq C_1\frac{\rho^{|i-j|/2}}{\tau^2_i\tau^2_j}.
	\end{align*} 
	Therefore we have that for every unit vector $\vect{v}\in\mbS^{d-1}$, there is
	\begin{align*}
		&\var\Big[\frac{1}{n}\sum_{i\notin\mcQ_n}\vect{X}_i(g_{\tau_i}(\epsilon_i)-\epsilon_i)\Big] \leq \frac{C_1}{n^2}\sum_{i,j=1}^n\frac{\rho^{|i-j|/2}}{\tau_i^2\tau_j^2}\\
		\leq&\frac{C_1}{n^2}\sum_{i=1}^n\frac{2}{\tau_i^2}\Big(\sum_{j=0}^n\rho^{j/2}\Big)\leq C_2\frac{1}{n\tau_n^2},
	\end{align*}
	for some constant $C_2>0$. Therefore, by Chebyshev's inequality, we have that 
	\begin{equation}	\label{eq:normal_term3}
		\Big|\frac{1}{n}\sum_{i\notin\mcQ_n}\vect{X}_i(g_{\tau_i}(\epsilon_i)-\epsilon_i) - \frac{1}{n}\sum_{i\notin\mcQ_n}\mbE[\vect{X}_i(g_{\tau_i}(\epsilon_i)-\epsilon_i)]\Big|_2 = O_{\mbP}\Big(\frac{\sqrt{d}}{(n\tau_n^2)^{2/5}}\Big).
	\end{equation}
	Combining \eqref{eq:normal_term1}, \eqref{eq:normal_term2} and \eqref{eq:normal_term3}, given a unit vector $\vect{v}\in\mbS^{d-1}$, we have
    \begin{align*}
    \vect{v}^{\tp}(\hat{\vect{\theta}}_{n}-\vect{\theta}^*) =& \vect{v}^{\tp}\vect{H}^{-1}\frac{1}{n}\sum_{i\notin\mcQ_{n}}\vect{X}_{i}\epsilon_i + O_{\mbP}\Big(\sqrt{d}\tau_n\alpha_n+de_{n-1}^2\log^2n+\sqrt{d}\tau_n^{-2}+\frac{\sqrt{d}}{(n\tau_n^2)^{2/5}}\Big),\\
    =&\frac{1}{n}\sum_{i\notin\mcQ_{n}}\vect{v}^{\tp}\vect{H}^{-1}\vect{X}_{i}\epsilon_i + o_{\mbP}\Big(\frac{1}{\sqrt{n}}\Big),
    \end{align*}
    where the remainder becomes $o_{\mbP}(1/\sqrt{n})$ under some rate constraints. Here the main term is the average of strict stationary and strong mixing sequence $\vect{v}^{\tp}\vect{H}^{-1}\vect{X}_{i}\epsilon_i$. By Lemma \ref{lem:doukhan_condition}, the conditions in Theorem 1 of \cite{doukhan_etal.1994} is fulfilled. Then we can apply Theorem 1 of \cite{doukhan_etal.1994} and yield
    \begin{equation*}
        \frac{\sqrt{n}}{\sigma_{\vect{v}}}(\hat{\vect{\theta}}_{n}-\vect{\theta}^*)\rightarrow\mcN(0,1),
    \end{equation*}
    where
    \begin{align*}
        \sigma^2_{\vect{v}} =& \sum_{i=-\infty}^{\infty}\cov\big(\vect{v}^{\tp}\vect{H}^{-1}\vect{X}_0\epsilon_0,\vect{v}^{\tp}\vect{H}^{-1}\vect{X}_i^{\tp}\epsilon_i\big)\\
        =&\vect{v}^{\tp}\vect{H}^{-1}\vect{\Sigma}(\vect{H}^{\tp})^{-1}\vect{v},
    \end{align*}
    and $\vect{\Sigma}$ is defined in \eqref{eq:longrun_cov}. Therefore the theorem is proved.
\end{proof}

\begin{lemma}	\label{lem:doukhan_condition}
	Let the random variable $X$ satisfies $\mbE[|X|^{1+\delta}]$ for some $\delta>1$, and $\phi(k)=O(\rho^k)$ (where $\rho<1$). Then we have that
	\begin{equation*}
		\int_0^1\phi^{-1}(u)Q_{X}^2(u)\diff u<\infty,
	\end{equation*}
	where $\phi^{-1}(u)$ is the inverse function of $\phi(k)$, \ie, $\phi^{-1}(u)=\log u/\log\rho$, and $Q_{X}(u)=\inf\{t: \mbP(|X|\geq t)\leq u\}$.
\end{lemma}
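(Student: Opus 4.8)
The plan is to reduce the claim to the standard moment identity for quantile functions together with a single application of H\"older's inequality. First I would record the classical fact that if $U$ is uniform on $(0,1)$ then $Q_X(U)$ has the same law as $|X|$; indeed $\mbP(Q_X(U)>t)=\mbP\big(U<\mbP(|X|\geq t)\big)=\mbP(|X|\geq t)$ up to the (measure-zero) boundary, so that for every $p>0$ one has $\int_0^1 Q_X^{p}(u)\,\diff u=\mbE[|X|^{p}]$. In particular the moment hypothesis gives $\int_0^1 Q_X^{1+\delta}(u)\,\diff u=\mbE[|X|^{1+\delta}]<\infty$.

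Next I would rewrite the weight. Since $0<\rho<1$ we have $\log\rho<0$, so for $u\in(0,1)$, $\phi^{-1}(u)=\log u/\log\rho=|\log u|/|\log\rho|$, and it suffices to bound $\int_0^1|\log u|\,Q_X^2(u)\,\diff u$ up to the constant $1/|\log\rho|$.

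The core step is H\"older's inequality with the exponents chosen so that the $Q_X$-factor matches the available moment. Because $\delta>1$, the number $p=(1+\delta)/2$ satisfies $p>1$; let $q=(1+\delta)/(\delta-1)$ be its conjugate. Then
\begin{equation*}
\int_0^1 |\log u|\,Q_X^2(u)\,\diff u\leq\Big(\int_0^1 Q_X^{1+\delta}(u)\,\diff u\Big)^{2/(1+\delta)}\Big(\int_0^1 |\log u|^{q}\,\diff u\Big)^{(\delta-1)/(1+\delta)}.
\end{equation*}
The first factor is finite by the moment identity above. For the second, the substitution $u=e^{-t}$ turns it into $\int_0^\infty t^{q}e^{-t}\,\diff t=\Gamma(q+1)<\infty$, so the logarithmic weight is integrable against any positive power. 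Multiplying by $1/|\log\rho|$ yields the finiteness of $\int_0^1\phi^{-1}(u)Q_X^2(u)\,\diff u$, as claimed.

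I do not anticipate a serious obstacle: the argument is essentially bookkeeping once the quantile--moment identity is invoked. The only place that genuinely uses the hypotheses is the choice of H\"older exponents, which requires $p=(1+\delta)/2>1$, i.e.\ exactly $\delta>1$; this is why the lemma is stated for $\delta>1$ rather than merely $\delta>0$. If one wished to weaken the condition, the weight $|\log u|$ would have to be controlled differently (for instance by absorbing it into a slightly larger power of $Q_X$ via $|\log u|\leq C_\epsilon u^{-\epsilon}$), but under $\delta>1$ the clean split above is both sufficient and transparent.
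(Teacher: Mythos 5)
Your proof is correct, but it takes a different route from the paper's. The paper's argument is a pointwise one: it applies Markov's inequality to get $Q_X(u)\leq\big(\mbE[|X|^{1+\delta}]/u\big)^{1/(1+\delta)}$ for every $u$, substitutes this bound into the integral, and observes that $\int_0^1 |\log u|\,u^{-2/(1+\delta)}\,\diff u<\infty$ precisely because $2/(1+\delta)<1$ when $\delta>1$. You instead invoke the exact quantile--moment identity $\int_0^1 Q_X^{1+\delta}(u)\,\diff u=\mbE[|X|^{1+\delta}]$ and split off the logarithmic weight by H\"older's inequality with exponents $p=(1+\delta)/2$ and $q=(1+\delta)/(\delta-1)$, finishing with $\int_0^1|\log u|^{q}\,\diff u=\Gamma(q+1)$. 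Both arguments are short and both make the role of $\delta>1$ transparent (integrability of $u^{-2/(1+\delta)}$ near zero in the paper; existence of the conjugate exponent $p>1$ in yours). The paper's version is slightly more elementary and self-contained, since it never needs the equimeasurability of $Q_X(U)$ with $|X|$ (a standard but not entirely trivial fact for the tail-quantile function $Q_X(u)=\inf\{t:\mbP(|X|\geq t)\leq u\}$); your version avoids the lossy pointwise tail bound, keeps the exact moment constant, and, as you note, adapts more flexibly to other weights. Either proof is acceptable for the lemma as stated.
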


\begin{proof}
	By Markov's inequality, we have that
	\begin{align*}
		&\mbP(|X|\geq Q_{X}(u))\leq u\leq \frac{\mbE[|X|^{1+\delta}]}{(Q_X(u))^{1+\delta}},\\
		\Rightarrow&Q_X(u)\leq \Big(\frac{\mbE[|X|^{1+\delta}]}{u}\Big)^{1/(1+\delta)}.
	\end{align*}
	Therefore, we have that
	\begin{align*}
		\int_0^1\phi^{-1}(u)Q_{X}^2(u)\diff u\leq \int_0^1\frac{\log u}{\log\rho}\Big(\frac{\mbE[|X|^{1+\delta}]}{u}\Big)^{2/(1+\delta)}\diff u<\infty,
	\end{align*}
	as long as $\delta>1$, which proves the lemma.
\end{proof}

\begin{proposition} \label{prop:contam_rate_d3}
    Suppose the conditions in Theorem \ref{thm:contam_rate} hold. When $\delta>4$ and $\sqrt{i/\log^3i}=O(\tau_i)$, for every $\nu>0$, there exists constants $C,c>0$ such that
        \begin{equation*}
            \mbP\Big(\cap_{i=n_0}^n\big\{|\hat{\vect{\theta}}_{i}-\vect{\theta}^*|_2\geq C\sqrt{d}e_i\big\}\Big) \geq 1-cn_0^{-\min\{\nu,(\delta-4)/3\}} ,
        \end{equation*}
        where 
        \begin{equation}    \label{eq:conv_g2}
            e_{n} = \alpha_n\tau_{n} + \sqrt{\frac{\log n}{n}}+\frac{1}{\sqrt{d}}(c_0)^{2^{n-n_0}}.
        \end{equation}
     Here $\delta$ is defined in the moment condition in \ref{cond:Bbound}.
\end{proposition}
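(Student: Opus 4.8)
The plan is to rerun the inductive argument from the proof of Theorem~\ref{thm:contam_rate} essentially verbatim, but with the sharper error sequence $e_k$ now given by \eqref{eq:conv_g2} and with the events $E_k$ of \eqref{eq:huber_ind_event} redefined using this sharper $e_k$. The one-step expansion \eqref{eq:huber_key_expand} is purely algebraic and carries over unchanged, so it again suffices to bound its four pieces---the $(\vect{I}-\hat{\vect{H}}_{k+1}^{-1}\vect{H})$ term, the mixed term in $\vect{A}_i-\vect{H}$, the gradient term $\hat{\vect{H}}_{k+1}^{-1}\frac1{k+1}\sum_i\vect{X}_ig_{\tau_i}(\vect{Z}_i^\tp\vect{\theta}^*-b_i)$, and the quadratic remainder---by $C_1\sqrt{d}e_{k+1}$ plus strictly higher-order terms, and then to take $\Psi$ large to close the induction.

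The heart of the improvement lies in the gradient term, where I would invoke part ii) of Lemma~\ref{lem:huber_grad_concen} in place of part i). Under $\delta>1$ and the standing assumption $\sqrt{k/\log^3 k}=O(\tau_k)$, truncating the temporal-difference error at $\tilde\tau\asymp\sqrt{k}/\log^{3/2}k$ makes the Bernstein boundedness contribution $\tilde\tau\log^2 k/k\asymp\sqrt{\log k/k}$ match the variance contribution, thereby deleting the offending $\tau_n\log^2 n/n$ term of \eqref{eq:conv_g1} and yielding the clean $\sqrt{d}e_{k+1}$ bound with $e_{k+1}$ as in \eqref{eq:conv_g2}; the outlier indices are split off via $\mcQ_k$ as before, contributing the surviving $\sqrt{d}\alpha_k\tau_{k+1}$. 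The price of the truncation is the coincidence event $\{\sum_i\vect{X}_ig_{\tau_i}(\epsilon_i)\neq\sum_i\vect{X}_ig_{\tau_i}(\epsilon_i)\mbI(|\epsilon_i|\le\tilde\tau)\}$, whose probability is $O(k^{-(\delta-1)/3})$ by Markov's inequality against the $(1+\delta)$-th moment of \ref{cond:Bbound}.

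It remains to check that the other three pieces are dominated by the new, much smaller $e_{k+1}$. For the Hessian, Lemma~\ref{lem:huber_concen_hess} gives $\|\hat{\vect{H}}_{k+1}-\vect{H}\|\lesssim\alpha_k+\sqrt{d\log k/k}+\tau_{k+1}^{-\delta'}+\sqrt{d}\,e_k\log k$; since $\delta>4$ forces $\delta'=2$ and $\sqrt{k/\log^3 k}=O(\tau_k)$ yields $\tau_{k+1}^{-2}\lesssim\log^3 k/k$, this is again $\lesssim\sqrt{d}\,e_k\log k$ with the new $e_k$. The mixed term (Lemma~\ref{lem:mix_bound}) and the quadratic remainder are both $O(\alpha_k+\Psi^2 d e_k^2\log k)$, and these stay higher order because, with $e_k\asymp\sqrt{\log k/k}$, one has $d e_k^2\log^2 k\asymp d\log^3 k/k=o(\sqrt{d\log k/k})=o(\sqrt{d}\,e_{k+1})$; their proofs use only the event $E_k$ together with the finite-variance property $\delta>1$, so they transfer directly to the sharper $e_k$.

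Finally I would sum the per-step failure probabilities via $\mbP(\cap_{i=n_0}^n E_i)\ge\mbP(E_{n_0})-\sum_k\mbP(E_k^c,\cap_{j=n_0}^{k-1}E_j)$. The non-gradient pieces contribute $\sum_k k^{-\nu}$, harmless for $\nu$ large, whereas the gradient truncation contributes $\sum_{k=n_0}^n k^{-(\delta-1)/3}$. This is exactly where $\delta>4$ enters quantitatively: the series converges only when $(\delta-1)/3>1$, and then $\sum_{k\ge n_0}k^{-(\delta-1)/3}\lesssim n_0^{-((\delta-1)/3-1)}=n_0^{-(\delta-4)/3}$, giving the stated $1-cn_0^{-\min\{\nu,(\delta-4)/3\}}$. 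The main obstacle is therefore not any single estimate but the bookkeeping: one must confirm that sharpening $e_k$ all the way down to $\sqrt{\log k/k}$ does not destroy the domination of the mixed and quadratic remainders---which survive only thanks to the extra logarithmic room---while simultaneously tracking how the truncation level $\tilde\tau$ turns the tail-moment bound into the summability threshold $\delta>4$ that fixes the final exponent $(\delta-4)/3$.
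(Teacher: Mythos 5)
Your proposal is correct and follows essentially the same route as the paper: rerun the induction of Theorem \ref{thm:contam_rate} with the sharper $e_k$ of \eqref{eq:conv_g2}, replace part i) of Lemma \ref{lem:huber_grad_concen} by part ii) (the truncation argument at $\tilde\tau\asymp\sqrt{k}/\log^{3/2}k$, whose coincidence event costs $O(k^{-(\delta-1)/3})$), and sum the per-step failure probabilities, with $\delta>4$ making $(\delta-1)/3>1$ so the tail is $O(n_0^{-(\delta-4)/3})$. Your bookkeeping on the Hessian, mixed, and quadratic terms matches what the paper leaves implicit when it says "we continue from equation \eqref{eq:theta_error_bound}."
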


\begin{proof}
     When $\delta>4$ and $\sqrt{k}=O(\tau_k)$, we denote
     \begin{equation*}
     	e_k = \alpha_k\tau_k + \sqrt{\frac{\log k}{k}}+\frac{1}{\sqrt{d}}(c_0)^{2^{k-n_0}},
     \end{equation*}
     for $k\geq n_0$. We continue from equation \eqref{eq:theta_error_bound} in the proof of Theorem \ref{thm:contam_rate}.

    By ii) of Lemma \ref{lem:huber_grad_concen}, there is
    \begin{equation*}
        \mbP\Big(\Big|\hat{\vect{H}}_{k+1}^{-1}\frac{1}{k+1}\sum_{i=1}^{k+1}\big\{\vect{X}_{i}g(\vect{Z}_{i}^{\tp}\vect{\theta}^*-b_{i})\big\}\Big|_2\geq C_1\sqrt{d}e_{k+1}, \cap_{i=n_0}^kE_i\Big)\leq C_3(k+1)^{-(\delta-1)/3}.
    \end{equation*}
    Then we have that
    \begin{align*}
        \mbP\Big(|\hat{\vect{\theta}}_{k+1}-\vect{\theta}^*|_2\geq \Psi\sqrt{d}e_{k+1},\cap_{i=n_0}^kE_i\Big)\leq 3C_3(k+1)^{-\min\{\nu,(\delta-1)/3\}},
    \end{align*}
    which yields
    \begin{align*}
        \mbP\Big(\cap_{i=n_0}^kE_i\Big)\geq& 1-\sum_{i=n_0+1}^k3C_3i^{-\min\{\nu,(\delta-1)/3\}}\\
        \geq& 1-3C_3\max\Big\{\frac{1}{\nu-1},\frac{2}{\delta-4}\Big\}n_0^{-\min\{\nu-1,(\delta-4)/3\}}.
    \end{align*}
    Therefore, the theorem is proved.
\end{proof}

\begin{proof}[Proof of Proposition \ref{prop:baha_remain}]
    
    By Proposition \ref{prop:contam_rate_d3} and Theorem \ref{thm:asymp_norm}, we know that $\hat{\vect{\theta}}_n$ admits the Bahadur representation 
    \begin{equation*}
        \vect{v}^{\tp}(\hat{\vect{\theta}}_{n}-\vect{\theta}^*) = \vect{v}^{\tp}\vect{H}^{-1}\frac{1}{n}\sum_{i\notin\mcQ_{n}}\vect{X}_{i}(\vect{Z}^{\tp}_{i}\vect{\theta}^*-b_i) + O_{\mbP}\Big(de_{n-1}^2\log^2n+\sqrt{d}\tau_n^{-2}+\frac{\sqrt{d}}{(n\tau_n^2)^{2/5}}\Big),
    \end{equation*}
    where $\alpha_n=0$ and $e_{n-1}$ is given in \eqref{eq:conv_g2}. When $\tau_i=Ci^{\beta}$ for $\beta\geq3/4$ we have that the remainder term has an order $O_{\mbP}(d\log n/n)$ when $n_0\rightarrow\infty$, which proves the proposition.
\end{proof}


\subsection{Proof of Results in Section \ref{sec:online_infer}}

\begin{proof}[Proof of Theorem \ref{prop:cov_est}]
    For simplicity we denote $\vect{\Gamma}_k=\cov(\vect{X}_0\epsilon_0,\vect{X}_{-k}\epsilon_{-k})$, $\vect{Y}_{i,k}=\vect{X}_ig_{\tau_i}(\epsilon_i)\vect{X}^{\tp}_{i-k}g_{\tau_{i-k}}(\epsilon_{i-k})$. Then by \eqref{eq:longrun_cov} we know
    \begin{equation*}
        \vect{\Sigma} = \vect{\Gamma}_0+\sum_{k=1}^{\infty}(\vect{\Gamma}_k+\vect{\Gamma}^{\tp}_k).
    \end{equation*}
    Under event \eqref{eq:huber_ind_event}, for $k=0,\dots,\lceil\lambda\log n\rceil$, we have that
    \begin{align*}
        &\Big\|\frac{1}{n}\sum_{i=\lceil e^{k/\lambda}\rceil}^n\vect{Y}_{i,k}-\frac{1}{n}\sum_{i=\lceil e^{k/\lambda}\rceil}^{n}\vect{X}_{i-k}g_{\tau_{i-k}}(\vect{Z}^{\top}_{i-k}\hat{\vect{\theta}}_{i-k-1}-b_{i-k})\vect{X}^{\top}_{i}g_{\tau_i}(\vect{Z}^{\top}_{i}\hat{\vect{\theta}}_{i-1}-b_{i})\Big\|\\
        =&O\Big(\frac{1}{n}\sum_{i=\lceil e^{k/\lambda}\rceil}^n\tau_{i}|\hat{\vect{\theta}}_{i}-\vect{\theta}^*|_2\Big)=O_{\mbP}\Big(\sqrt{d}\tau_ne_n\Big).
    \end{align*}
    Therefore we have that
    \begin{equation}    \label{eq:sigma_hat_diff}
        \Big\|\hat{\vect{\Sigma}}_n - \Big(\frac{1}{n}\sum_{i=1}^{n}\vect{Y}_{i,0}+\frac{1}{n}\sum_{i=1}^n\sum_{k=1}^{\lceil\lambda\log i\rceil}(\vect{Y}_{i,k}+\vect{Y}_{i,k}^{\tp})\Big)\Big\|=O_{\mbP}\Big(\sqrt{d}\tau_ne_n\Big).
    \end{equation}
    We first prove
    \begin{equation}    \label{eq:trunc_cov}
        \big\|\vect{\Sigma} - \big(\vect{\Gamma}_0+\sum_{k=1}^{\lceil\lambda\log n/2\rceil}(\vect{\Gamma}_k+\vect{\Gamma}_k^{\tp})\big)\big\| = O(n^{-\nu}),
    \end{equation}
    for some $\nu>0$. By equation (20.23) of \cite{billingsley1968convergence}, for each $k$, there is
    \begin{align*}
        \|\vect{\Gamma}_k\|=&\sup_{\vect{u},\vect{v}\in\mbS^{d-1}}\mbE\Big[|\vect{u}^{\tp}\vect{X}_0\epsilon_0\vect{v}^{\tp}\vect{X}_{-k}\epsilon_{-k}|\Big]\\
        \leq&2\sqrt{\phi(|k|)}\sup_{\vect{u},\vect{v}\in\mbS^{d-1}}\sqrt{\mbE[|\vect{u}^{\tp}\vect{X}_0\epsilon_0|^2]\mbE[|\vect{v}^{\tp}\vect{X}_{-k}\epsilon_{-k}|^2]} = O(\rho^{|k|/2}).
    \end{align*}
    Therefore we can obtain that
    \begin{align*}
        &\big\|\vect{\Sigma} - \big(\vect{\Gamma}_0+\sum_{k=1}^{\lceil\lambda\log n/2\rceil}(\vect{\Gamma}_k+\vect{\Gamma}_k^{\tp})\big)\big\|\\
        \leq&2\sum_{k=\lceil\lambda\log n/2\rceil+1}^{\infty}\|\vect{\Gamma}_k\|=O(\sum_{k=\lceil\lambda\log n/2\rceil+1}^{\infty}\rho^{|k|/2})=O(\rho^{\lceil\lambda\log n\rceil/4})=O(n^{-\nu}),
    \end{align*}
    for $\lambda\geq 4\nu/|\log\rho|$, which proves \eqref{eq:trunc_cov}. Next we prove that for $k=0,\dots,\lceil\lambda\log n\rceil$, there holds
    \begin{equation}    \label{eq:gamma_concen}
    \begin{aligned}
        &\Big\|\frac{1}{n}\sum_{i=\lceil e^{k/\lambda}\rceil}^n\vect{Y}_{i,k}-\frac{1}{n}\sum_{i,i-k\notin\mcQ_n}\mbE[\vect{Y}_{i,k}]\Big\| \\
        =& \Big\|\frac{1}{n}\sum_{i,i-k\notin\mcQ_n}\vect{Y}_{i,k}-\frac{1}{n}\sum_{i,i-k\notin\mcQ_n}\mbE[\vect{Y}_{i,k}]\Big\| + O_{\mbP}\big(\tau_n^2\alpha_n\big)\\
        =& O_{\mbP}\Big(\tau_n^2\alpha_n+\sqrt{\frac{d\log^2 n}{n}} + \frac{d\tau_n^2\log^2n}{n}\Big).
    \end{aligned}
    \end{equation}
    By the proof of Lemma \ref{lem:huber_concen_hess}, we know that
    \begin{equation*}
        \Big\|\frac{1}{n}\sum_{i,i-k\notin\mcQ_n}\vect{Y}_{i,k}-\frac{1}{n}\sum_{i,i-k\notin\mcQ_n}\mbE[\vect{Y}_{i,k}]\Big\|\leq 2\sup_{\vect{u},\vect{v}\in\mfN}\Big|\frac{1}{n}\sum_{i,i-k\notin\mcQ_n}\vect{u}^{\tp}(\vect{Y}_{i,k}-\mbE[\vect{Y}_{i,k}])\vect{v}\Big|,
    \end{equation*}
    where $\mfN$ is a $1/4$-net of $\mbS^{d-1}$. For $k=0,\dots,\lceil\lambda\log n\rceil-1$, Denote $\tilde{\mcF}_{k,a}^b=\sigma(\vect{Y}_{i,k},a\leq i\leq b)$, then by \eqref{eq:phi_mixing} we know that
    \begin{equation*}
        |\mbP(B|A)-\mbP(B)|\leq\phi((j-k)_+),
    \end{equation*}
    for all $A\in\tilde{\mcF}_{k,1}^n,B\in\tilde{\mcF}_{k,n+j}^{\infty}$ for all $n,j\geq 0$.
We basically rehash the proof in Lemma \ref{lem:concen_mix} and obtain that
    \begin{equation*}
        \sup_{\vect{u},\vect{v}\in\mfN}\mbP\Bigg(\Big|\frac{1}{n}\sum_{i,i-k\notin\mcQ_n}\vect{u}^{\tp}(\vect{Y}_{i,k}-\mbE[\vect{Y}_{i,k}])\vect{v}\Big|\geq C\Big(\sqrt{\frac{d\log^2n}{n}}+\frac{d\tau_n^2\log^2n}{n}\Big)\Bigg)=O(n^{-(\gamma+2\log 9)d}).
    \end{equation*}
    Here the only difference is that $\var(\eta_l)=O(1)$ in \eqref{eq:var_eta}, and $\vect{u}^{\tp}\vect{Y}_{i,k}\vect{v}$ is bounded by $\tau_n^2$. Therefore \eqref{eq:gamma_concen} can be proved.
    
    Last, we prove that
    \begin{equation}	\label{eq:cov_drift}
    	\Big\|\frac{1}{n}\sum_{i,i-k\notin\mcQ_n}\mbE[\vect{Y}_{i,k}] - \vect{\Gamma}_k\Big\| = O\Big(\alpha_n + \tau_{n}^{-2}\Big)
    \end{equation}
    To see this, we compute that
    \begin{align*}
    	&\Big\|\mbE[\vect{Y}_{i,k}]-\vect{\Gamma}_k\Big\|\\
	\leq&\Big\|\mbE[\vect{X}_ig_{\tau_i}(\epsilon_i)\vect{X}^{\tp}_{i-k}g_{\tau_{i-k}}(\epsilon_{i-k})] - \mbE[\vect{X}_0\epsilon_0\vect{X}_{-k}g_{\tau_{i-k}}(\epsilon_{-k})]\Big\|\\
	&+\Big\|\mbE[\vect{X}_i\epsilon_i\vect{X}^{\tp}_{i-k}g_{\tau_{i-k}}(\epsilon_{i-k})] - \mbE[\vect{X}_0\epsilon_0\vect{X}_{-k}\epsilon_{-k}]\Big\|
    \end{align*}
    For the second term,
    \begin{align*}
    	&\Big\|\mbE[\vect{X}_i\epsilon_i\vect{X}^{\tp}_{i-k}g_{\tau_{i-k}}(\epsilon_{i-k})] - \mbE[\vect{X}_0\epsilon_0\vect{X}_{-k}\epsilon_{-k}]\Big\|\\
	=&\sup_{\vect{u},\vect{v}\in\mbS^{d-1}}\Big| \mbE[\vect{u}^{\tp}\vect{X}_i\epsilon_i\vect{X}^{\tp}_{i-k}\vect{v}\big\{g_{\tau_{i-k}}(\epsilon_{i-k})-\epsilon_{i-k}\big\}] \Big|\\
	\leq&\sup_{\vect{u},\vect{v}\in\mbS^{d-1}}\sqrt{\mbE[|\vect{u}^{\tp}\vect{X}_i\epsilon_i|^2]\mbE[|\vect{X}^{\tp}_{i-k}\vect{v}\big\{g_{\tau_{i-k}}(\epsilon_{i-k})-\epsilon_{i-k}\big\}|^2]}\leq C_{1}\tau_{i-k}^{-2},
    \end{align*}
    for some constants $C_1>0$. Therefore \eqref{eq:cov_drift} is proved.

    Combining \eqref{eq:sigma_hat_diff}, \eqref{eq:trunc_cov}, \eqref{eq:gamma_concen} and \eqref{eq:cov_drift} we have that
    \begin{eqnarray*}
        &&\|\hat{\vect{\Sigma}}_n-\vect{\Sigma}\|\cr
        \leq&&\Big\|\hat{\vect{\Sigma}}_n - \Big(\frac{1}{n}\sum_{i=1}^{n}\vect{Y}_{i,0}+\frac{1}{n}\sum_{i=1}^n\sum_{k=1}^{\lceil\lambda\log i\rceil}(\vect{Y}_{i,k}+\vect{Y}_{i,k}^{\tp})\Big)\Big\|+\big\|\vect{\Sigma} - \big(\vect{\Gamma}_0+\sum_{k=1}^{\lceil\lambda\log n/2\rceil}(\vect{\Gamma}_k+\vect{\Gamma}_k^{\tp})\big)\big\|\cr
        &&+2\sum_{k=0}^{\lceil\lambda\log n/2\rceil}\Big\|\frac{1}{n}\sum_{i=\lceil e^{k/\lambda}\rceil}^n\vect{Y}_{i,k}-\frac{1}{n}\sum_{i,i-k\notin\mcQ_n}\mbE[\vect{Y}_{i,k}]\Big\|+2\sum_{k=0}^{\lceil\lambda\log n/2\rceil}\Big\|\frac{1}{n}\sum_{i,i-k\notin\mcQ_n}\mbE[\vect{Y}_{i,k}] - \vect{\Gamma}_k\Big\| \cr
        &&+2\sum_{k=\lceil\lambda\log n/2\rceil+1}^{\lceil\lambda\log n\rceil}\Big\|\frac{1}{n}\sum_{i=\lceil e^{k/\lambda}\rceil}^n\vect{Y}_{i,k}-\frac{1}{n}\sum_{i,i-k\notin\mcQ_n}\mbE[\vect{Y}_{i,k}]\Big\|+2\sum_{k=\lceil\lambda\log n/2\rceil+1}^{\lceil\lambda\log n\rceil}\Big\|\frac{1}{n}\sum_{i,i-k\notin\mcQ_n}\mbE[\vect{Y}_{i,k}] - \vect{\Gamma}_k\Big\| \cr
        &&+2\sum_{k=\lceil\lambda\log n/2\rceil+1}^{\lceil\lambda\log n\rceil}\Big\| \vect{\Gamma}_k\Big\|\cr
        =&&O_{\mbP}\Big(\sqrt{d}\tau_ne_n+n^{-\nu}+\tau_n^2\alpha_n+\sqrt{\frac{d\log^2 n}{n}} + \frac{d\tau_n^2\log^2n}{n}+\alpha_n + \tau_{n}^{-2}\Big)\cr
        =&&O_{\mbP}\Big(\sqrt{d}\tau_n^2\alpha_n+  \sqrt{d}\tau_n^{-1}+\tau_n\sqrt{\frac{d\log n}{n}}+ \frac{d\tau_n^2\log^2n}{n}\Big),
    \end{eqnarray*}
    which proves the theorem.
\end{proof}

\end{document}